\newtheorem{lemma}{Lemma}
\newtheorem{prop}{Proposition}
\newtheorem{definition}{Definition}
\newtheorem{assum}{Assumption}
\newtheorem{theorem}{Theorem}
\theoremstyle{remark}
\newcommand{\e}{\begin{equation}}
\newcommand{\ee}{\end{equation}}
\newcommand{\en}{\begin{equation*}}
\newcommand{\een}{\end{equation*}}
\newcommand{\eqn}{\begin{eqnarray}}
\newcommand{\eeqn}{\end{eqnarray}}
\newcommand{\bmat}{\begin{bmatrix}}
\newcommand{\emat}{\end{bmatrix}}
\DeclareMathAlphabet\mathbfcal{OMS}{cmsy}{b}{n}
\renewcommand{\P}[1]{\operatorname{\mathbb{P}}\left(#1\right)}
\newcommand{\E}{\operatorname{\mathbb{E}}}
\newcommand{\mb}{\bm}
\DeclareMathOperator*{\argmax}{\text{arg~max}}
\newcommand{\eps}{\epsilon}
\newlength{\imgwidth}
\newcommand{\twoCol}[2]{\ifthenelse{\boolean{twoColVersion}} {#1} {#2} }
\definecolor{c1}{HTML}{A7BEAE}
\definecolor{c2}{HTML}{B85042}
\def\MoLRG{\texttt{MoLRG}}
\def\SWD{$\mathrm{SWD}$}
\def\SNR{$\mathrm{SNR}$}
\newtcbox{\icode}{on line, verbatim,
  colback=gray!10, colframe=gray!60,
  boxsep=0.7pt, left=2pt, right=2pt, top=0.6pt, bottom=0.6pt,
  arc=2pt}
\setlist[itemize]{leftmargin=*}
\definecolor{mint}{rgb}{0.24, 0.71, 0.54}
\newcommand{\jointfirst}{\textsuperscript{\dag}}
\newcommand{\last}{\textsuperscript{\ddag}}
\title{Understanding Representation Dynamics of Diffusion Models via Low-Dimensional Modeling} 
\affiliation{
    University of Michigan \quad $\cdot$ \quad \textsuperscript{1}Ohio State University
}
\keywords{Diffusion Models, Representation Learning, Denoising AutoEncoders, Generalization}
\date{\today}
\begin{document}

\makeDeepthinkHeader

\vspace{-0.2in}
\begin{figure*}[h]
    \begin{center}
    \begin{subfigure}{0.47\textwidth}
    \includegraphics[width = 0.995\textwidth]{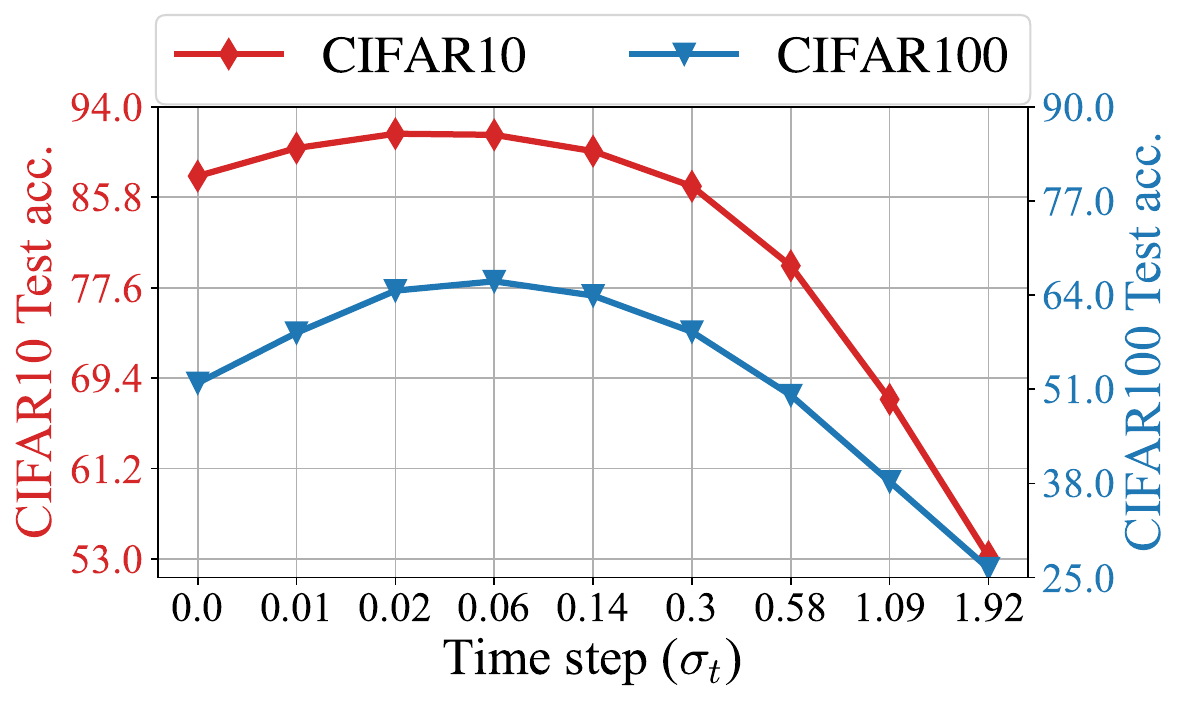}
    \caption{Classification} 
    \end{subfigure} \quad 
    \begin{subfigure}{0.47\textwidth}
    \includegraphics[width = 0.995\textwidth]{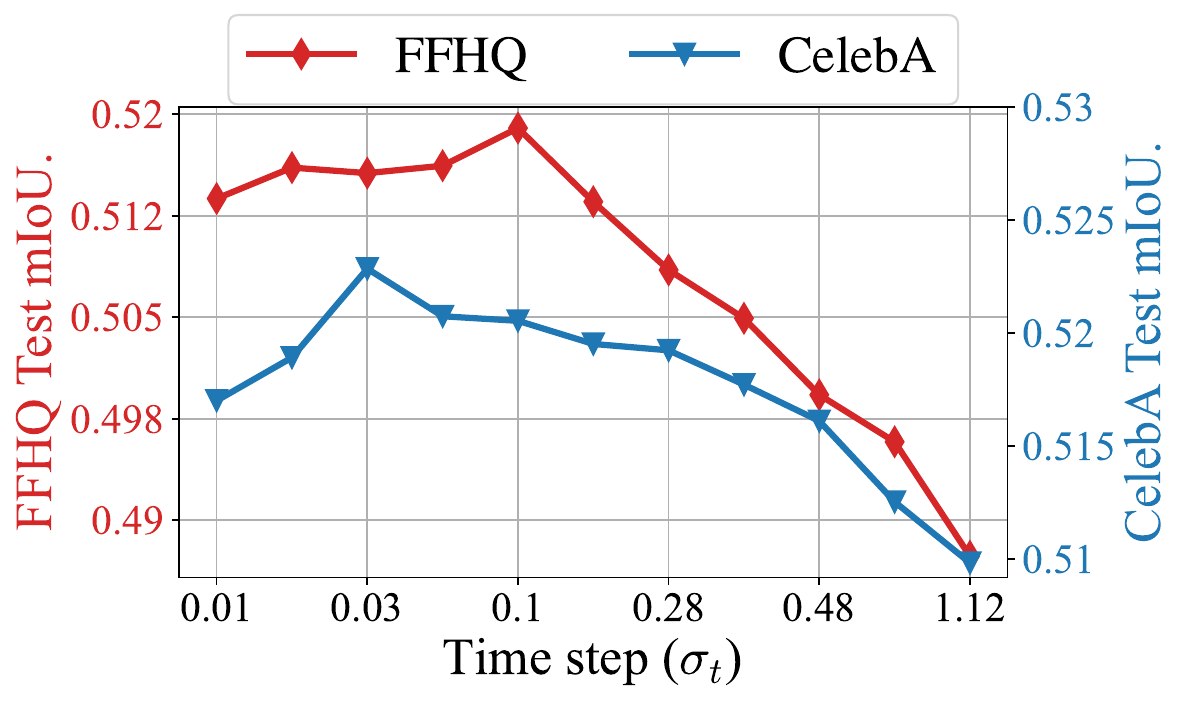}
    \caption{Segmentation} 
    \end{subfigure}
    \end{center}
    \vspace{-0.1in}
\caption{\textbf{Unimodal representation dynamics in diffusion-based representation learning tasks.} This unimodal representation pattern has been previously observed in diffusion-based representation learning tasks; see \citep{baranchuk2021label, xiang2023denoising, tang2023emergent}. To verify this, we train diffusion models on various datasets and evaluate downstream performance using noisy images $\bm{x}_t$ at different timesteps $t$. In both classification and segmentation tasks, the performance consistently follows a unimodal trend, peaking at intermediate noise levels. In (b), "mIoU" denotes mean Intersection over Union, a standard metric used in segmentation tasks. }
\label{fig:clean_feature}
\end{figure*}

\newpage

\tableofcontents

\newpage

\section{Introduction}\label{sec:intro}

Diffusion models, a class of score-based generative models, have achieved great empirical success in various tasks such as image and video generation, speech and audio synthesis, and solving inverse problems \citep{alkhouri2024diffusion, ho2020denoising, rombach2022high, zhang2024improving, bar2024lumiere, ho2022imagen, kong2020hifi, kongdiffwave2021, roich2022pivotal, ruiz2023dreambooth, chen2024exploring, chung2022improving, song2024solving, li2024decoupled,yang2023lossy}. In general, these models, consisting of forward and backward processes, learn data distributions by simulating the non-equilibrium thermodynamic diffusion process \citep{sohl2015deep, ho2020denoising, song2020score}. Specifically, the forward process progressively adds Gaussian noise to training samples until they are fully corrupted, while the backward process involves learning a score-based model to generate samples from the noisy inputs \citep{hyvarinen2005estimation, song2020score}. 

Beyond their strong generative capabilities, recent studies have revealed that diffusion models also possess remarkable representation learning capabilities \citep{baranchuk2021label, xiang2023denoising, mukhopadhyay2023diffusion, chen2024deconstructing, tang2023emergent, han2024feature}. In particular, the internal feature extractors of trained diffusion models have been shown to serve as powerful self-supervised learners, achieving strong performance on downstream tasks such as classification \citep{xiang2023denoising, mukhopadhyay2023diffusion}, semantic segmentation \citep{baranchuk2021label}, and image alignment \citep{tang2023emergent}. In many cases, these representations match or even surpass those from specialized self-supervised methods, implying that diffusion models have the potential to unify generative and discriminative learning in vision.


Alongside the empirical success, a prevalent phenomenon, which we refer to as the {\em unimodal representation dynamics}, has been widely observed in diffusion models for representation learning \citep{baranchuk2021label, xiang2023denoising, tang2023emergent}: the quality of learned representations, as measured by downstream task performance, follows a unimodal trend across noise levels. Specifically, the most effective features consistently appear at an intermediate noise level, while performance degrades as inputs become either fully noisy or entirely clean  (see \Cref{fig:clean_feature}). Despite being widely observed, the underlying cause of this unimodal representation dynamic remains poorly understood.

\paragraph{Our contributions.} In this work, we investigate the emergence of unimodal representation dynamics in diffusion models and its cause.
Our analysis characterizes how representation quality varies across noise levels and offers new perspectives on how diffusion models, despite their generative design, can excel at representation learning. Specifically, we conduct a comprehensive theoretical and empirical study of unimodal representation dynamics in diffusion models. Motivated by the well-established observation that natural image data typically reside on a low-dimensional manifold \citep{gong2019intrinsic, pope2021intrinsic, stanczuk2022your}, we examine the representation dynamics through how diffusion models learn from noisy mixtures of low-rank Gaussian (\MoLRG) distributions. We show that the emergence of unimodal representations is intrinsically linked to the model’s ability to capture the underlying low-dimensional structure of the data. Our key contributions are as follows:

\begin{itemize}[leftmargin=*]
    \item \textbf{Mathematical framework for studying representation learning in diffusion models.} To analyze the representation dynamics, we provide a mathematical study of how diffusion models learn from \MoLRG~distributions, where data lie on a union of low-dimensional subspaces. We adopt a simplified yet analytically tractable network architecture that mimics key structural properties of U-Net. Under this setting, we quantify the quality of learned representations via the Signal-to-Noise Ratio (\SNR) within the target subspace, enabling us to characterize how representation quality evolves across timesteps in the diffusion process.
    
    
    \item \textbf{Theoretical explanation for the emergence of unimodal dynamics.} Leveraging the structures of the \MoLRG~model, we prove that the unimodal pattern in representation quality naturally arises when the model effectively captures the low-dimensional data distribution. We show that this unimodal pattern is driven by an interplay between denoising strength and class separability that varies with noise levels.  There exists an intermediate diffusion timestep where class-irrelevant components are maximally suppressed and class-relevant features are best preserved, resulting in optimal representation quality.

    \item \textbf{Empirical connection between unimodal dynamics and generalization.} Empirically, we demonstrate that the presence of unimodal representation dynamics serves as a reliable indicator of model generalization in classification tasks. Specifically, the unimodal pattern consistently emerges when the model generalizes well, but progressively vanishes as the model shifts toward memorizing the training data.

\end{itemize}

\paragraph{Relationship to prior results.} Recent empirical advances in leveraging diffusion models for downstream representation learning have gained significant attention.  However, a theoretical understanding of how diffusion models learn representations across different noise levels remains largely unexplored. Here, we focus on the results most relevant to our work and defer a more comprehensive survey to \Cref{app:related}. A recent study by \citep{han2024feature} takes initial steps in this direction by analyzing the optimization dynamics of a two-layer CNN trained with diffusion loss on binary class data. Their focus is on contrasting the learning behavior under denoising versus classification objectives, without examining how representation quality evolves across timesteps. In contrast, our work characterizes and compares representations learned at different timesteps, provides a deeper understanding of diffusion-based representation learning and also extends to multi-class settings. A recent study by \citep{yue2024exploring} also investigates the influence of timesteps in diffusion-based representation learning, focusing on attribute classification and counterfactual generation. In contrast, our work provides a theoretical explanation and practical metric
for the emergence of unimodal representation dynamics and shows its relationship with data and model complexity and training iterations. 

\section{Problem Setup}\label{sec:problem}
\paragraph{Basics of diffusion models.} Diffusion models are a class of probabilistic generative models that aim to reverse a progressive noising process by mapping an underlying data distribution $p_{\text{data}}$ to a Gaussian distribution. The forward diffusion process begins with clean data $\bm{x}_0 \sim p_{\text{data}}$ and adds Gaussian noise over time $t \in [0,1]$, which can be described by the following stochastic differential equation (SDE):
\begin{align*}
    \mathrm{d}\bm{x}_t = f(t)\bm{x}_t \, \mathrm{d}t + g(t) \, \mathrm{d}\bm{w}_t,
\end{align*}
where $f(t)$ is the drift coefficient, $g(t)$ is the diffusion coefficient, and $\{\bm{w}_t\}$ is a standard Wiener process. Then, one can verify that the noisy data satisfy $\bm{x}_t = s_t \bm{x}_0 + s_t\sigma_t \bm{\epsilon}$ with $\bm{\epsilon} \sim \mathcal{N}(\bm{0}, \bm{I})$, where $s_t$ and $\sigma_t$ are scaling factors determined by $f(t)$ and $g(t)$.  For ease of exposition, let $p_t(\bm x)$ denote the probability density function (pdf) of the noisy data $\bm x_t$ for each $t \in [0,1]$. In particular, $p_0 = p_{\rm data}$. To simplify the analysis, we assume throughout the paper that $s_t = 1$. 

The reverse process transforms noise back into clean data by leveraging the score function $\nabla \log p_t(\bm{x}_t)$ and is governed by the reverse SDE \citep{anderson1982reverse}:
\begin{align*} 
    \mathrm{d}\bm{x}_t = \left(  f(t)\bm{x}_t - g^2(t) \nabla \log p_t(\bm{x}_t)\right) \mathrm{d}t +g(t)\mathrm{d}\bar{\bm{w}}_t,
\end{align*}
where $\{\bar{\bm{w}}_t\}$ is a Wiener process independent of $\{\bm{w}_t\}$. This enables diffusion models to generate new samples from the underlying data distribution $p_{\text{data}}$ by initializing from pure Gaussian noise and iteratively denoising via the score function. 

\paragraph{Training loss of diffusion models.} Note that the score function $\nabla \log p_t(\bm x_t)$ depends on the unknown data distribution $p_{\rm data}$. According to Tweedie's formula \cite{efron2011tweedie}, i.e., 
\begin{align} \label{eq:Tweedie} 
\E\left[ \bm x_0 \vert \bm x_t \right] = \bm x_t + \sigma_t^2 \nabla \log p_t(\bm x_t), \end{align}
we can alternatively estimate $\nabla \log p_t(\bm x_t)$ by training a network $\bm x_{\bm \theta}(\bm x_t, t)$, which is referred to as denoising autoencoder (DAE), to approximate the posterior mean $\E[\bm x_0 \vert \bm x_t]$ \citep{chen2024deconstructing, xiang2023denoising, kadkhodaie2023generalization}. To learn the network parameter $\bm \theta$, we minimize the following empirical loss:
\begin{align}
   \min_{\bm \theta}\; \mathcal{L}(\bm \theta) := \sum_{i=1}^N \int_0^1 \lambda_t   \E_{\bm \epsilon \sim \mathcal{N}(\bm 0, \bm I_n)} \left[\left\|\bm x_{\bm \theta}\left( \bm x_0^{(i)} + \sigma_t\bm \epsilon, t\right) -  \bm x_0^{(i)} \right\|^2\right] \mathrm{d}t, \label{eq:dae_loss}
\end{align}\label{eq:ddpm_loss}
where $\bm x_0^{(i)} \overset{i.i.d.}{\sim} p_{\rm data}$ for all $i=1,\dots,N$ denote the training samples and $\lambda_t$ represents the weight associated with each noise level.

\section{Good Distribution Learning Implies Unimodal Representation Dynamics}\label{sec:main}
\begin{wrapfigure}[13]{R}{0.4\textwidth}
   \begin{center}
   \includegraphics[width=0.38\textwidth]{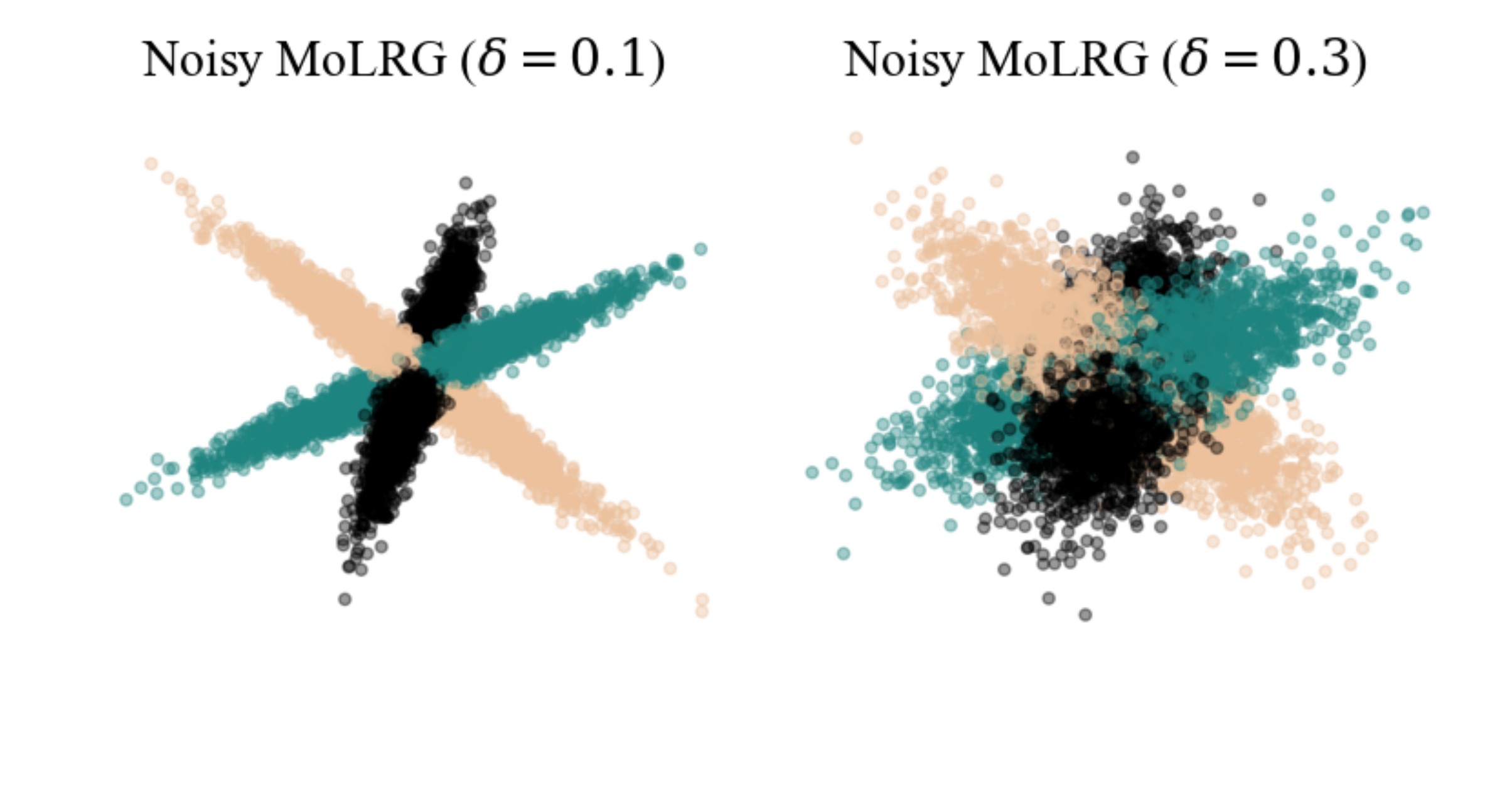}
  \end{center}
   \caption{\textbf{An illustration of \MoLRG\;with different noise levels.} We visualize samples drawn from noisy~\MoLRG~with noise levels $\delta = 0.1,\;0.3$ and $K=3$.}
   \label{fig:sample}
\end{wrapfigure}

Building on the setup introduced in \Cref{sec:problem}, this section provides a mathematical framework for analyzing the representation quality of diffusion models, and theoretically we characterize the unimodal representation dynamics exhibited by diffusion models across noise levels. We validate our theoretical findings on both synthetic and real-world datasets.

\subsection{Assumptions of data distribution}\label{subsec:model}

In this work, we assume that the underlying data distribution is \emph{low-dimensional}, which follows a noisy version of the mixture of low-rank Gaussians (\MoLRG) \citep{wang2024diffusion,elhamifar2013sparse, wang2022convergence}, defined as follows.

\begin{assum}[$K$-Class Noisy \MoLRG~Distribution]\label{assum:subspace}
For any sample $\mb x_0$ drawn from the noisy \MoLRG~distribution with $K$ subspaces, we have  

\begin{align}\label{eq:MoG noise}
    \bm x_0 = \bm U_k^{\star} \bm a + \delta \widetilde{\bm U}_k^{\star} \bm e,\;\text{with probability}\;\pi_k \geq 0,
\end{align}
where (i) $\bm U_k^{\star} \in \mathcal{O}^{n \times d_k}$ denotes an orthonormal basis of the $k$-th subspace for each $k=1,\dots,K$, $\widetilde{\bm U}_k^{\star} \in \mathcal{O}^{n \times D_k}$ is an orthonormal basis for the $D_k$-dimensional subspace spanned by $\{\bm U_l^{\star} \,:\, l \neq k\}$; (ii) $\bm a \overset{i.i.d.}{\sim} \mathcal{N}(\bm 0, \bm I_{d_k})$, $\bm e \overset{i.i.d.}{\sim} \mathcal{N}(\bm 0, \bm I_{D_k})$ are independent standard Gaussian vectors representing the latent signal and noise components of $\bm x_0$, respectively; (iii) $\delta > 0$ controls the data noise level, and (iv) $\pi_k \ge 0$ is the mixing proportion satisfying $\sum_{k=1}^K \pi_k=1$. 

\end{assum}
One can verify that any sample $\bm x_0$ drawn from the above distribution satisfies:
\begin{align*}
    \bm x_0 \sim \sum_{k=1}^K \pi_k\mathcal{N}\left(\bm 0, \bm U_k^\star\bm U_k^{\star \top} + \delta^2\widetilde{\bm U}_k^{\star}\widetilde{\bm U}_k^{\star \top} \right)
\end{align*}
As shown in \Cref{fig:sample}, data from \MoLRG~resides on a union of low-dimensional subspaces, each following a Gaussian distribution with a low-rank covariance matrix representing its basis perturbed by some noise. For simplicity, we assume equal subspace dimensions ($d_1 = \dots = d_K = d$), orthogonal bases (i.e., $\bm U_k^{\star \top} \bm U_l^{\star} = \bm 0$ for $k \neq l$), and uniform mixing weights ($\pi_1 = \dots = \pi_K = 1/K$).\footnote{The assumptions of subspace orthogonality, equal subspace rank and uniform mixing weights are adopted for analytical simplicity. Empirical results in \Cref{appsubsec:relax_assump} demonstrate that the unimodal dynamics persist when these assumptions are relaxed.}  Additionally, the study of noisy \MoLRG\; distributions is further motivated by the following facts: 
\begin{itemize}[leftmargin=*]
    \item \emph{\MoLRG\;captures the intrinsic low-dimensionality of image data.} Although real-world image datasets are high-dimensional in terms of pixel count and data volume, extensive empirical studies \citep{gong2019intrinsic,pope2021intrinsic,stanczuk2022your} demonstrated that their intrinsic dimensionality is considerably lower. Additionally, recent work \citep{huang2024denoising,liang2024low} has leveraged the intrinsic low-dimensional structure of real-world data to analyze the convergence guarantees of diffusion model sampling. The \MoLRG~distribution, which models data in a low-dimensional space with rank $d_k \ll n$, effectively captures this property.
\end{itemize}

\vspace{-0.1in}
\begin{itemize}[leftmargin=*]
    \item \emph{Latent diffusion models encourage the latent space toward a Gaussian distribution.} State-of-the-art large-scale diffusion models \citep{peebles2023scalable, podell2023sdxl} typically employ autoencoders \citep{kingma2013auto} to project images into a latent space, where a KL penalty encourages the learned latent distribution to approximate standard Gaussians \citep{rombach2022high}. Furthermore, recent studies \citep{jing2022subspace, chen2024deconstructing} show that diffusion models can be trained to leverage the intrinsic subspace structure of real-world data.

    \item \emph{Modeling the complexity of real-world image datasets.} The noise term $\delta \widetilde{\bm U}_k^{\star} \bm e_i$ captures perturbations outside the $k$-th subspace via the noise space $\widetilde{\bm U}_k^{\star}$, analogous to insignificant attributes of real-world images, such as the background of an image. While additional noise may be less significant for representation learning, it plays a crucial role in enhancing the fidelity of generated samples. 
\end{itemize}
In other words, our data model assumes that the overall dataset lies in a union of low-dimensional subspaces, where each class is associated with a dominant class-specific subspace, and there exists a shared subspace common across all classes that captures class-independent fine-grained details. This shared–specific decomposition has been empirically supported in the context of subspace clustering and representation learning \citep{bousmalis2016domain,zhou2019dual}.

\subsection{Network parametrization}\label{subsec:network}
To analyze how feature representations evolve across noise levels in diffusion models, we study the following nonlinear network parameterization inspired from the \MoLRG~data assumption. Specifically, we parameterize the DAE $\bm{x}_{\bm{\theta}}(\bm{x}_t, t)$ and its corresponding latent feature representation $\bm{h}_{\bm{\theta}}(\bm{x}_t, t)$ as follows:
\begin{align}\label{eq:net_param}
    \bm{x}_{\bm{\theta}}(\bm{x}_t, t) = \bm{U} \bm{h}_{\bm{\theta}}(\bm{x}_t, t),\ \bm{h}_{\bm{\theta}}(\bm{x}_t, t) = \bm{D}(\bm{x}_t, t) \bm{U}^{\top} \bm{x}_t,\ \bm{D}(\bm{x}_t, t) = \mathrm{diag}\left(\beta_1^t \bm{I}_d, \dots, \beta_K^t \bm{I}_d \right),
\end{align}
where $\bm{\theta} = \{\bm U\}$ denotes a set of learnable parameters with $\bm{U} = \begin{bmatrix}
    \bm{U}_1,\cdots,\bm{U}_K
    \end{bmatrix}
\in \mathcal{O}^{n \times Kd}$. Let  $\zeta_t = \frac{1}{1 + \sigma_t^2}$ and $\xi_t = \frac{\delta^2}{\delta^2 + \sigma_t^2}$, where $\sigma_t$ is the noise scaling in \eqref{eq:Tweedie}. Correspondingly, we parameterize $\beta_l^t = \xi_t + (\zeta_t - \xi_t) w_l(\bm{x}_t, t)$ in \eqref{eq:net_param} with
\begin{align}\label{eq:net_attn}
w_l(\bm{x}_t, t) = \frac{\exp\left( g_l(\bm{x}_t, t) \right)}{\sum_{s=1}^K \exp\left( g_s(\bm{x}_t, t) \right)}, \quad
g_l(\bm{x}_t, t) = \frac{\zeta_t}{2\sigma_t^2}  \| \bm{U}_l^{\top} \bm{x}_t \|^2 + \frac{\xi_t}{2\sigma_t^2}  \| \widetilde{\bm{U}}_l^{\top} \bm{x}_t \|^2.
\end{align}
This network architecture can be interpreted as a shallow U-Net \citep{ronneberger2015u} composed with a blockwise mixture-of-experts \citep{shazeer2017outrageously} mechanism, or equivalently, a restricted form of self-attention, with the following components:
\begin{itemize}[leftmargin=*]
    \item \textit{Low-dimensional projection.} The input $\bm{x}_t$ is projected into a latent space via a learned orthonormal basis $\bm{U}^\top \bm{x}_t$, which is partitioned into $K$ groups of dimension $d$. Each block $\bm{U}_l$ can be viewed as an individual expert operating on a distinct subspace of the input.
    
    \item \textit{Expert weighting.} Each projected latent group is then reweighted by a coefficient $\beta_l^t$, which depends on the input $\bm{x}_t$ and timestep $t$ via a softmax function $w_l(\bm{x}_t, t)$. These coefficients form the block-diagonal matrix $\bm{D}(\bm{x}_t, t)$, which scales each group independently. This structure can be interpreted as a mixture-of-experts model \citep{shazeer2017outrageously}, where the contribution of each expert is modulated by the input. It may also be viewed as a restricted self-attention mechanism \citep{vaswani2017attention}, where attention is applied at the group level rather than individually, yielding the feature representations of the input as in \eqref{eq:net_param}.

    \item \textit{Symmetric reconstruction.} The modulated feature representation is projected back to the input space via the same expert blocks $\bm{U}$, forming a symmetric encoder–decoder architecture.

\end{itemize}

Moreover, this parameterization in \eqref{eq:net_param} induces a time and data-dependent feature representation $\bm{h}_{\bm{\theta}}(\bm{x}_t, t)$, which enables systematic analysis of representation quality across noise scales.

\subsection{A metric for measuring representation quality}\label{subsec:rep-quality}


To understand diffusion-based representation learning under the \MoLRG~data model, we define the following signal-to-noise ratio (\SNR) to measure the representation quality as follows.

\begin{definition} 
\emph{Suppose the data $\bm x_0$ follows the noisy \MoLRG\;introduced in \Cref{assum:subspace}. Without loss of generality, let $k\in[K]$ denote the true class of $\mb x_0$. For any trained DAE $\hat{\bm x}_{\bm \theta}$ parameterized as in \eqref{eq:net_param}, we define:}  

\begin{align}\label{eq:snr}
    \mathrm{SNR}(\hat{\bm x}_{\bm \theta},t) := \E_k\left[\frac{\E_{\bm x_t} [\|\bm U_k^{\star}\hat{\bm h}_{\bm \theta}(\bm x_t, t)\|^2|\ k] }{\E_{\bm x_t} [\| \hat{\bm x}_{\bm \theta}(\bm x_t, t) - \bm U_k^{\star} \hat{\bm h}_{\bm \theta}(\bm x_t, t)\|^2 |\, k]}  \right].
\end{align}
\emph{where $\hat{\bm x}_{\bm \theta}(\bm x_t, t) = \bm U\, \hat{\bm h}_{\bm \theta}(\bm x_t, t)$ denotes the decoded reconstruction, and $\bm U_k^{\star}$ is the basis matrix for the true class data subspace.} 
\end{definition}
This formulation measures how well the learned feature emphasizes the signal components aligned with the true class subspace versus those aligned with irrelevant or confounding directions. Intuitively, the numerator captures the energy of the feature projected onto the correct class subspace, while the denominator measures the residual energy after removing this component from the reconstructed signal. A higher \SNR~value at a given noise level $t$ indicates that the representation $\hat{\bm h}_{\bm \theta}$ encodes more discriminative structure with respect to the true class, implying better alignment with the downstream classification objective.

\subsection{Main theoretical results}
Based upon the aforementioned setup, we show the following results to explain the unimodal representation dynamics. 

\begin{prop}\label{lem:E[x_0]}
Suppose the data $\bm x_0$ is drawn from a noisy \MoLRG~data distribution with $K$-class and noise level $\delta$ introduced in \Cref{assum:subspace}. Then  the optimal $\{\bm U\}$
minimizing the loss \eqref{eq:ddpm_loss} is the ground truth basis defined in \eqref{eq:MoG noise}, and the optimal DAE $\hat{\bm x}_{\bm \theta}^{\star}(\bm x_t,t)$ admits the analytical form: 
\begin{align}\label{eq:optimal_DAE}
    \hat{\bm x}_{\bm \theta}^{\star}(\bm x_t,t) &= \sum_{l=1}^K w^{\star}_l(\bm x_t,t)\left( \zeta_t \bm{U}_l^\star \bm{U}_l^{\star \top} + \xi_t \widetilde{\bm{U}}_l^\star  \widetilde{\bm{U}}_l^{\star \top} \right)\bm x_t,
\end{align}
where $w_l^\star(\bm{x}_t, t)$ are the coefficients in \eqref{eq:net_attn} when $ \{\bm U\} = \{\bm U_l^\star\}_{l=1}^K$.
\end{prop}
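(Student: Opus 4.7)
The plan is to use the standard fact that the global minimizer of the denoising loss in \eqref{eq:ddpm_loss} equals the Bayes-optimal denoiser $\E[\bm x_0 \mid \bm x_t]$, compute this posterior mean in closed form under the noisy \MoLRG~data model, and then verify that the resulting expression is realized by the parameterization \eqref{eq:net_param} with $\bm U_l = \bm U_l^\star$. The orthogonality, equal-rank, and uniform-weight assumptions are essential, because they put every class-conditional covariance $\bm \Sigma_l + \sigma_t^2 \bm I$ into a shared spectral basis and make all log-determinants and out-of-subspace quadratic contributions class-independent.

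First, I would apply the law of total expectation across the latent class label,
\begin{equation*}
\E[\bm x_0 \mid \bm x_t] = \sum_{l=1}^K p(k=l \mid \bm x_t)\, \bm \Sigma_l (\bm \Sigma_l + \sigma_t^2 \bm I)^{-1} \bm x_t,
\end{equation*}
using $\bm x_t \mid k=l \sim \mathcal{N}(\bm 0, \bm \Sigma_l + \sigma_t^2 \bm I)$ with $\bm \Sigma_l = \bm U_l^\star \bm U_l^{\star\top} + \delta^2 \widetilde{\bm U}_l^\star \widetilde{\bm U}_l^{\star\top}$. Orthogonality yields a spectral decomposition of $\bm \Sigma_l + \sigma_t^2 \bm I$ with eigenvalues $1+\sigma_t^2$, $\delta^2 + \sigma_t^2$, and $\sigma_t^2$ on the blocks $\bm U_l^\star$, $\widetilde{\bm U}_l^\star$, and their joint orthogonal complement, respectively. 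Substituting into $\bm \Sigma_l(\bm \Sigma_l + \sigma_t^2 \bm I)^{-1}$ collapses the per-class posterior mean to $\zeta_t \bm U_l^\star \bm U_l^{\star\top}\bm x_t + \xi_t \widetilde{\bm U}_l^\star \widetilde{\bm U}_l^{\star\top}\bm x_t$, exactly matching the bracket in \eqref{eq:optimal_DAE}.

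Next I would derive the posterior class weights by Bayes' rule. The key algebraic step is the identities $\tfrac{1}{\sigma_t^2} - \tfrac{1}{1+\sigma_t^2} = \tfrac{\zeta_t}{\sigma_t^2}$ and $\tfrac{1}{\sigma_t^2} - \tfrac{1}{\delta^2+\sigma_t^2} = \tfrac{\xi_t}{\sigma_t^2}$, which convert the log-density $-\tfrac{1}{2}\bm x_t^\top (\bm \Sigma_l + \sigma_t^2\bm I)^{-1}\bm x_t$ into the quadratic $g_l(\bm x_t, t)$ of \eqref{eq:net_attn} plus an $l$-independent remainder. Combined with the cancellation of the common log-determinant (equal ranks) and the uniform prior, this identifies $p(k=l \mid \bm x_t) = w_l^\star(\bm x_t, t)$, producing exactly \eqref{eq:optimal_DAE}.

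Finally, I would confirm that \eqref{eq:net_param} with $\bm U_l = \bm U_l^\star$ realizes this expression. Using $\widetilde{\bm U}_l^\star \widetilde{\bm U}_l^{\star\top} = \sum_{s \ne l} \bm U_s^\star \bm U_s^{\star\top}$ and $\sum_{l=1}^K w_l^\star = 1$, a short rearrangement turns $\sum_l w_l^\star \bigl(\zeta_t \bm U_l^\star \bm U_l^{\star\top} + \xi_t \widetilde{\bm U}_l^\star \widetilde{\bm U}_l^{\star\top}\bigr)$ into $\sum_l \bigl(\xi_t + (\zeta_t - \xi_t) w_l^\star\bigr) \bm U_l^\star \bm U_l^{\star\top} = \sum_l \beta_l^t \bm U_l^\star \bm U_l^{\star\top}$, which is precisely the block-diagonal form $\bm U \bm D(\bm x_t, t) \bm U^\top$. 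The step I expect to be the main obstacle is the optimality claim for $\{\bm U_l^\star\}$ itself: since the posterior mean uniquely achieves the Bayes risk, one must argue that any competing orthonormal $\bm U$ either strictly increases the loss or is related to $\{\bm U_l^\star\}$ by the natural symmetry group (orthogonal rotations inside each block $\bm U_l$ together with permutations of the $K$ blocks) under which \eqref{eq:optimal_DAE} is invariant. This identifiability-up-to-symmetry argument requires care because the softmax weights $w_l^\star(\bm x_t, t)$ couple the columns of $\bm U$ nonlinearly through the feature norms $\|\bm U_l^\top \bm x_t\|^2$.
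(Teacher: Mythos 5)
Your proposal is correct and takes essentially the same route as the paper's proof: both invoke the standard fact that the loss \eqref{eq:dae_loss} is globally minimized by the posterior mean $\E[\bm x_0 \mid \bm x_t]$ (the paper cites \citep{vincent2011connection}), compute that mean in closed form under the noisy \MoLRG~model (you via direct per-class Gaussian conditioning and the spectral decomposition of $\bm \Sigma_l + \sigma_t^2 \bm I$, the paper via the explicit score and Tweedie's formula with a Woodbury inversion --- an equivalent computation, with your identities $\tfrac{1}{\sigma_t^2}-\tfrac{1}{1+\sigma_t^2}=\tfrac{\zeta_t}{\sigma_t^2}$ and $\tfrac{1}{\sigma_t^2}-\tfrac{1}{\delta^2+\sigma_t^2}=\tfrac{\xi_t}{\sigma_t^2}$ matching the paper's $\phi_t,\psi_t$), and finish with the identical rearrangement using $\widetilde{\bm U}_l^\star \widetilde{\bm U}_l^{\star\top}=\sum_{s\neq l}\bm U_s^\star \bm U_s^{\star\top}$ and $\sum_l w_l^\star = 1$ to match \eqref{eq:net_param}. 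The ``main obstacle'' you flag is not actually needed: the paper settles optimality precisely by the achievability argument you already have --- the parameterized family realizes the Bayes-optimal denoiser at $\bm U = \bm U^\star$, hence attains the global minimum --- and makes no uniqueness or identifiability-up-to-symmetry claim.
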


\paragraph{Link to the fine-to-coarse generation shift.}
Since the $\bm{U}^\star_l$-related component captures low-dimensional class-relevant attributes and the $\widetilde{\bm{U}}^\star_l$-related component captures small-scale, class-irrelevant attributes, the optimal DAE exhibits a fine-to-coarse transition \citep{choi2022perception, wang2023diffusion, kamb2024analytic} in its output, where the class-irrelevant attributes are progressively removed as the noise level $\sigma_t$ increases. Specifically, $\zeta_t = \frac{1}{1 + \sigma_t^2}$ quantifies the reduction rate of ${\bm{U}}^\star_l$ term while $\xi_t = \frac{\delta^2}{\delta^2 + \sigma_t^2}$ quantifies the reduction rate of $\widetilde{\bm{U}}^\star_l$ term, as $\sigma_t$ grows, $\xi_t$ decays much more rapidly than $\zeta_t$, indicating that the output retains more class-related coarse information while discarding fine-grained, irrelevant details.

This phenomenon correspond to an important observation formalized in the next theorem: there exists a balance timestep during the diffusion process, at which class-irrelevant components are maximally suppressed while class-relevant component is preserved, yielding peak classification accuracy from the feature. Substituting the optimal DAE formulation into \eqref{eq:snr}, we can approximate the \SNR~in the following theorem and analyze the unimodal dynamics via the approximation:

\begin{theorem}\label{lem:main}(Informal)
Suppose that the data $\bm x_0$ follows the noisy \MoLRG\;introduced in \Cref{assum:subspace}. Then  the \SNR~of the optimal DAE $\hat{\bm x}_{ \bm\theta}^{\star}$ can be approximated as follows:
    \begin{align}\label{eq:csnr}
        \mathrm{SNR}(\hat{\bm x}_{\bm \theta}^{\star},t) \approx \frac{C_t}{(K-1)}\cdot \left(\frac{1 + \frac{\sigma_t^2}{\delta^2}h(\hat{w}_t^+, \delta)}{1 + \frac{\sigma_t^2}{\delta^2}h(\hat{w}_t^-, \delta)}\right)^2.
    \end{align}
Here, $C_t$ is a monotonically decaying constant that has minimal impact to the overall unimodal shape. The function $h(w, \delta) := (1 - \delta^2)w + \delta^2$ is monotonically increasing in $w$, where $h(\hat{w}_t^+, \delta)$ and $h(\hat{w}_t^-, \delta)$ denote positive and negative class confidence rates with
\begin{align*}
\begin{cases}
\hat w_t^+(\sigma_t, \delta) &=\; \mathbb E_k[ \mathbb{E}_{\bm x_t}[w_k^\star(\bm x_t, t)\mid k]], \\
\hat w_t^-(\sigma_t, \delta) &=\; \mathbb E_{k}[\mathbb{E}_{\bm x_t}[w_{l}^\star(\bm x_t, t) \mid k \neq l ]],
\end{cases}
\end{align*}
which are the softmax coefficients assigned to the correct class $k$ and the other classes $l\neq k$.
\end{theorem}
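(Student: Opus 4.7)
The plan is to substitute the closed-form optimal DAE from \Cref{lem:E[x_0]} into the SNR expression \eqref{eq:snr} and simplify using the orthogonality of the class subspaces, ending with an explicit algebraic identity that rearranges the result into the target form. First, I would condition on the true class $k$. Under the \MoLRG~model, $\bm x_t = \bm U_k^\star \bm a + \delta \widetilde{\bm U}_k^\star \bm e + \sigma_t \bm \epsilon$, and orthogonality of the $\bm U_l^\star$'s implies
\begin{align*}
\bm U_k^{\star\top}\bm x_t \sim \mathcal{N}(\bm 0, (1+\sigma_t^2)\bm I_d), \qquad \bm U_l^{\star\top}\bm x_t \sim \mathcal{N}(\bm 0, (\delta^2+\sigma_t^2)\bm I_d) \quad \text{for } l \neq k,
\end{align*}
so $\E[\|\bm U_k^{\star\top}\bm x_t\|^2 \mid k] = d(1+\sigma_t^2)$ and $\E[\|\bm U_l^{\star\top}\bm x_t\|^2 \mid k] = d(\delta^2+\sigma_t^2)$ for $l \neq k$.

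Next, the blockwise parametrization \eqref{eq:net_param} at the optimum $\bm U = \bm U^\star$ identifies the ``class-$k$ reconstruction'' $\bm U_k^\star \hat{\bm h}_{\bm\theta^\star}(\bm x_t, t)$ with $\beta_k^t\, \bm U_k^\star \bm U_k^{\star\top} \bm x_t$, so the numerator integrand of the SNR collapses to $(\beta_k^t)^2 \|\bm U_k^{\star\top}\bm x_t\|^2$, and by subspace orthogonality the residual squared norm decouples as $\sum_{l\neq k}(\beta_l^t)^2 \|\bm U_l^{\star\top}\bm x_t\|^2$. At this stage I would invoke a mean-field approximation: since $\beta_l^t = \xi_t + (\zeta_t - \xi_t) w_l^\star(\bm x_t, t)$ is driven by a softmax that is a nonlinear function of $\bm x_t$, I replace $w_l^\star$ by its conditional mean, namely $\hat{w}_t^+$ when $l=k$ and $\hat{w}_t^-$ when $l \neq k$ (uniform across $l \neq k$ by the symmetry assumptions). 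Under this decoupling one obtains deterministic effective weights $\bar\beta_k^t = \xi_t + (\zeta_t - \xi_t)\hat{w}_t^+$ and $\bar\beta_l^t = \xi_t + (\zeta_t - \xi_t)\hat{w}_t^-$, yielding
\begin{align*}
\mathrm{SNR}(\hat{\bm x}_{\bm\theta}^\star, t) \;\approx\; \frac{(\bar\beta_k^t)^2\, (1+\sigma_t^2)}{(K-1)\,(\bar\beta_l^t)^2\, (\delta^2+\sigma_t^2)}.
\end{align*}

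The final step is a direct algebraic rearrangement. Using $\zeta_t = 1/(1+\sigma_t^2)$ and $\xi_t = \delta^2/(\delta^2 + \sigma_t^2)$, expanding $\bar\beta_k^t = \xi_t(1-\hat{w}_t^+) + \zeta_t \hat{w}_t^+$ and clearing denominators gives the identity
\begin{align*}
\bar\beta_k^t\, (\delta^2+\sigma_t^2)(1+\sigma_t^2) \;=\; \delta^2 + \sigma_t^2\, h(\hat{w}_t^+, \delta),
\end{align*}
and analogously for $\bar\beta_l^t$ with $\hat{w}_t^-$ in place of $\hat{w}_t^+$. Dividing the numerator and denominator of the SNR approximation by this common prefactor and factoring $\delta^2$ inside each bracket produces the target ratio
$\bigl(1 + \tfrac{\sigma_t^2}{\delta^2} h(\hat{w}_t^+, \delta)\bigr) / \bigl(1 + \tfrac{\sigma_t^2}{\delta^2} h(\hat{w}_t^-, \delta)\bigr)$, with the residual prefactor $C_t = (1+\sigma_t^2)/(\delta^2+\sigma_t^2)$, which is monotonically decreasing in $\sigma_t^2$ whenever $\delta \le 1$.

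The main obstacle is justifying the mean-field decoupling rigorously, since $w_l^\star(\bm x_t, t)$ and $\|\bm U_l^{\star\top}\bm x_t\|^2$ are both functions of the same Gaussian $\bm x_t$. A quantitative upgrade would require either a concentration bound for the logits $g_l(\bm x_t, t) = \tfrac{\zeta_t}{2\sigma_t^2}\|\bm U_l^{\star\top}\bm x_t\|^2 + \tfrac{\xi_t}{2\sigma_t^2}\|\widetilde{\bm U}_l^{\star\top}\bm x_t\|^2$ around their class-conditional means (exploiting the $d \gg 1$ regime and the Lipschitz smoothing of the softmax), or a two-sided sandwich using Jensen-type inequalities that controls the covariance $\mathrm{Cov}(\beta_l^t, \|\bm U_l^{\star\top}\bm x_t\|^2 \mid k)$. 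Since the theorem is explicitly stated as informal and the qualitative unimodal shape only requires the monotone behavior of $h$ in its first argument, a heuristic mean-field replacement appears to be the intended level of justification; a fully rigorous version would attach quantitative error terms that vanish in the $d \to \infty$ or $\delta \to 0$ limits.
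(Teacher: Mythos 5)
Your proposal is correct and takes essentially the same route as the paper's proof of \Cref{lem:main_formal}: freeze the softmax weights at deterministic values, use subspace orthogonality and the conditional second moments $\E[\|\bm U_k^{\star\top}\bm x_t\|^2\mid k]=d(1+\sigma_t^2)$, $\E[\|\bm U_l^{\star\top}\bm x_t\|^2\mid k]=d(\delta^2+\sigma_t^2)$ to reduce the SNR to $\frac{(\bar\beta_k^t)^2(1+\sigma_t^2)}{(K-1)(\bar\beta_l^t)^2(\delta^2+\sigma_t^2)}$, and rearrange via the identity $\bar\beta_k^t(1+\sigma_t^2)(\delta^2+\sigma_t^2)=\delta^2+\sigma_t^2\,h(\hat w_t^+,\delta)$ with $C_t=(1+\sigma_t^2)/(\delta^2+\sigma_t^2)$, exactly as in \Cref{lem:decomp}. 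The only cosmetic difference is that the paper's formal argument freezes the weights as the softmax of the \emph{expected} logits (yielding the closed form $\hat w_k^\star=\exp(D_t)/(K-1+\exp(D_t))$ via \Cref{lem:logsumexp}) rather than the conditional expectation of the softmax as in your mean-field step, and the paper likewise validates this decoupling only empirically (\Cref{fig:assump_validate}), so your candid flagging of the unproven decorrelation matches the paper's actual level of rigor.
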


We defer the formal statement of \Cref{lem:main} and its proof to \Cref{app:thm1_proof}. In the following, we discuss the implications of our result.

\begin{figure}[t]
\begin{center}
    \begin{subfigure}{0.48\textwidth}\label{fig:mog_csnr}
    \includegraphics[width = 0.995\textwidth]{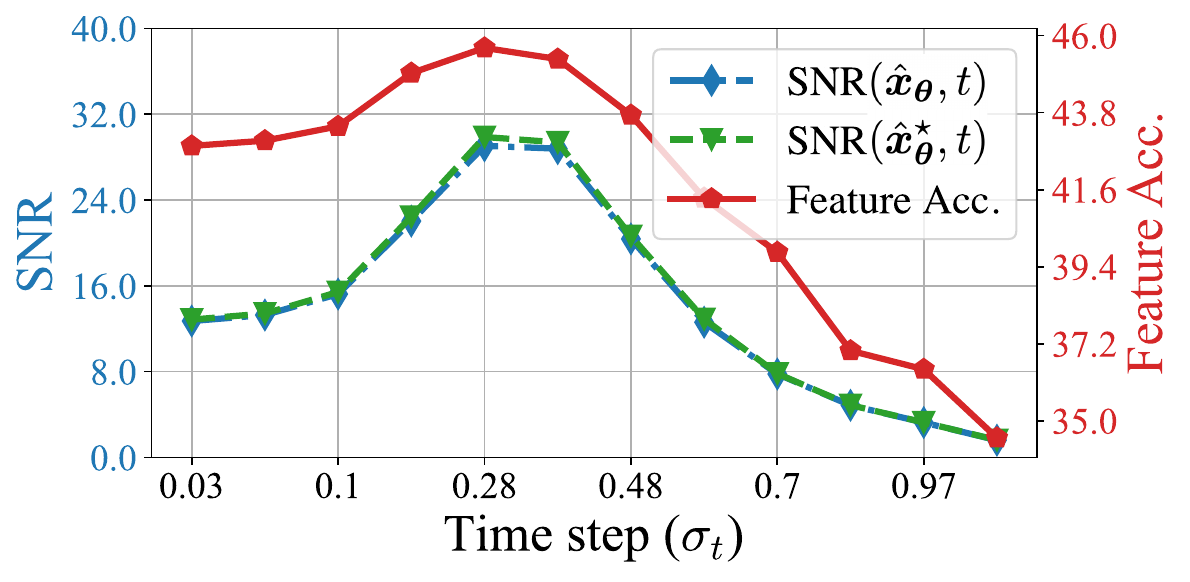}
    \caption{\SNR~aligns with feature probing accuracy} 
    \end{subfigure} \quad 
    \begin{subfigure}{0.42\textwidth}
    \includegraphics[width = 0.995\textwidth]{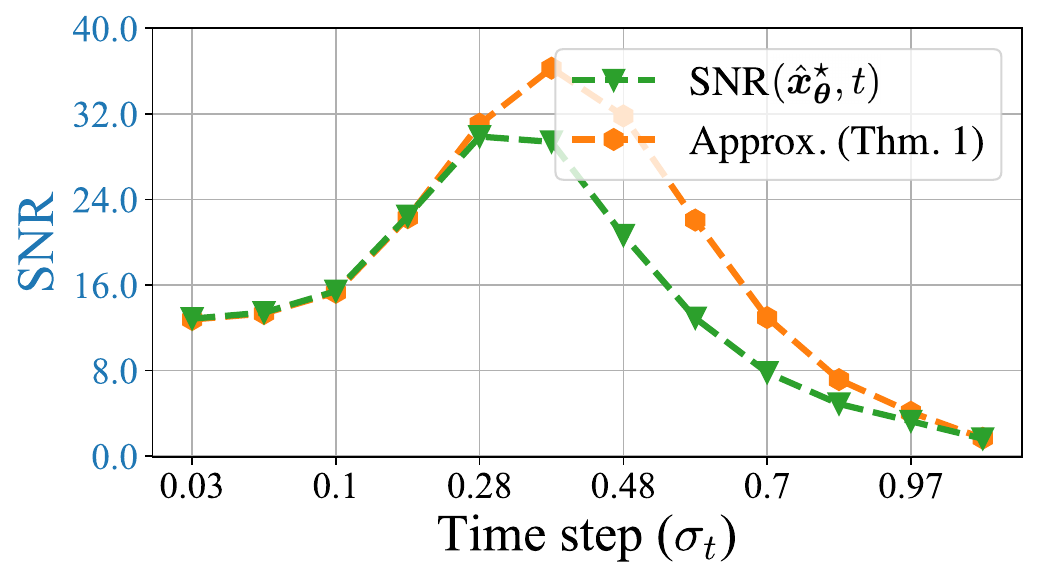}
    \caption{Tightness of the derived approximation} \label{fig:mog_tightness}
    \end{subfigure}
    \end{center}
    \caption{\textbf{Feature probing accuracy and associated \SNR~dynamics in \MoLRG~data.} In panel(a) we plot the probing accuracy and \SNR~with the feature obtained from a learned DAE $\bm{\hat x_\theta}$, both of which exhibit a consistent unimodal pattern. The DAE is trained on a 3-class \MoLRG~dataset with data dimension $n=50$, subspace dimension $d=5$, and noise scale $\delta=0.2$. Additionally, in panel(b) we include the optimal \SNR~calculated from the optimal DAE $\bm{\hat x_\theta}^\star$ and the derived approximation in \Cref{lem:main} as a reference. }
\label{fig:csnr_molrg_match}
\end{figure}

\begin{wrapfigure}[13]{R}{0.50\textwidth}
    \centering
    \includegraphics[width=0.4\textwidth]{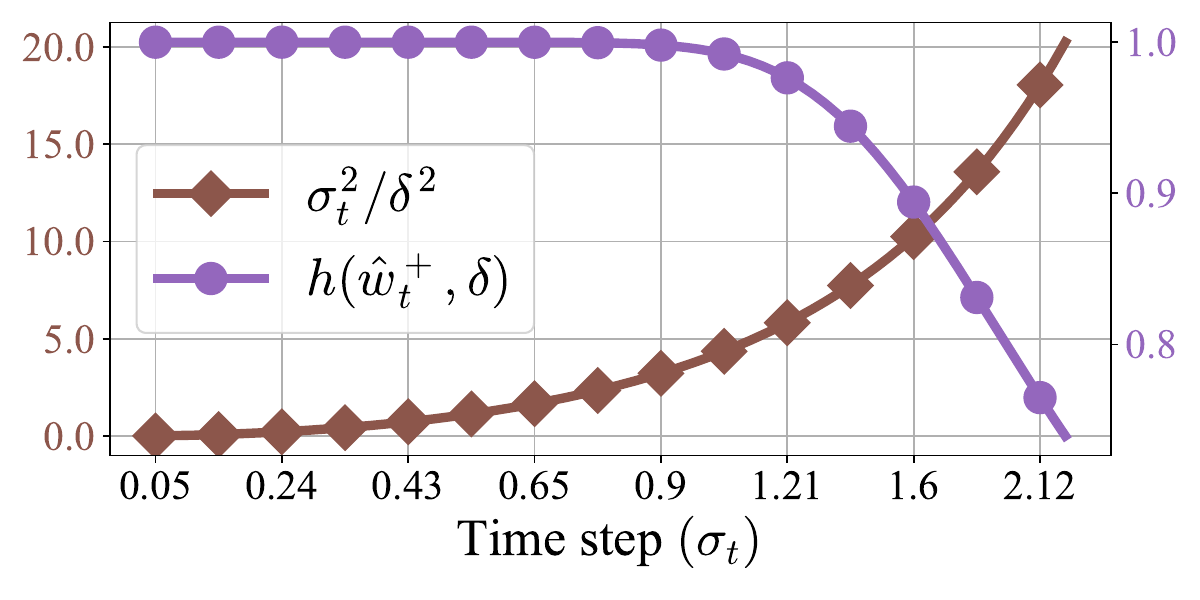}
    \caption{Illustration of the interplay between the denoising rate and the class confidence rate. The settings follow \Cref{fig:csnr_molrg_match}.}
    \label{fig:trade_off}
\end{wrapfigure}

\paragraph{The unimodal curve of \SNR\;across noise levels.}
Intuitively, our theorem shows that a unimodal curve is mainly induced by the interplay between the ``denoising rate'' $\sigma_t^2/\delta^2$ and the positive class confidence rate $h(\hat{w}_t^+, \delta)$ as the noise level $\sigma_t$ increases. As observed in \Cref{fig:trade_off}, the ``denoising rate'' ($\sigma_t^2/\delta^2$) increases monotonically with $\sigma_t$, while the class confidence rate $h(\hat{w}_t^+, \delta)$ monotonically declines. Initially, when $\sigma_t$ is small, the class confidence rate remains relatively stable due to its flat slope, and an increasing ``denoising rate'' improves the \SNR. However, as indicated by \eqref{eq:optimal_DAE}, when $\sigma_t$ becomes too large, $h(\hat{w}_t^+,\delta)$ quickly decays and approaches $h(\hat{w}_t^-,\delta)$, leading to a drop in the \SNR. It is worth noting that though we are mainly characterizing the unimodal dynamics of an approximation of \SNR, in practice it closely mimic the trend of the actual \SNR~as in \Cref{fig:mog_tightness}.

\paragraph{Alignment of \SNR\;with representation learning performance.} 
As shown in \Cref{fig:csnr_molrg_match,fig:cifar}, our theory derived from the noisy \MoLRG\; distribution effectively captures real-world phenomena. Specifically, we conduct experiments on both synthetic (i.e., noisy \MoLRG) and real-world datasets (i.e., CIFAR and ImageNet) to measure $\mathrm{SNR}(\hat{\bm x}_{\bm \theta},t)$ as well as the feature probing accuracy. For feature probing, we use features extracted at different timesteps as inputs for linear probing. The results consistently show that $\mathrm{SNR}(\hat{\bm x}_{\bm \theta},t)$ follows a unimodal pattern across all cases, mirroring the trend observed in feature probing accuracy as the noise scale increases. This alignment provides a formal justification for previous empirical findings \citep{xiang2023denoising, baranchuk2021label, tang2023emergent}, which have reported a unimodal trajectory in the representation dynamics of diffusion models with increasing noise levels. Detailed experimental setups are provided in \Cref{app:exp_detail}.

\begin{figure*}[t]
\begin{center}

    \begin{subfigure}{0.48\textwidth}
    \includegraphics[width = 0.995\textwidth]{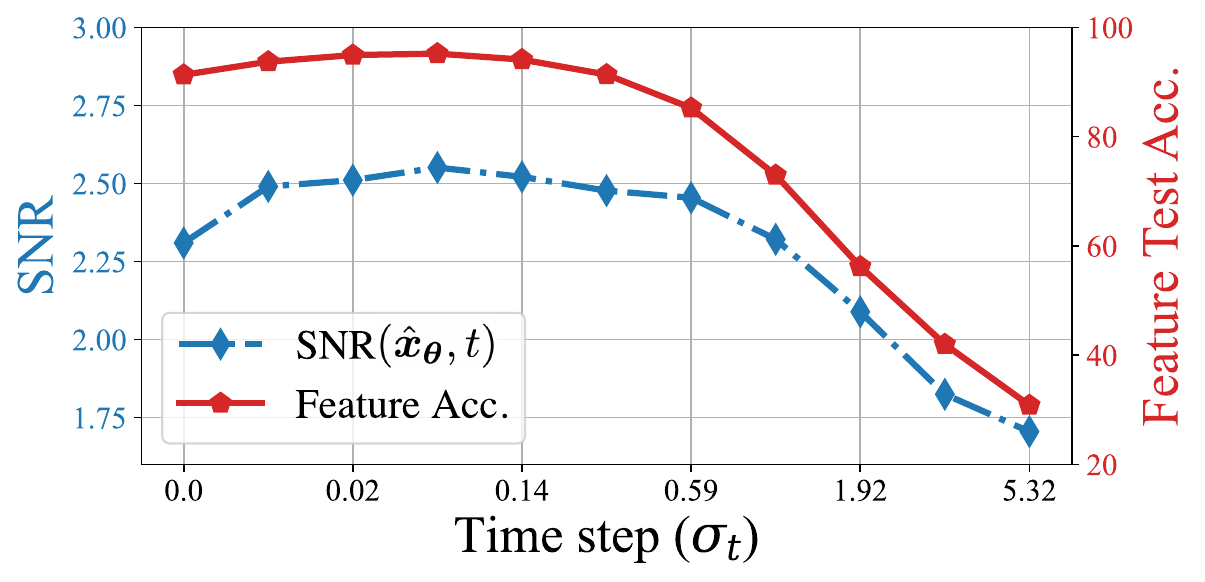}
    \caption{CIFAR10} 
    \end{subfigure} \quad 
    \begin{subfigure}{0.48\textwidth}
    \includegraphics[width = 0.995\textwidth]{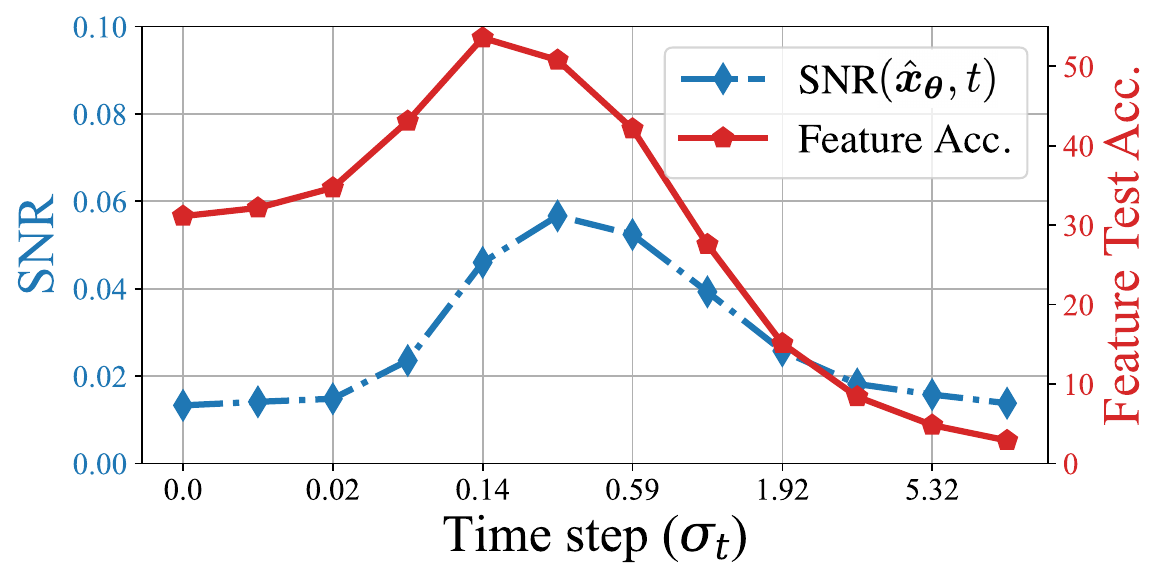}
    \caption{TinyImageNet} 
    \end{subfigure}
    \end{center}
    \vspace{-0.05in}
\caption{\textbf{Dynamics of feature accuracy and associated \SNR~on CIFAR10 and TinyImageNet.} Feature accuracy is plotted alongside $\mathrm{SNR}(\hat{\bm x}_{\bm \theta},t)$. Feature accuracy is evaluated on the test set, while the empirical \SNR~is computed from the training set. Both exhibit an aligning unimodal pattern. We use released EDM models \citep{karras2022elucidating} trained on the CIFAR10 \citep{krizhevsky2009learning} and ImageNet \citep{deng2009imagenet} datasets, evaluating them on CIFAR10 and TinyImageNet \citep{cs231n}, respectively. To compute \SNR~, we apply PCA on the CIFAR10/TinyImageNet features to extract the basis $\bm{U}_l$s. Further details can be found in \Cref{app:exp_detail}.}
\vspace{-0.1in}
\label{fig:cifar}
\end{figure*}


\section{Unimodal Representation Dynamics Predicts Model Generalization}
In the previous sections, we theoretically showed that when a diffusion model successfully captures the low-dimensional distribution of the data, the unimodal representation dynamics emerge. In this section, we investigate the opposite direction: can the presence of the unimodal representation dynamics serve as a reliable prediction of good generalization of diffusion models?

Answering this question sheds light on the distribution learning capabilities of diffusion models and is closely related to recent studies on their generalizability, which reveal that diffusion models operate in two distinct regimes—generalization and memorization—depending on factors such as dataset size, model capacity, and training duration \citep{zhang2024emergence, wang2024diffusion, li2023generalization, li2024understanding, baptista2025memorization}. In the generalization regime, the model captures the underlying data distribution and generates diverse, novel samples. In contrast, in the memorization regime, it overfits to the training data and loses the ability to generate novel samples. Further discussion on the generalization of diffusion models are provided in~\Cref{app:related}.

In this section, using classification tasks as a case study, we empirically demonstrate that the presence of unimodal representation dynamics reliably indicates generalization, while its gradual shift to a monotonically decreasing trend indicates memorization. Specifically, we study the effects of data size and model capacity in \Cref{subsec:between_model}, and the effects of learning dynamics in \Cref{subsec:within_model}.

\subsection{Effects of dataset size and model capacity on representation dynamics}\label{subsec:between_model}

\begin{figure*}[t]
\begin{center}

    \begin{subfigure}{0.375\textwidth}
    \includegraphics[width = 0.995\textwidth]{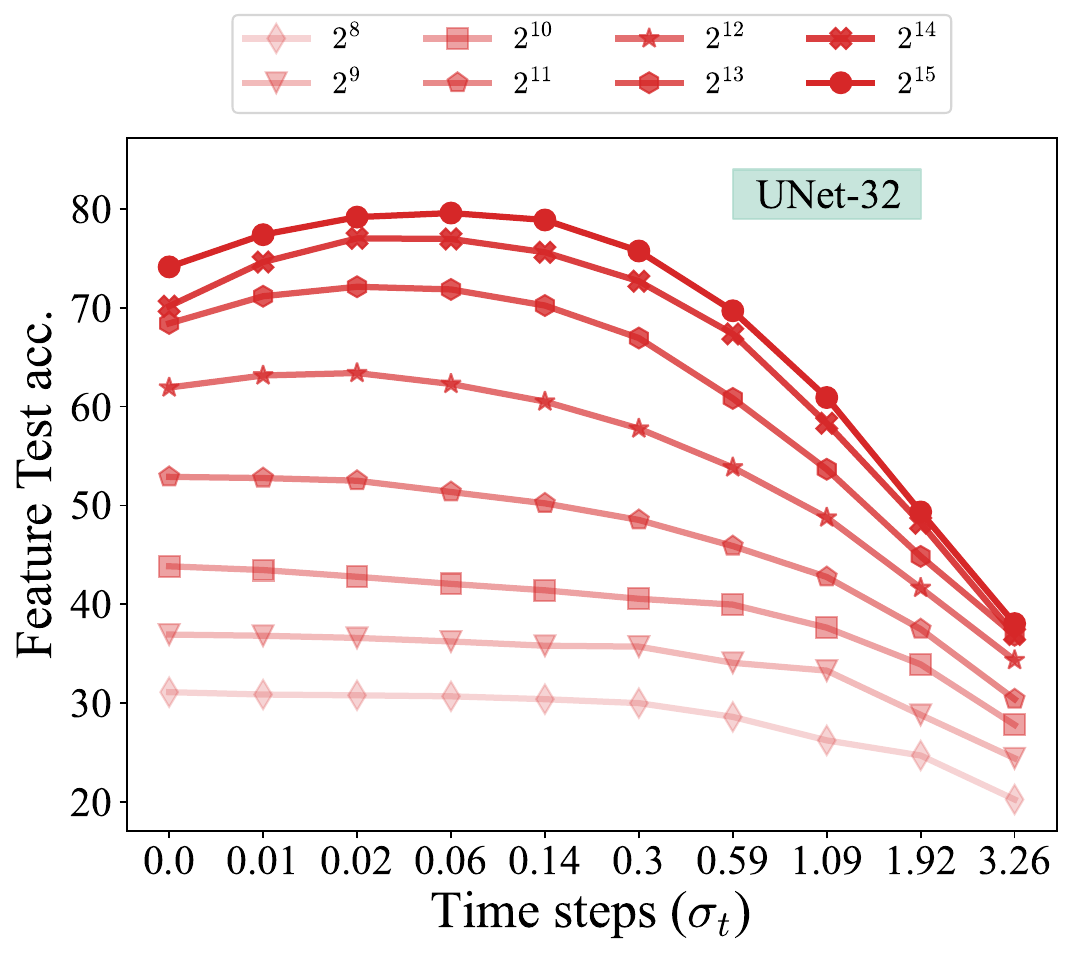}
    \caption{UNet-32} 
    \end{subfigure} 
    \begin{subfigure}{0.227\textwidth}
    \includegraphics[width = 0.995\textwidth]{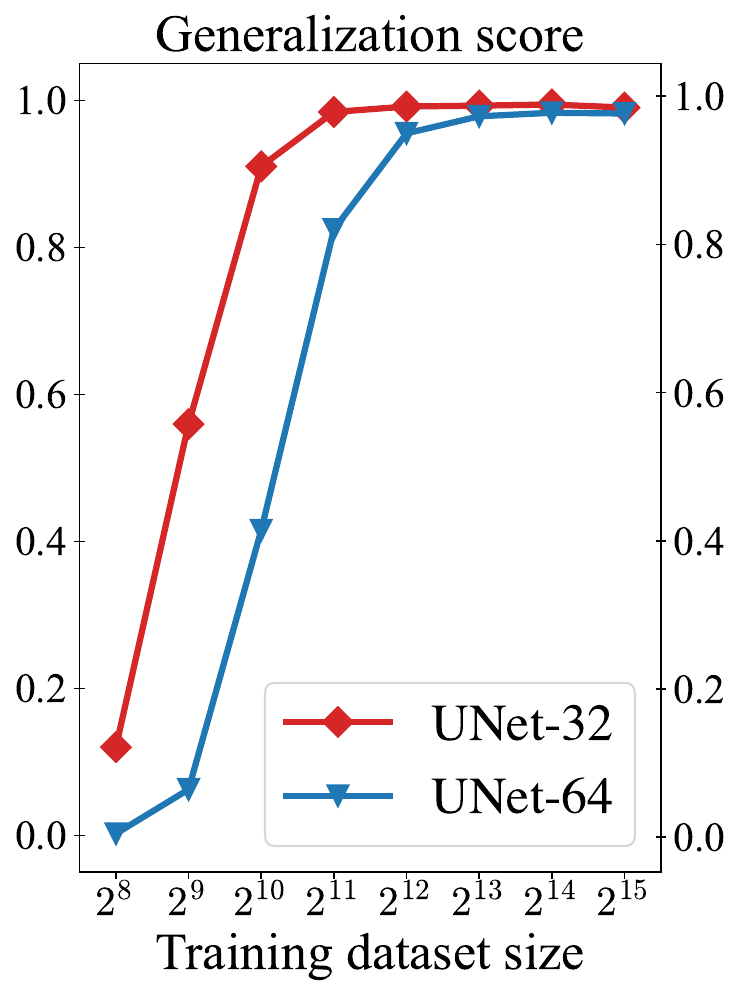}
    \caption{Generalization score} 
    \end{subfigure}
    \begin{subfigure}{0.375\textwidth}
    \includegraphics[width = 0.995\textwidth]{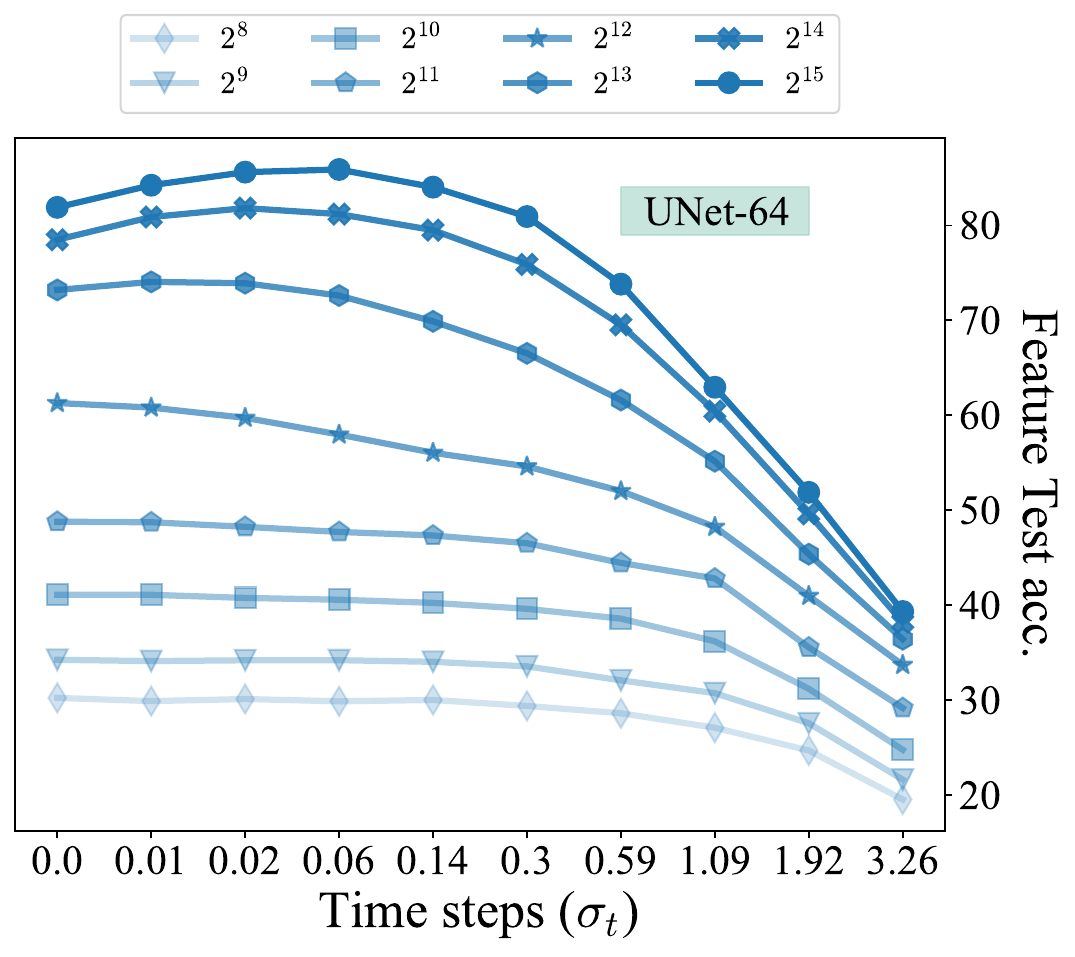}
    \caption{UNet-64} 
    \end{subfigure}
    \end{center}
    \vspace{-0.05in}
\caption{\textbf{Representation dynamics across model and data sizes.} We train DDPM-based UNet-32 and Unet-64 diffusion models on the CIFAR10 dataset using different training dataset sizes, ranging from $2^8$ to $2^{15}$. The unimodal representation dynamics across noise levels consistently emerges in the generalization regime (sufficient data size) and gradually disappears in smaller data settings.}
\vspace{-0.1in}
\label{fig:memgen_between}
\end{figure*}

Recent studies \citep{zhang2024emergence, kadkhodaie2023generalization} have shown that diffusion models exhibit a phase transition from memorization to generalization as the number of training samples increases. Specifically, when the network size significantly exceeds the number of training samples, diffusion models tend to memorize rather than capture the underlying low-dimensional data distribution \citep{zhang2024emergence, wang2024diffusion}. For fixed model capacity, generalization typically emerges when the number of training data is larger than a certain threshold; see \Cref{fig:memgen_between} (b). Here, we demonstrate that the representation dynamics undergo a similar transition with data scaling. Specifically, we train UNet-32 and UNet-64 \citep{ronneberger2015u} models using varying training dataset sizes to examine how their representation dynamics change across regimes. The generalization score metric introduced in \citep{pizzi2022self,zhang2024emergence} is used as a reference for quantifying model generalization.


 As shown in \Cref{fig:memgen_between}, we observe that reducing the training dataset size leads to a decrease in the generalization score. Correspondingly, the unimodal representation dynamic becomes less obvious, eventually transitioning into a monotonically decreasing curve (i.e., \emph{monotonic representation dynamics}). These observations highlight a strong connection between representation and distribution learning in diffusion models—specifically, the emergence of unimodal representation dynamics aligns with the ability of the model to capture the underlying data distribution for achieving good generalization.

\subsection{Effects of learning dynamics on representation dynamics}\label{subsec:within_model}

Second, we investigate how learning dynamics influence representation dynamics and also generalization performance, particularly in the limited data regime (e.g., training size $N = 2^{12}$ as shown in \Cref{fig:memgen_between} (b)). In this regime, recent studies \citep{li2023generalization, li2024understanding, baptista2025memorization} have shown that \emph{early stopping} can improve generalization performance. Specifically, these works observe an early learning phenomenon, where generalization improves during the initial phase of training but deteriorates as the model begins to memorize. As illustrated in \Cref{fig:fid_acc}, this effect is reflected in the evolution of the FID score \citep{heusel2017gans}, which initially decreases (indicating better generative quality) and then rises as memorization starts. Notably, we find that this trend negatively correlates with the linear probing accuracy of learned representations. This observation implies that representation quality could potentially serve as an early-stopping criterion to prevent memorization in diffusion models trained on limited data without relying on external models.

Moreover, we show that this early learning behavior can be captured by the transition between unimodal and monotonic curves of representation dynamics during training. Experimentally, we demonstrate this by training EDM-based diffusion models on subsets of CIFAR\citep{krizhevsky2009learning} using $N=2^{12}$ training samples. Specifically, from \Cref{fig:cifar_train_iter}, we observe that the evolution of representation dynamics during training can be divided into two distinct phases:

\begin{itemize}[leftmargin=*]
    \item \textit{Early phase of unimodal representation dynamics.} In the early stage of training, the model learns the underlying low-dimensional data distribution and is able to generalize. As predicted by our theoretical analysis in \Cref{sec:main}, representation follows a unimodal dynamic across noise scales. This unimodal dynamic is clearly observed before training iteration $\mathrm{Iter} \leq 7.5M$ in \Cref{fig:cifar_train_iter} (a). The generalization behavior is further supported by the new outputs of the model as shown in \Cref{fig:cifar_train_iter} (b), which more closely resemble those from a reference generalized model than the nearest neighbors in the training set. Moreover, the peak representation quality improves steadily during this phase as the model better captures the data distribution.
    \item \textit{Late phase of monotonic representation dynamics.} However, as training progresses toward convergence, the model begins to memorize the training samples, resulting in a reduced ability to capture the underlying data distribution. This transition is obvious in the outputs of the model, which increasingly replicate training examples (see \Cref{fig:cifar_train_iter} (b) at $\mathrm{Iter} = 15M,100M$). During this phase, the unimodal representation dynamics give way to a monotonic representation dynamics, where representation quality consistently degrades as the noise level increases. Furthermore, as shown in \Cref{fig:fid_acc}, the learned features become less informative, and the peak probing accuracy begins to decline.
\end{itemize}

\begin{figure*}[t]
\begin{center}

    \begin{subfigure}{0.48\textwidth}
    \includegraphics[width = 0.97\textwidth]{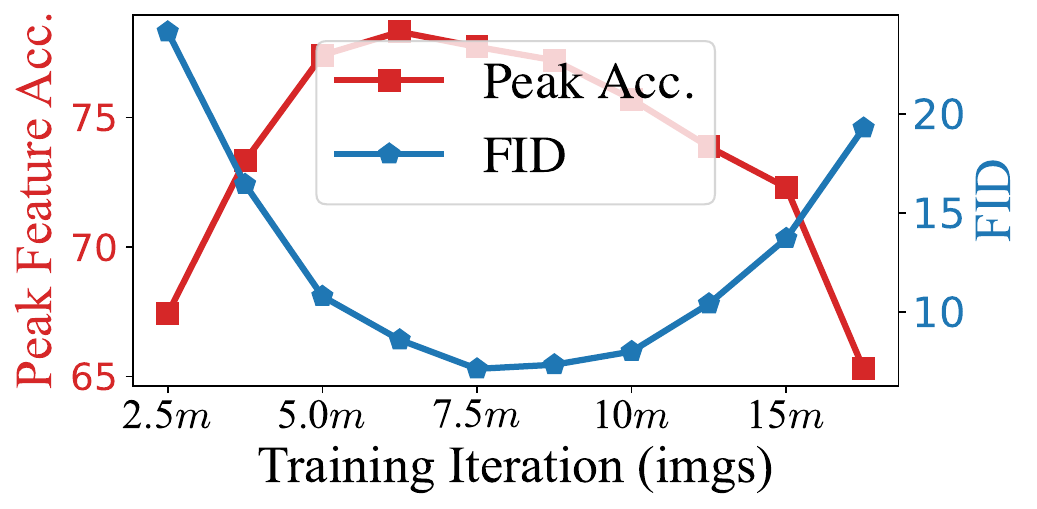}
    \caption{CIFAR10} 
    \end{subfigure} \quad 
    \begin{subfigure}{0.48\textwidth}
    \includegraphics[width = 0.97\textwidth]{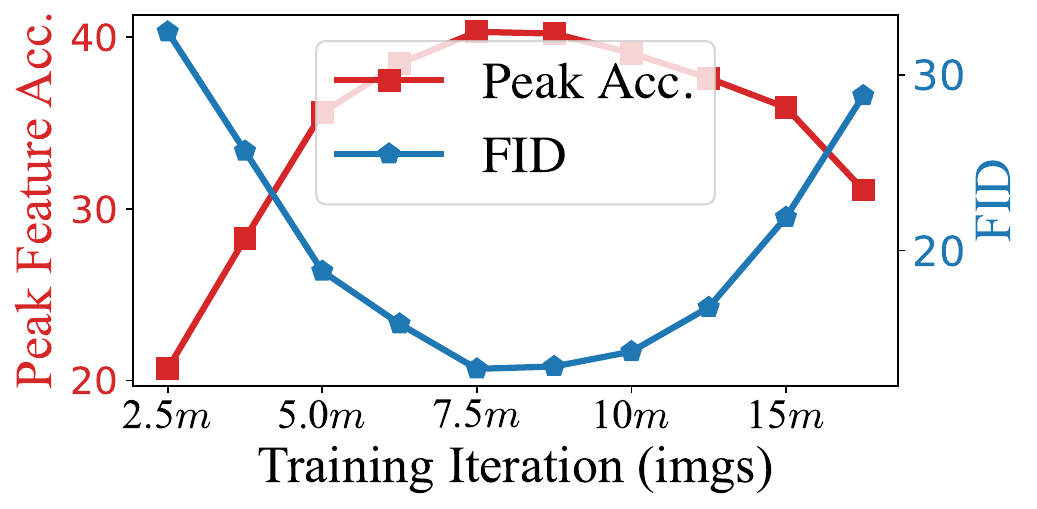}
    \caption{CIFAR100} 
    \end{subfigure}
    \end{center}
    \vspace{-0.05in}
\vspace{-0.2in}
\caption{\textbf{Negative correlation between peak classification accuracy and FID.} We train UNet-128 diffusion models on $N=2^{12}$ training samples from CIFAR10 and CIFAR100. As training progresses, the peak representation accuracy across noise levels shows a consistent negative correlation with FID.}
\vspace{-0.05in}
\label{fig:fid_acc}
\end{figure*}

\begin{figure*}[t]
    \begin{center}
    \includegraphics[width = 0.89\textwidth]{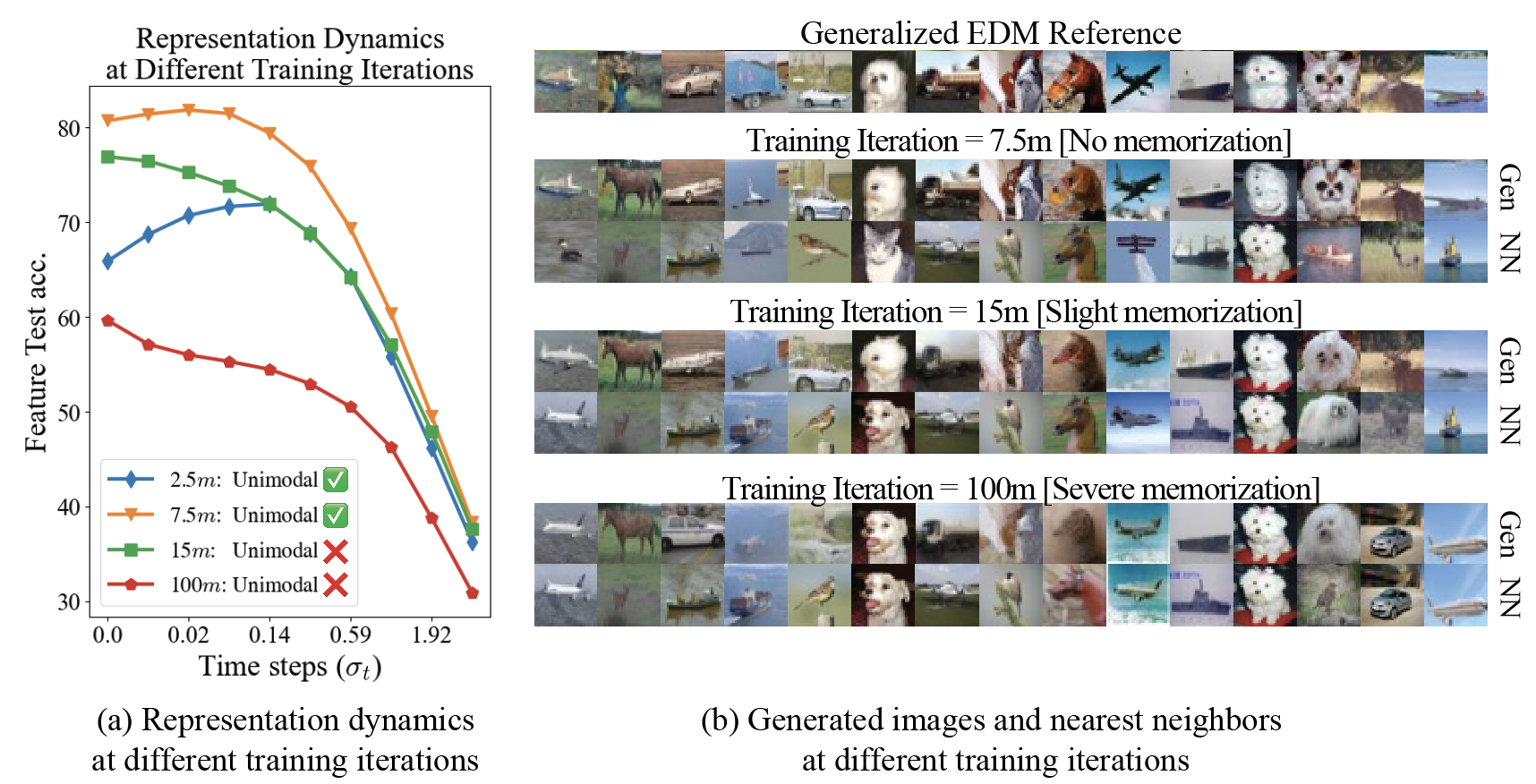}
    \end{center}
    \vspace{-0.05in}
\vspace{-0.2in}
\caption{\textbf{Representation learning and generative performance across training iterations.} We train a UNet-128 diffusion model on $N=2^{12}$ training samples in CIFAR10, monitoring both representation learning and generative performance as training progresses. A clear phase transition is observed: early in training, the representations exhibit a unimodal pattern, and generated samples resemble those of a generalizing EDM model, with no signs of memorization. As training continues, the unimodal pattern gradually transitions to a monotonically decreasing trend, aligning with the model’s shift toward memorizing the training data. "NN" denotes nearest neighbor in the training dataset. Additional results on other datasets are provided in Figures~\ref{fig:pet_train_iter} and~\ref{fig:tiny_train_iter}.}
\vspace{-0.1in}
\label{fig:cifar_train_iter}
\end{figure*}

\vspace{-0.2in}
\section{Discussion}
\vspace{-0.1in}
In this work, we developed a mathematical framework for analyzing the representation dynamics of diffusion models. By introducing the concept of \SNR~under a mixture of low-rank Gaussians, we showed that the widely observed unimodal representation dynamic across noise scales emerges naturally when diffusion models capture the underlying data distribution. This behavior arises from a trade-off between denoising strength and class confidence across noise levels. Beyond theoretical insights, our empirical results demonstrate that the emergence of unimodal representation dynamics is closely linked to the model’s distribution learning and generalization ability. Specifically, this unimodal pattern consistently appears when the model generalizes and gradually fades as the model starts to memorize. Our findings take a step toward bridging the gap between generative modeling and representation learning in diffusion models. We hope this work encourages further exploration into the theoretical foundations and practical applications of diffusion-based representation learning. In particular, we highlight the following future directions:

\begin{itemize}[leftmargin=*]
    \item \textbf{Toward broader theoretical foundations.} While our analysis captures key aspects of representation dynamics, it relies on simplified data assumptions and model formulations that facilitate tractable derivations. Extending this framework to more realistic data distributions and complex model architectures remains an important direction for future work. Moreover, establishing a rigorous theoretical connection between the representation geometry of diffusion models and their phase transition between generalization and memorization remains an important and promising direction for future work.
    
    \item \textbf{Principled diffusion-based representation learning.} While diffusion models have shown strong performance in various representation learning tasks, their application often relies on trial-and-error methods and heuristics. For example, determining the optimal layer and noise scale for feature extraction frequently involves grid searches. Our work provides a theoretical framework to understand representation dynamics across noise scales. A promising future direction is to extend this analysis to include layer-wise dynamics. Combining these insights could pave the way for more principled and efficient approaches to diffusion-based representation learning.

    \item \textbf{Representation alignment for better image generation.} Recent work REPA \citet{yu2024repa} has demonstrated that aligning diffusion model features with features from pre-trained self-supervised foundation models can enhance training efficiency and improve generation quality. By providing a deeper understanding of the representation dynamics in diffusion models, our findings could further advance such representation alignment techniques, facilitating the development of diffusion models with superior training and generation performance.
\end{itemize}

\section*{Acknowledgment}
We acknowledge funding support from NSF CAREER CCF-2143904, NSF CCF-2212066, NSF CCF-2212326, NSF IIS 2312842, NSF IIS 2402950, NSF IIS 2312840, NSF IIS 2402952, ONR N00014-22-1-2529, ONR N000142512339, and MICDE Catalyst Grant. We also thank all the anonymous reviewers for their valuable suggestions and fruitful discussions.

\printbibliography


\newpage
\appendix
The Appendix is organized as follows: in \Cref{app:related}, we discuss related works; in \Cref{app:ax_results}, we present auxiliary findings that complement the main discussion; in \Cref{app:add_exp}, we provide additional complementary experiments; in \Cref{app:exp_detail}, we present the detailed experimental setups for the empirical results in the paper. Lastly, in \Cref{app:proofs}, we provide proof details for \Cref{sec:main}.

\section{Related works}\label{app:related}

\paragraph{Denoising auto-encoders.}
Denoising autoencoders (DAEs) are trained to reconstruct corrupted images to extract semantically meaningful information, which can be applied to various vision \citep{vincent2008extracting, vincent2010stacked} and language downstream tasks \citep{lewis2019bart}. Related to our analysis of the weight-sharing mechanism, several studies have shown that training with a noise scheduler can enhance downstream performance \citep{chandra2014adaptive, geras2014scheduled, zhang2018convolutional}. On the theoretical side, prior works have studied the learning dynamics \citep{pretorius2018learning,steck2020autoencoders} and optimization landscape \citep{kunin2019loss} through the simplified linear DAE models.

\paragraph{Diffusion-based representation learning.} Diffusion-based representation learning \citep{fuest2024diffusion} has demonstrated significant success in various downstream tasks, including image classification \citep{xiang2023denoising, mukhopadhyay2023diffusion, deja2023learning}, segmentation \citep{baranchuk2021label}, correspondence \citep{tang2023emergent}, and image editing \citep{shi2024dragdiffusion}. Using diffusion models for data augmentation has also been shown to improve robustness against covariate shift \citep{sastry2023diffaug}. To further enhance the utility of diffusion features, knowledge distillation \citep{yang2023diffusion, li2023dreamteacher,stracke2024cleandift,luo2024diffusion} methods have been proposed, aiming to bypass the computationally expensive grid search for the optimal $t$ in feature extraction and improving downstream performance. Beyond directly using intermediate features from pre-trained diffusion models, research efforts has also explored novel loss functions \citep{abstreiter2021diffusion, wang2023infodiffusion} and network modifications \citep{hudson2024soda, preechakul2022diffusion} to develop more unified generative and representation learning capabilities within diffusion models. The work \citep{han2025diffusionmodelslearnhidden} investigates whether diffusion models are capable of learning latent inter-feature dependencies underlying image data. Unlike the aforementioned efforts, our work focuses more on understanding the representation dynamic of diffusion models and its relationship with model generalization.

\paragraph{Generalization of Diffusion Models.} In this paper, we examine the relationship between a unimodal curve in representation-probing accuracy and the generalization of diffusion models, a topic of active interest~\citep{kadkhodaie2023generalization, zhang2024emergence, kamb2024analytic}. There has also been notable work on the mechanisms that enable diffusion models to learn the underlying score function from discrete empirical samples~\citep{niedoba2025towards, lukoianov2025locality}, thereby generating novel in-distribution samples (i.e., \emph{generalizing}). We propose a representational perspective on the generalization of diffusion models, together with corresponding measures (unimodality in representation dynamics) for evaluating generalization. 

The unimodality of representation-probing accuracy across time steps is also related to symmetry breaking~\citep{raya2023spontaneous} and early-stopping generalization during sampling~\citep{biroli2024dynamical, sclocchi2025phase} when the model learns empirical score functions. This line of work further connects empirical DMs and dense associative memory (AM) networks through their objectives and sampling behavior~\citep{ambrogioni2024search, lucibello2024exponential, pham2025memorization}, where generalization is interpreted as novel attraction sinks in AM, which is also related to Gaussian and spherical patterns in the data. However, in this paper our focus is on learning a parameterized model with data from a low-dimensional distribution, an assumption also used in~\citep{george2025analysis, achilli2024losing} (which links generalization to adapting to low-dimensional data manifolds and learning local geometric components), rather than on sampling.

\section{Auxiliary results}\label{app:ax_results}
\subsection{Extended results from Section~\ref{subsec:within_model}}

\begin{figure*}[t]
    \begin{center}
    \includegraphics[width = 0.91\textwidth]{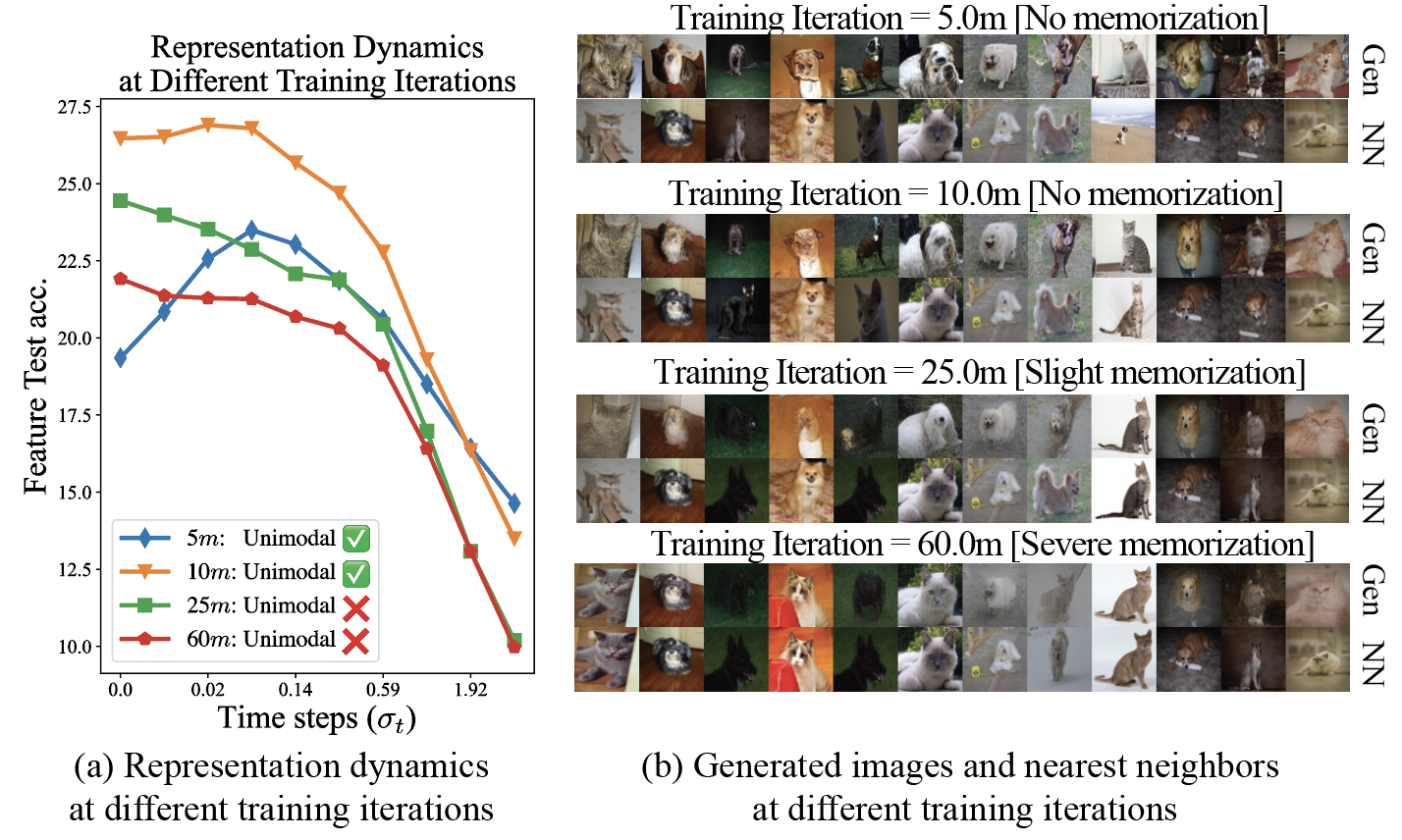}
    \end{center}
    \vspace{-0.05in}
\caption{\textbf{Representation learning and generative performance across training iterations for Oxford-IIIT Pet dataset \citep{parkhi2012cats}.} We train a UNet-128 diffusion model on 3680 training samples in Oxford-IIIT Pet dataset, monitoring both representation learning and generative performance as training progresses. A clear phase transition is observed: early in training, the representations exhibit a unimodal pattern, and generated samples resemble those of a generalizing EDM model, with no signs of memorization. As training continues, the unimodal pattern gradually transitions to a monotonically decreasing trend, aligning with the model’s shift toward memorizing the training data. "NN" denotes nearest neighbor in the training dataset.}
\vspace{-0.1in}
\label{fig:pet_train_iter}
\end{figure*}

\begin{figure*}[t]
    \begin{center}
    \includegraphics[width = 0.91\textwidth]{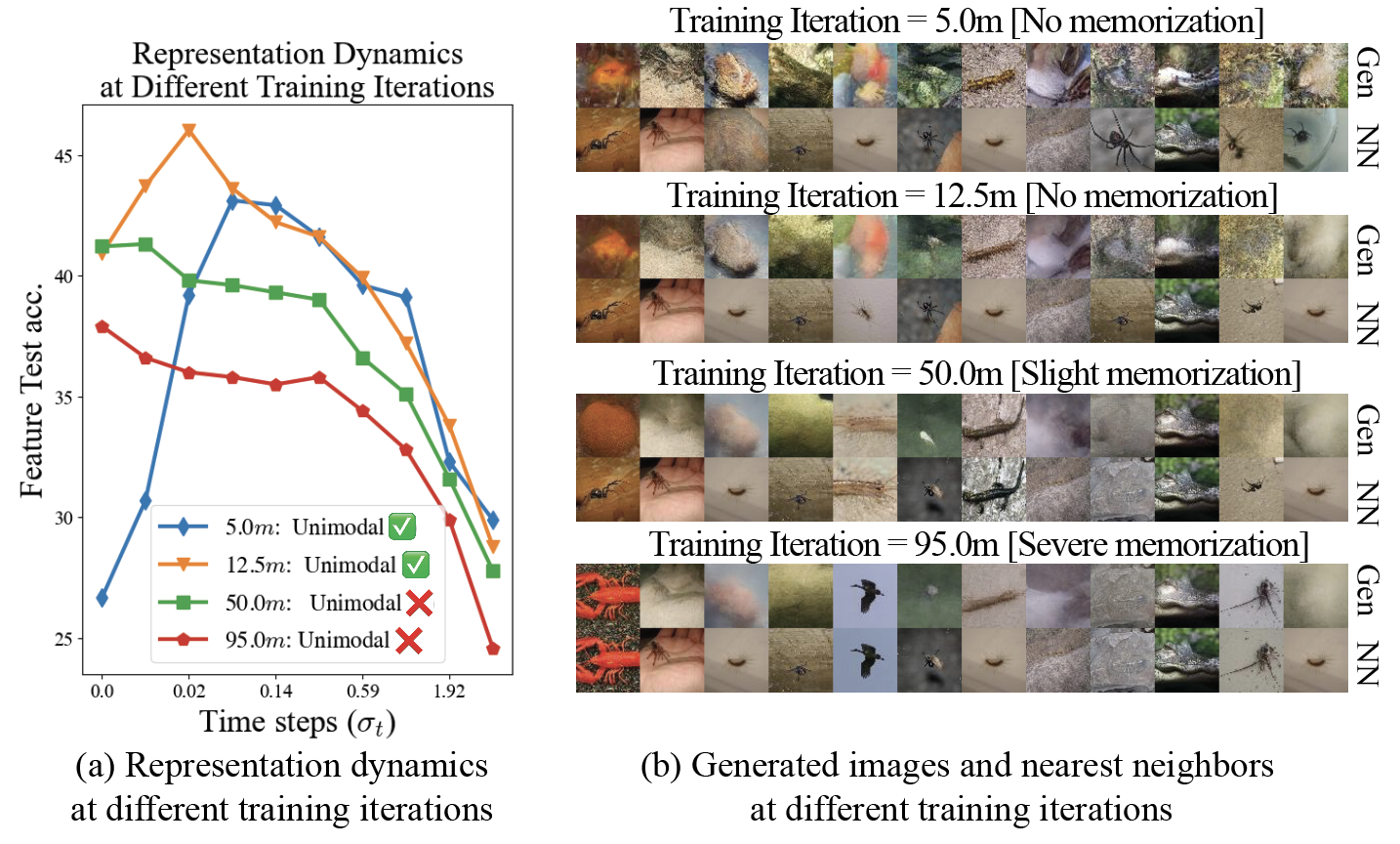}
    \end{center}
    \vspace{-0.05in}
\caption{\textbf{Representation learning and generative performance across training iterations for TinyImageNet \citep{le2015tiny}.} We train a UNet-128 diffusion model on $N=2^{11}$ training samples in TinyImageNet, monitoring both representation learning and generative performance as training progresses. A clear phase transition is observed: early in training, the representations exhibit a unimodal pattern, and generated samples resemble those of a generalizing EDM model, with no signs of memorization. As training continues, the unimodal pattern gradually transitions to a monotonically decreasing trend, aligning with the model’s shift toward memorizing the training data. "NN" denotes nearest neighbor in the training dataset.}
\vspace{-0.1in}
\label{fig:tiny_train_iter}
\end{figure*}

In \Cref{subsec:within_model}, we showed that when training on limited data, the representation dynamics undergo a clear phase transition: from a unimodal pattern to a monotonically decreasing trend, where the diffusion model also exhibits a transition from generalization to increasingly memorize the training data. Here, we provide additional empirical evidence supporting this insight.

We train UNet-based diffusion models using the DDPM++ architecture with the EDM configuration \citep{karras2022elucidating} on the Oxford-IIIT Pet \citep{parkhi2012cats} and TinyImageNet \citep{le2015tiny} datasets, using training subsets of 3680 and 2048 images, respectively. Throughout training, we monitor both the evolution of representation dynamics and generative outputs. As shown in \Cref{fig:pet_train_iter} and \Cref{fig:tiny_train_iter}, consistent with our findings in \Cref{subsec:within_model}, we observe that in the early stages of training, the model exhibits a clear unimodal pattern and generalizes well. As training continues, this unimodal structure gradually shifts into a monotonically decaying trend, and the models start to increasingly replicate training examples.

\subsection{Disentangling the role of input noise in representation dynamics}
One might argue that the declining portion of the unimodal curve is simply due to the increasing noise level $\sigma_t$, which makes the input $\bm{x}_t$ progressively noisier, leading to a natural drop in classification accuracy. However, we show that this noise-induced degradation alone does not account for the observed representation dynamics.

Our theoretical analysis in \Cref{sec:main} attributes the unimodal pattern to a fundamental trade-off between denoising rate and class confidence rate across noise levels. To validate this explanation and disentangle the effect of additive Gaussian noise, we conduct experiments where feature extraction is performed directly on clean inputs $\bm{x}_0$ rather than noisy inputs $\bm{x}_t$. We show that the unimodal behavior remains clearly observable even in the absence of injected noise, indicating that the dynamics are not solely a consequence of input corruption. In fact, one can verify that $\mathrm{SNR}(\hat{\bm x}_{\bm\theta}^{\star}(\bm x_0), t)$ also exhibits a unimodal trend—this can be shown through an analysis analogous to that of \Cref{lem:main} for $\mathrm{SNR}(\hat{\bm x}_{\bm \theta}^{\star}(\bm x_t), t)$.

\begin{figure*}[t]
    \begin{center}
    \begin{subfigure}{0.48\textwidth}
    \includegraphics[width = 0.995\textwidth]{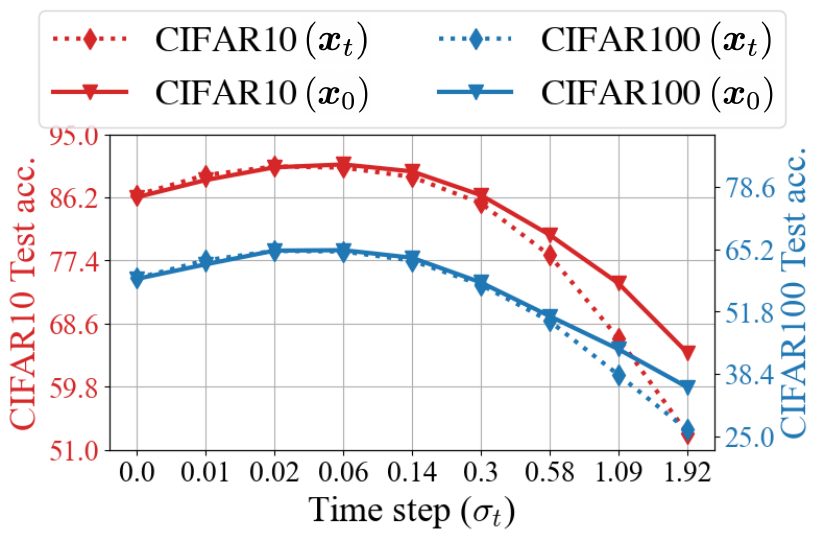}
    \caption{EDM} 
    \end{subfigure} \quad 
    \begin{subfigure}{0.48\textwidth}
    \includegraphics[width = 0.995\textwidth]{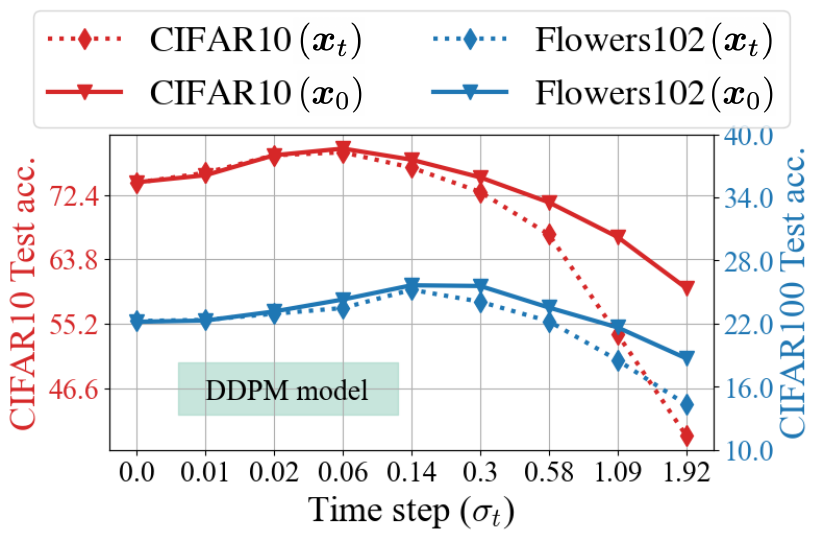}
    \caption{Classical DDPM} 
    \end{subfigure}
    \end{center}
    \vspace{-0.1in}
\caption{\textbf{Unimodal representation dynamics persist when using clean inputs $\bm{x}_0$.} We train EDM and DDPM models on CIFAR and Flowers-102 datasets and evaluate feature quality using both noisy inputs $\bm{x}_t$ and clean inputs $\bm{x}_0$. Across all settings, the unimodal trend is consistently observed.}
\label{fig:clean_image}
\end{figure*}

To support this claim, we train EDM (with the VP configuration) \citep{karras2022elucidating} on CIFAR datasets and classical DDPM \citep{ho2020denoising} on CIFAR10 and Flowers-102 \citep{nilsback2008automated}, and evaluate the representation quality using both noisy inputs $\bm{x}_t$ and clean inputs $\bm{x}_0$, as shown in \Cref{fig:clean_image}. For \Cref{fig:clean_image}(a), we select the layer that yields the best accuracy, while for \Cref{fig:clean_image}(b), we directly use the bottleneck layer to demonstrate that the observed unimodal behavior is not sensitive to layer choice. Across all settings, the unimodal representation dynamics remain clearly visible, reinforcing that additive Gaussian noise is not the sole factor responsible for this phenomenon.

\begin{figure*}[t]
    \begin{center}
    \begin{subfigure}{0.98\textwidth}
    \includegraphics[width = 0.985\textwidth]{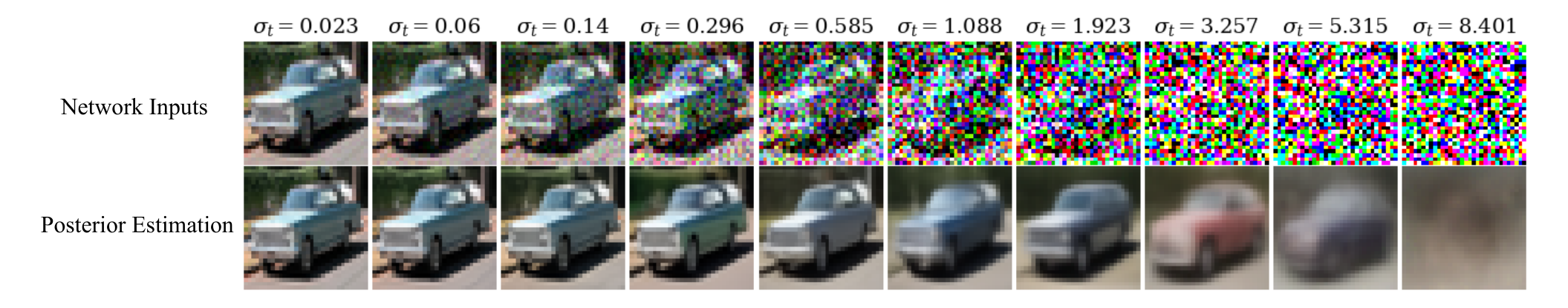}
    \caption{$\hat{\bm x}_{\bm \theta}(\bm x_t, t)$: Posterior estimation using \textbf{noise image} as inputs.} 
    \end{subfigure} \quad 
    \begin{subfigure}{0.98\textwidth}
    \includegraphics[width = 0.985\textwidth]{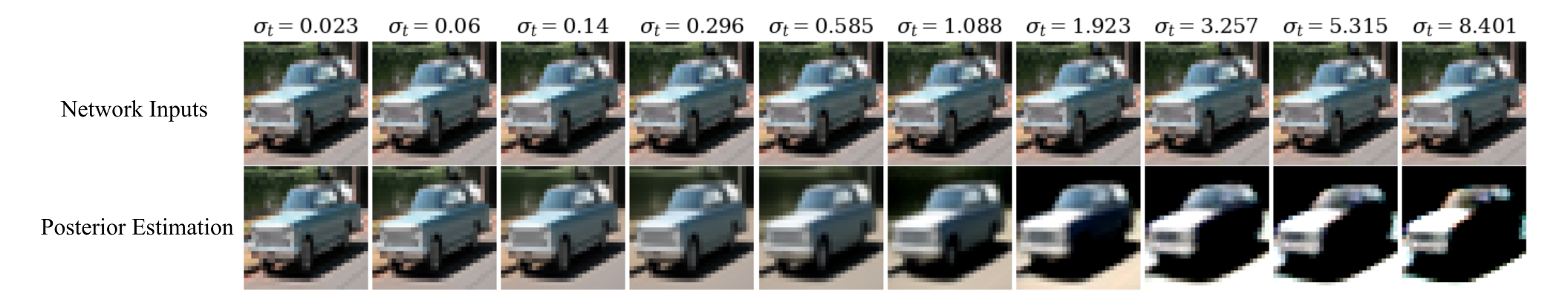}
    \caption{$\hat{\bm x}_{\bm \theta}(\bm x_0, t)$: Posterior estimation using \textbf{clean image} as inputs.} 
    \end{subfigure}
    \end{center}
    \vspace{-0.1in}
\caption{\textbf{Fine-to-coarse shift in posterior estimation.} We use a pre-trained DDPM diffusion model on CIFAR10 to visualize posterior estimation for clean inputs and noisy inputs across varying noise scales $\sigma_t$. We can observe seemingly fine-to-coarse shifts in both figures.}
\label{fig:clean_vs_noise}
\end{figure*}

We also visualize posterior estimation $\hat{\bm{x}}_{\bm{\theta}}$ across noise scales using both noisy and clean inputs. Since diffusion models are trained to approximate the posterior mean at different noise levels, their representation features emerge as intermediate products of this denoising process. As such, improvements or degradations in representation quality should be mirrored in the posterior estimates.

We visualize the posterior estimation results for clean inputs ($\hat{\bm{x}}_{\bm{\theta}}(\bm{x}_0, t)$) and noisy inputs ($\hat{\bm{x}}_{\bm{\theta}}(\bm{x}_t, t)$) across varying noise scales $\sigma_t$ in \Cref{fig:clean_vs_noise}. In both cases, the posterior outputs undergo a clear fine-to-coarse transition as $\sigma_t$ increases. This supports our theoretical claim that as noise grows, class-irrelevant attributes are gradually removed. The peak in representation quality occurs at an intermediate point where class-essential structure is preserved while irrelevant details are suppressed. When $\sigma_t$ becomes too large, class confidence rate drops significantly, resulting in poor representations. The additive noise $\sigma_t \bm{\epsilon}$ merely accelerates this degradation but is not the root cause.

\subsection{Weight sharing in diffusion models facilitates representation learning}\label{subsec:weight_share}

\begin{figure*}[t]
    \begin{center}
    \begin{subfigure}{0.47\textwidth}
    \includegraphics[width = 0.955\textwidth]{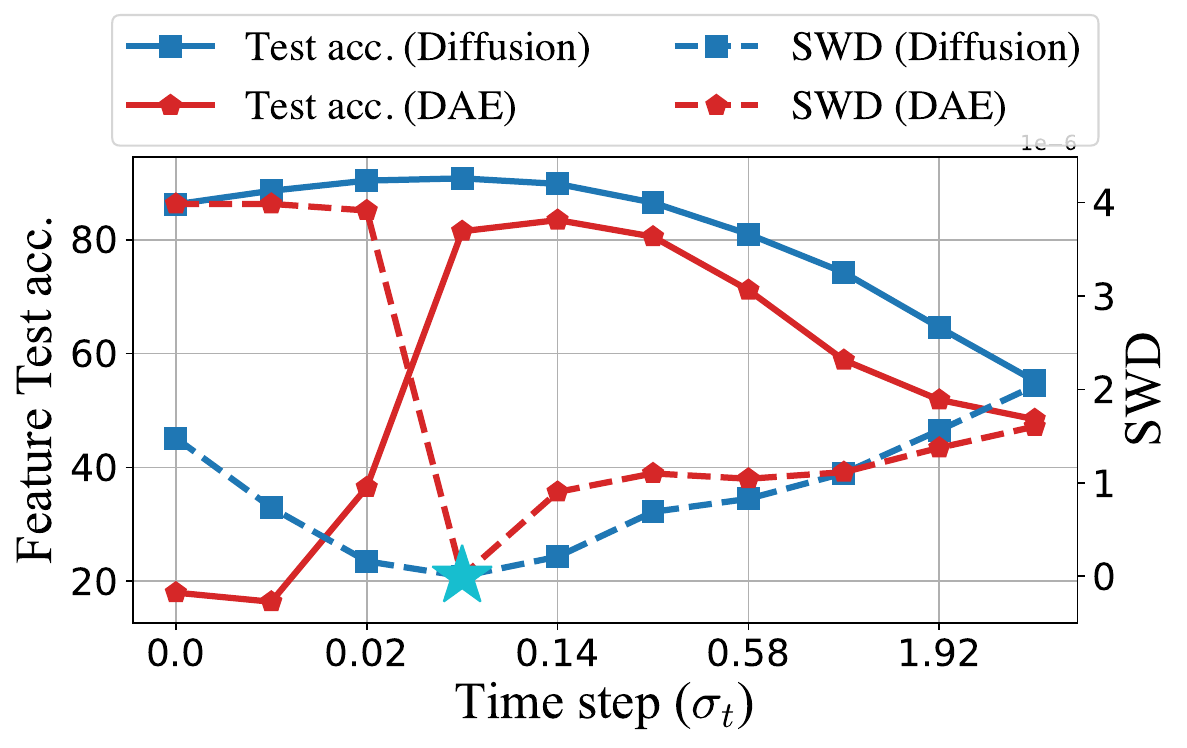}
    \caption{CIFAR10} 
    \end{subfigure} \quad 
    \begin{subfigure}{0.47\textwidth}
    \includegraphics[width = 0.955\textwidth]{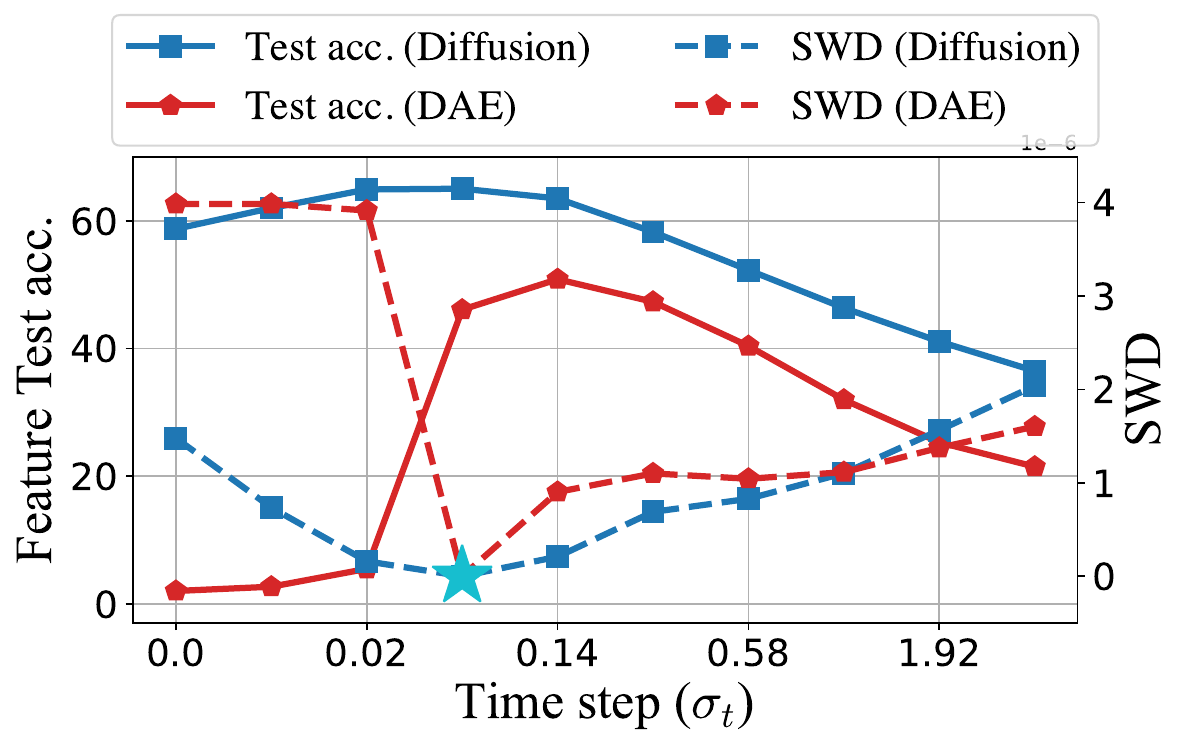}
    \caption{CIFAR100} 
    \end{subfigure}
    \end{center}
    \vspace{-0.1in}
\caption{\textbf{Diffusion models exhibit higher and smoother feature accuracy and similarity compared to individual DAEs.} We train DDPM-based diffusion models and individual DAEs on the CIFAR datasets and evaluate their representation learning performance. Feature accuracy, and feature differences from the optimal features (indicated by {\color{cyan} $\star$}) are plotted against increasing noise levels. The results reveal an inverse correlation between feature accuracy and feature differences, with diffusion models achieving both higher/smoother accuracy and smaller/smoother feature differences compared to DAEs.}
\vspace{-0.05in}
\label{fig:dae_diffusion}
\end{figure*}

While our theoretical analysis captures the emergence of unimodal representation dynamics under an idealized network parameterization, an important future direction is to extend this framework to deeper and more complex architectures. Real-world diffusion models often involve highly complex feature transformations, and understanding how these interact with noise scales to influence representation quality remains an open and valuable avenue for exploration.

In this section, we present some interesting preliminary results we found that may potentially explain why diffusion models outperform classical denoising autoencoders (DAEs) in representation learning: although both share the same denoising objective (\ref{eq:dae_loss}), diffusion models demonstrate superior feature learning capabilities largely due to their inherent weight-sharing mechanism. Specifically, by minimizing the loss across all noise levels, diffusion models enable parameter sharing and interaction among denoising subcomponents, effectively creating an implicit "ensemble" effect. This interaction enhances feature consistency and robustness across noise scales, leading to significantly improved representation quality compared to DAEs \citep{chen2024deconstructing}, as illustrated in \Cref{fig:dae_diffusion}.

To test this, we trained 10 DAEs, each specialized for a single noise level, alongside a DDPM-based diffusion model on CIFAR10 and CIFAR100. We compared feature quality using linear probing accuracy and feature similarity relative to the optimal features at $\sigma_t = 0.06$ (where accuracy peaks) via sliced Wasserstein distance (\SWD) \citep{doan2024assessing}.

The results in \Cref{fig:dae_diffusion} confirm the advantage of diffusion models over DAEs. Diffusion models consistently outperform DAEs, particularly in low-noise regimes where DAEs collapse into trivial identity mappings. In contrast, diffusion models leverage weight-sharing to preserve high-quality features, ensuring smoother transitions and higher accuracy as noise increases. This advantage is further supported by the \SWD~curve, which reveals an inverse correlation between feature accuracy and feature differences. Notably, diffusion model features remain significantly closer to their optimal state across all noise levels, demonstrating superior representational capacity.

Our finding also aligns with prior results that sequentially training DAEs across multiple noise levels improves representation quality \citep{chandra2014adaptive,geras2014scheduled,zhang2018convolutional}. Our ablation study further confirms that multi-scale training is essential for improving DAE performance on classification tasks in low-noise settings (details in \Cref{app:add_exp}, \Cref{tab:dae_trial}).

Beyond the implicit feature ensembling effect, we further introduce a straightforward method that explicitly ensembles features from multiple noise levels to enhance downstream task performance. Our experiments demonstrate that this approach significantly improves robustness against label noise in classification tasks, both in pre-training and transfer learning settings. For detailed methods and results, we refer interested readers to \Cref{appsubsec:exp_ensemeble}.

\subsection{Feature ensembling across timesteps improves representation robustness}\label{appsubsec:exp_ensemeble}

Our theoretical insights in \Cref{sec:main} imply that features extracted at different timesteps capture varying levels of granularity. Given the high linear separability of intermediate features, we propose a simple ensembling approach across multiple timesteps to construct a more holistic representation of the input. Specifically, in addition to the optimal timestep, we extract feature representations at four additional timesteps—two from the coarse (larger $\sigma_t$) and two from the fine-grained (smaller $\sigma_t$) end of the spectrum. We then train linear probing classifiers for each set and, during inference, apply a soft-voting ensemble by averaging the predicted logits before making a final decision.(experiment details in \Cref{app:exp_detail})

We evaluate this ensemble method against results obtained from the best individual timestep, as well as a self-supervised method MAE \citep{he2022masked}, on both the pre-training dataset and a transfer learning setup. The results, reported in \Cref{tab:ensemble_results} and \Cref{tab:ensemble_results_transfer}, demonstrate that ensembling significantly enhances performance for both EDM \citep{karras2022elucidating} and DiT \citep{peebles2023scalable}, consistently outperforming their vanilla diffusion model counterparts and often surpassing MAE. More importantly, ensembling substantially improves the robustness of diffusion models for classification under label noise.

 \begin{table}[t]
\centering
\resizebox{0.48\linewidth}{!}{
	\begin{tabular}	{l c c c c c } 
		
        \toprule
		 \textbf{Method} & \multicolumn{5}{c}{\textit{MiniImageNet$^\star$} Test Acc. \%}\\
		 \midrule
		 \textbf{Label Noise} & Clean & 20\% & 40\% & 60\% & 80\% \\
   
          
         
         \midrule
         MAE & 73.7 & 70.3 & 67.4 & 62.8 & 51.5 \\ 
   
          EDM & 67.2 & 62.9 & 59.2 & 53.2 & 40.1  \\ 
          
		 \textbf{EDM (Ensemble)} & 72.0 & 67.8 & 64.7 & 60.0 & 48.2 \\

          DiT & 77.6 & 72.4 & 68.4 & 62.0 & 47.3  \\ 
          
		 \textbf{DiT (Ensemble)} & \textbf{78.4} & \textbf{75.1} & \textbf{71.9} & \textbf{66.7} & \textbf{56.3} \\ 
         
        \bottomrule
	\end{tabular}}
    \caption{\textbf{Comparison of test performance across different methods under varying label noise levels.} All compared models are publicly available and pre-trained on ImageNet-1K \citep{deng2009imagenet}, evaluated using MiniImageNet classes. Bold font highlights the best result in each scenario.}
    \label{tab:ensemble_results}
\end{table}

 \begin{table*}[t]
\centering
\resizebox{0.98\linewidth}{!}{
	\begin{tabular}	{l c c c c c | c c c c c | c c c c c} 
		
        \toprule
		 \textbf{Method} & \multicolumn{15}{c}{Transfer Test Acc. \%}\\
         \midrule
        & \multicolumn{5}{c|}{\textbf{CIFAR100}} & \multicolumn{5}{c|}{\textbf{DTD}} & \multicolumn{5}{c}{\textbf{Flowers102}} \\  
		 \textbf{Label Noise} & Clean & 20\% & 40\% & 60\% & 80\% & Clean & 20\% & 40\% & 60\% & 80\% & Clean & 20\% & 40\% & 60\% & 80\% \\

		 \midrule
         MAE & 63.0 & 58.8 & 54.7 & 50.1 & 38.4 & 61.4 & 54.3 & 49.9 & 40.5 & 24.1 & 68.9 & 55.2 & 40.3 & 27.6 & 9.6 \\ 
   
          EDM & 62.7 & 58.5 & 53.8 & 48.0 & 35.6 & 54.0 & 49.1 & 45.1 & 36.4 & 21.2 & 62.8 & 48.2 & 37.2 & 24.1 & 9.7 \\ 
          
		 \textbf{EDM (Ensemble)} & \textbf{67.5} & \textbf{64.2} & \textbf{60.4} & \textbf{55.4} & \textbf{43.9} & 55.7 & 49.5 & 45.2 & 37.1 & 22.0 & 67.8 & 53.9 & 41.5 & 25.0 & 10.4 \\

          DiT & 64.2 & 58.7 & 53.5 & 46.4 & 32.6 & 65.2 & 59.7 & 53.0 & 43.8 & 27.0 & 78.9 & 65.2 & 52.4 & 34.7 & 13.3 \\ 
          
		 \textbf{DiT (Ensemble)} & 66.4 & 61.8 & 57.6 & 51.3 & 39.2 & \textbf{65.3} & \textbf{60.6} & \textbf{56.1} & \textbf{46.3} & \textbf{30.6} & \textbf{79.7} & \textbf{67.0} & \textbf{54.6} & \textbf{36.6} & \textbf{14.7}  \\ 
         
        \bottomrule
	\end{tabular}}
    \caption{\textbf{Comparison of transfer learning performance across different methods under varying label noise levels.} All compared models are publicly available and pre-trained on ImageNet-1K \citep{deng2009imagenet}, evaluated on different downstream datasets. Bold font highlights the best result in each scenario.  }
    \label{tab:ensemble_results_transfer}
\end{table*}

\section{Additional Experiments}\label{app:add_exp}

\paragraph{Validation of $\hat{\bm x}_{\textit{approx}}^{\star}$ approximation in \Cref{app:thm1_proof}.}
\begin{figure}[t]
    \centering
    \includegraphics[width=0.23\linewidth]{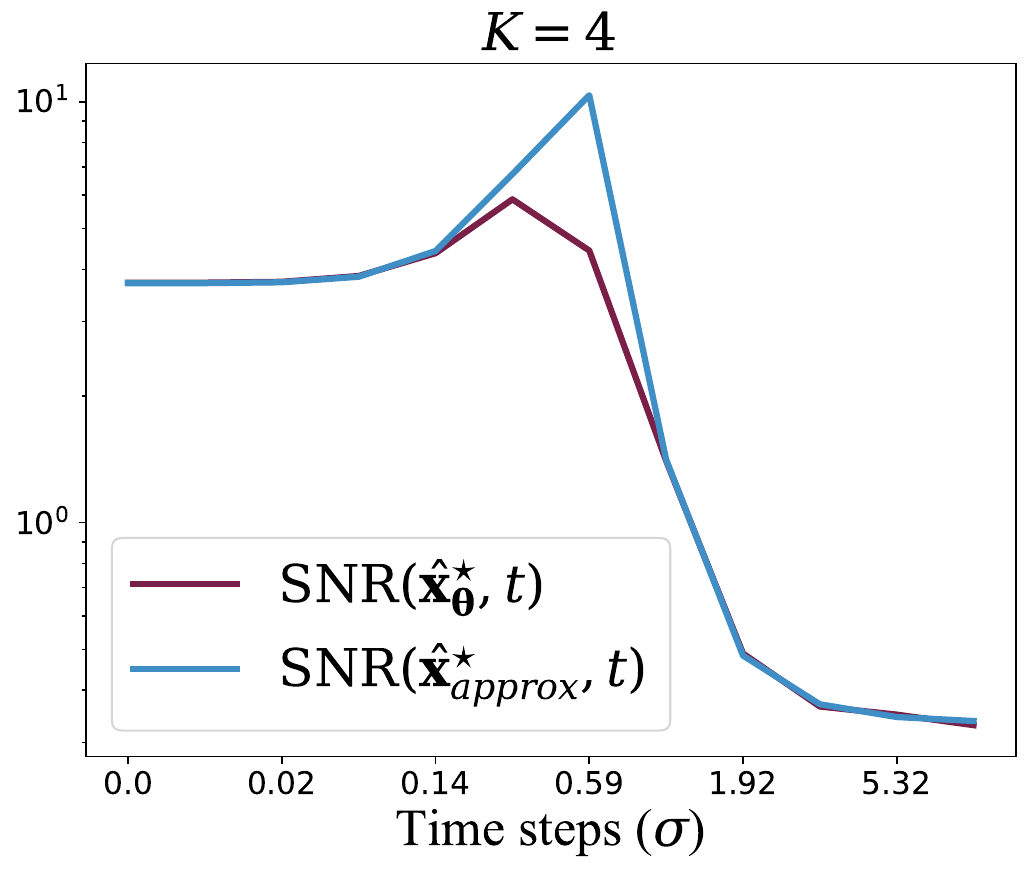}
    \includegraphics[width=0.23\linewidth]{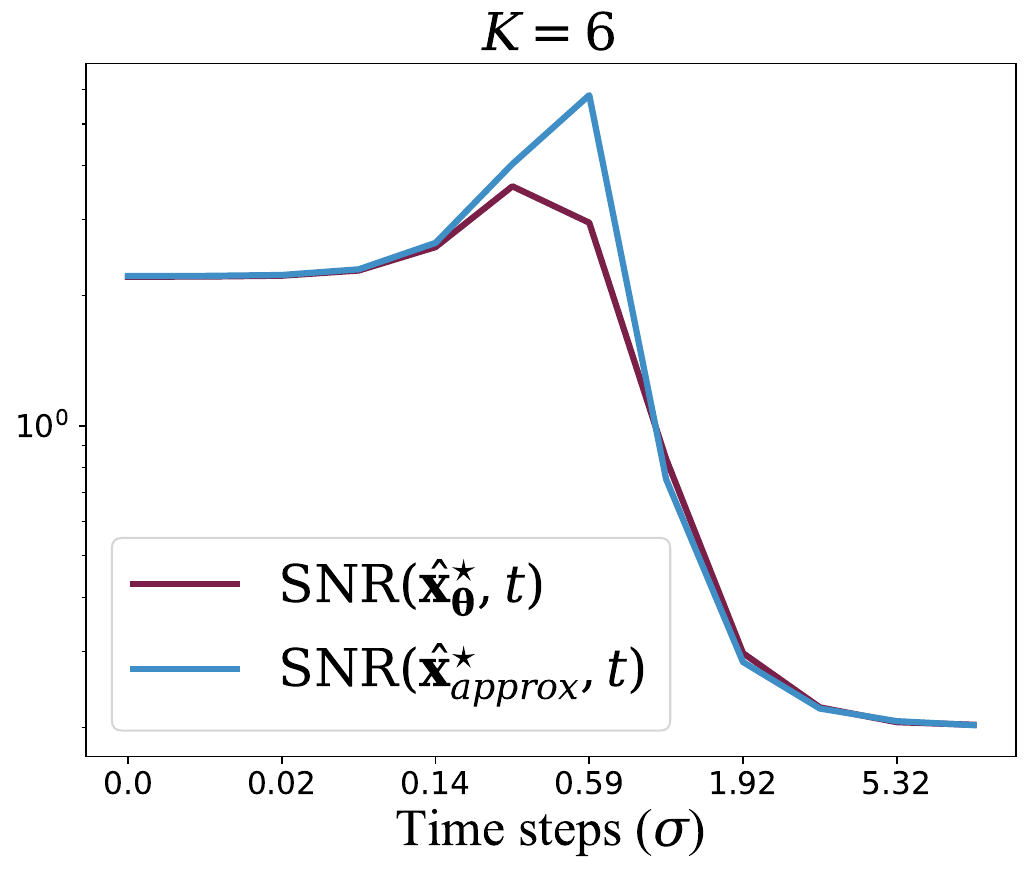}
    \includegraphics[width=0.23\linewidth]{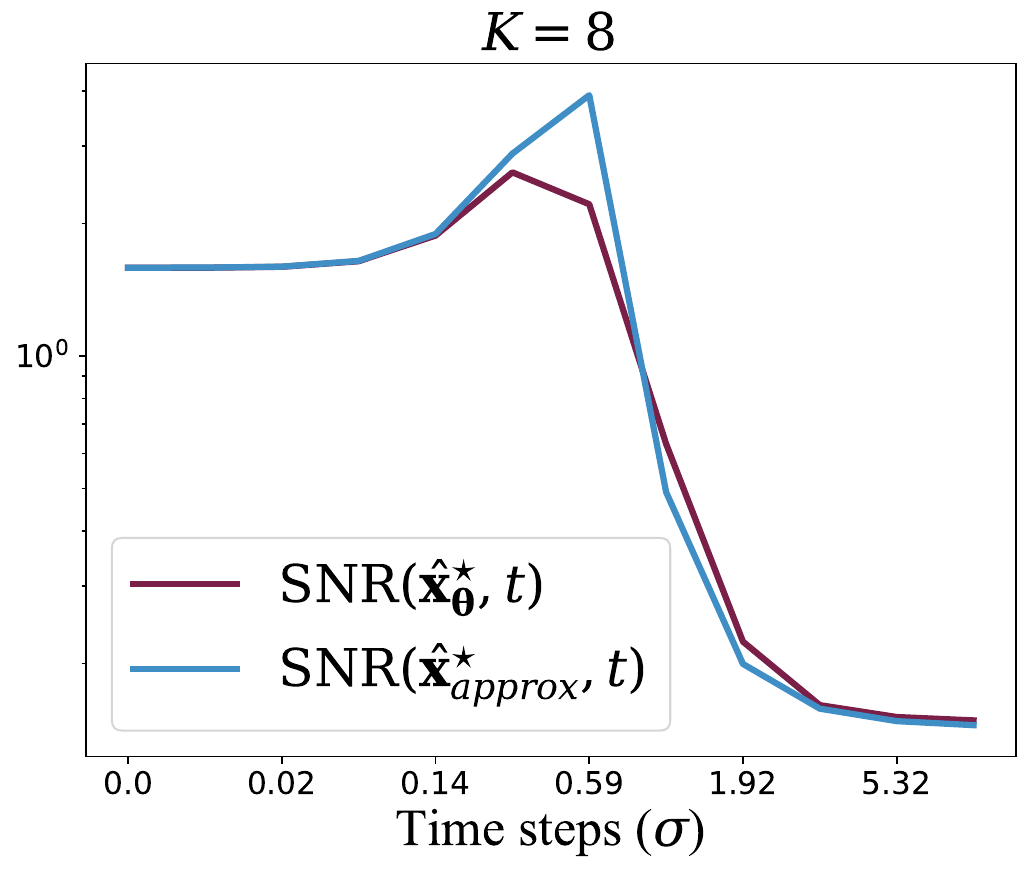}
    \includegraphics[width=0.23\linewidth]{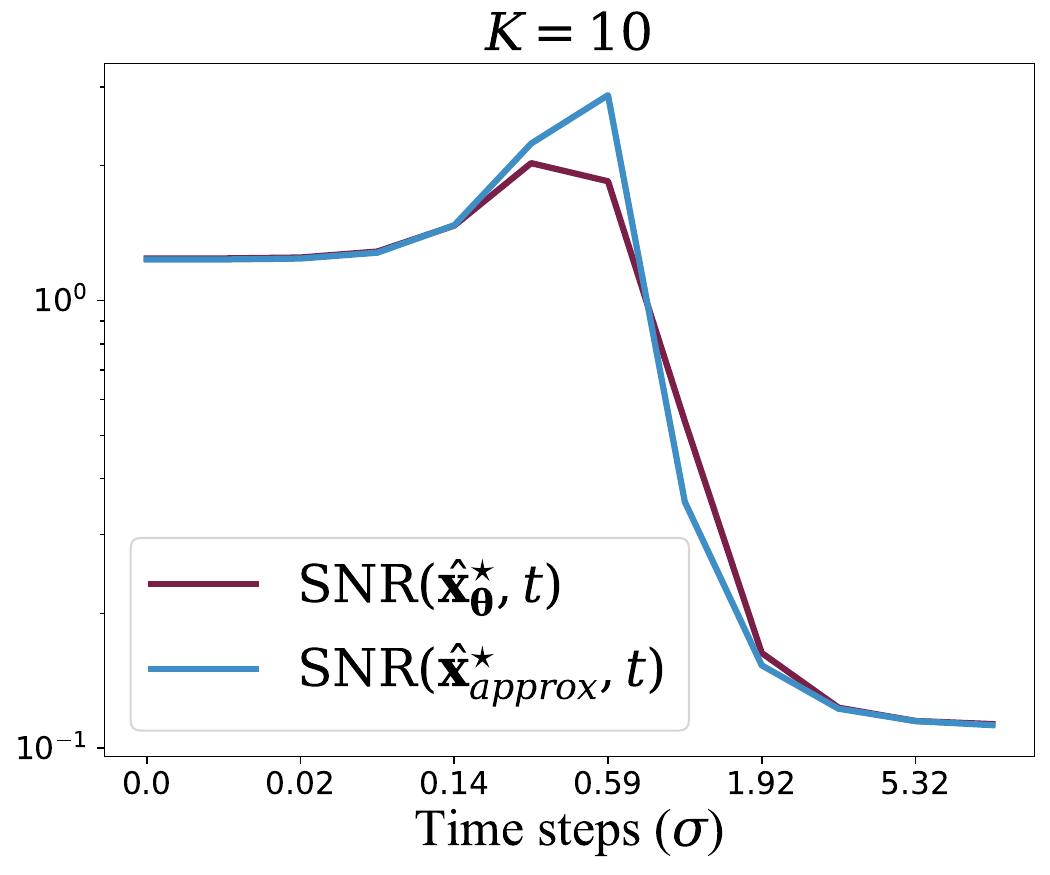}

    \includegraphics[width=0.23\linewidth]{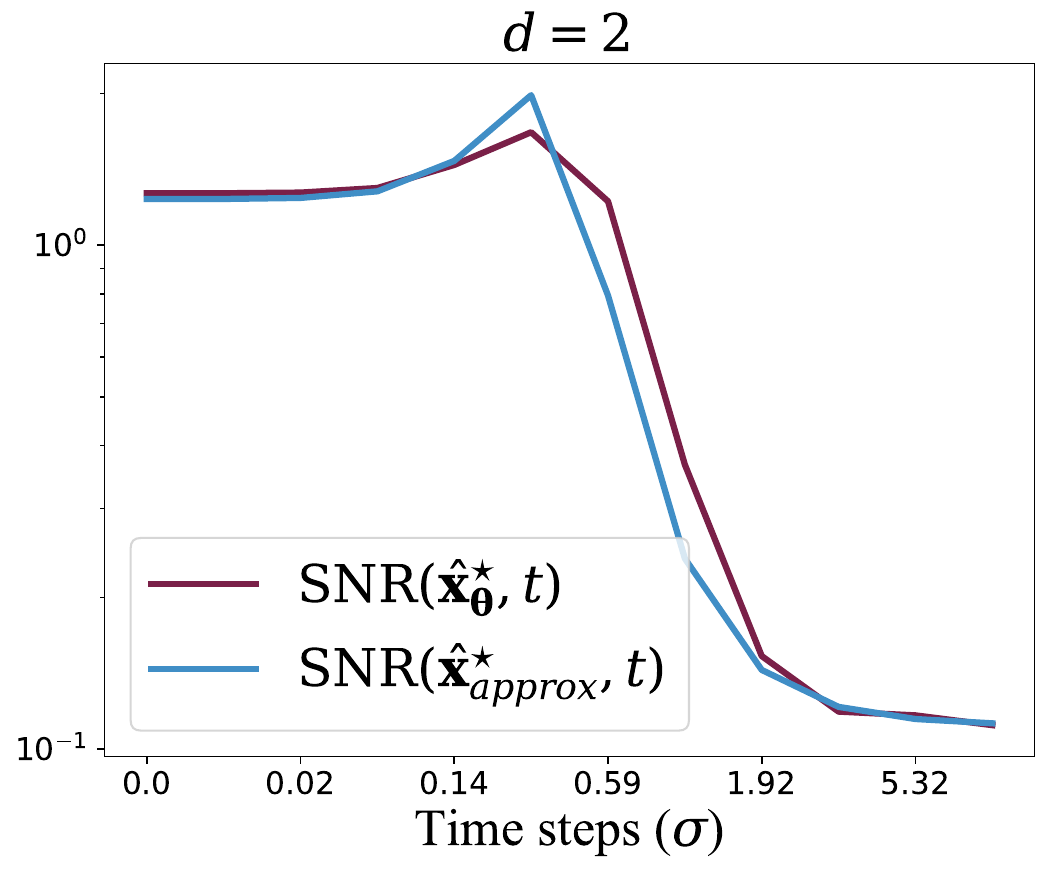}
    \includegraphics[width=0.23\linewidth]{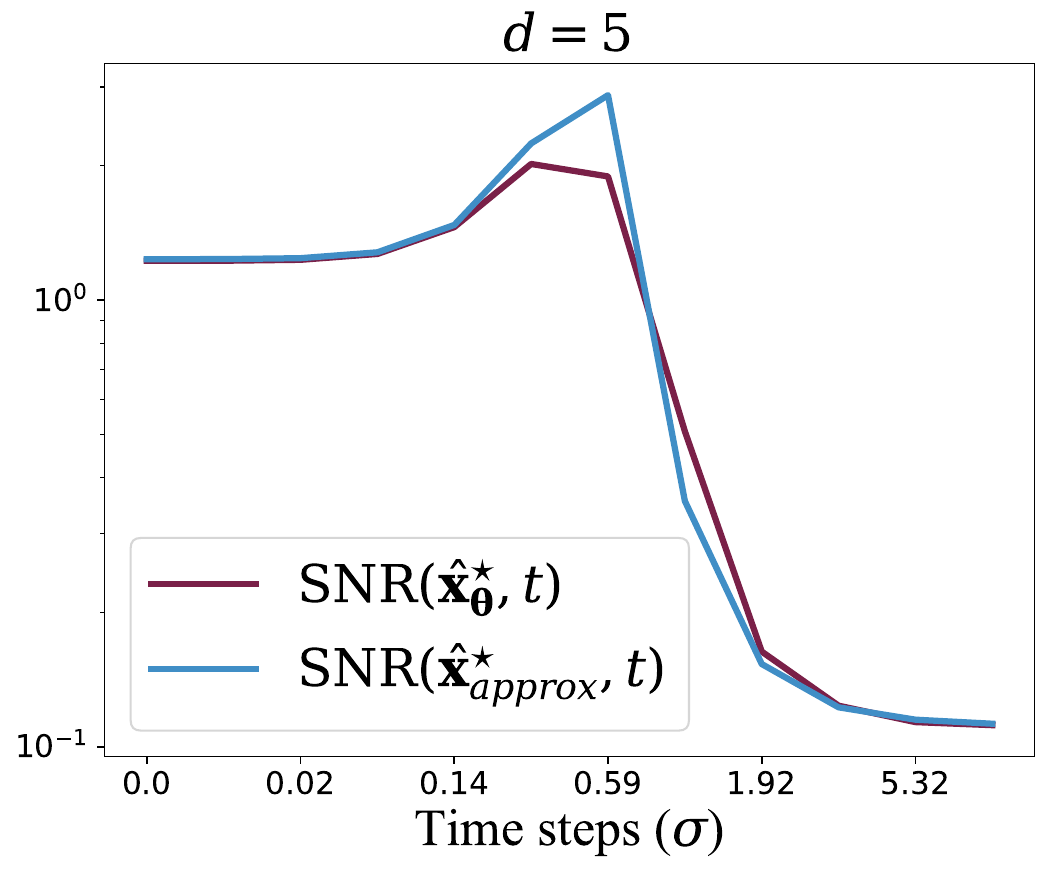}
    \includegraphics[width=0.23\linewidth]{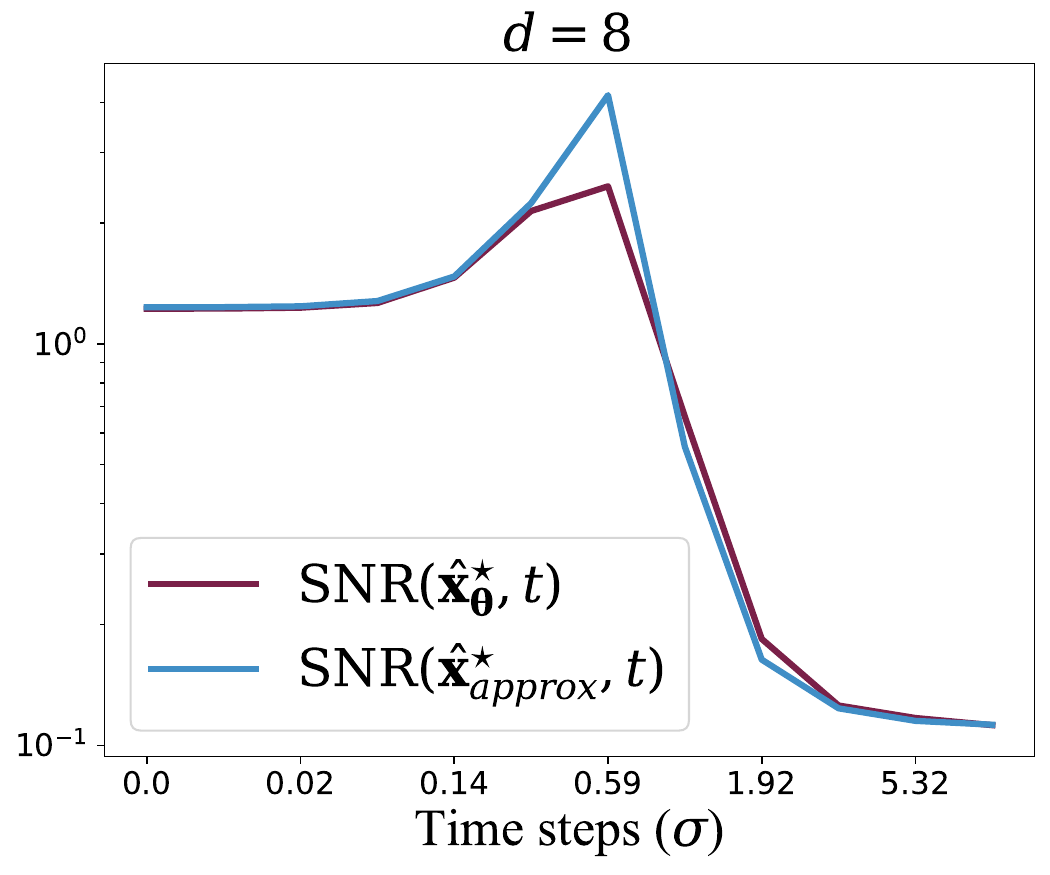}
    \includegraphics[width=0.23\linewidth]{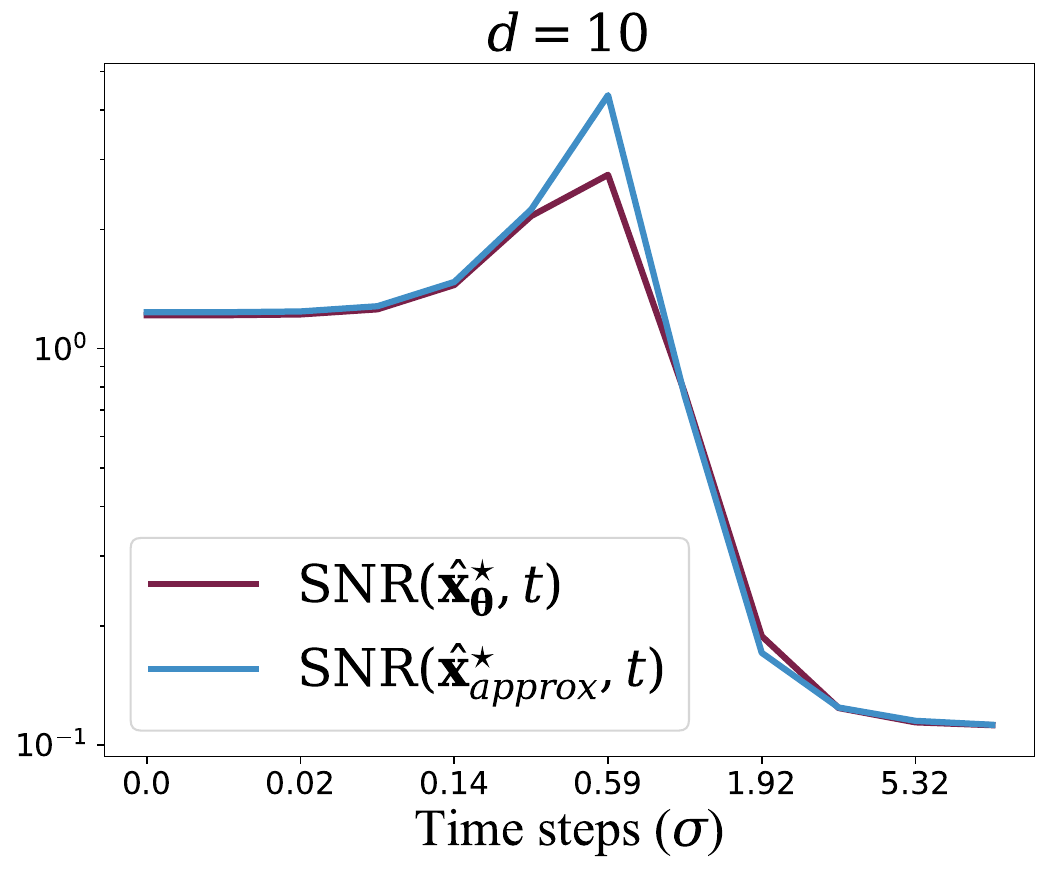}

    \includegraphics[width=0.23\linewidth]{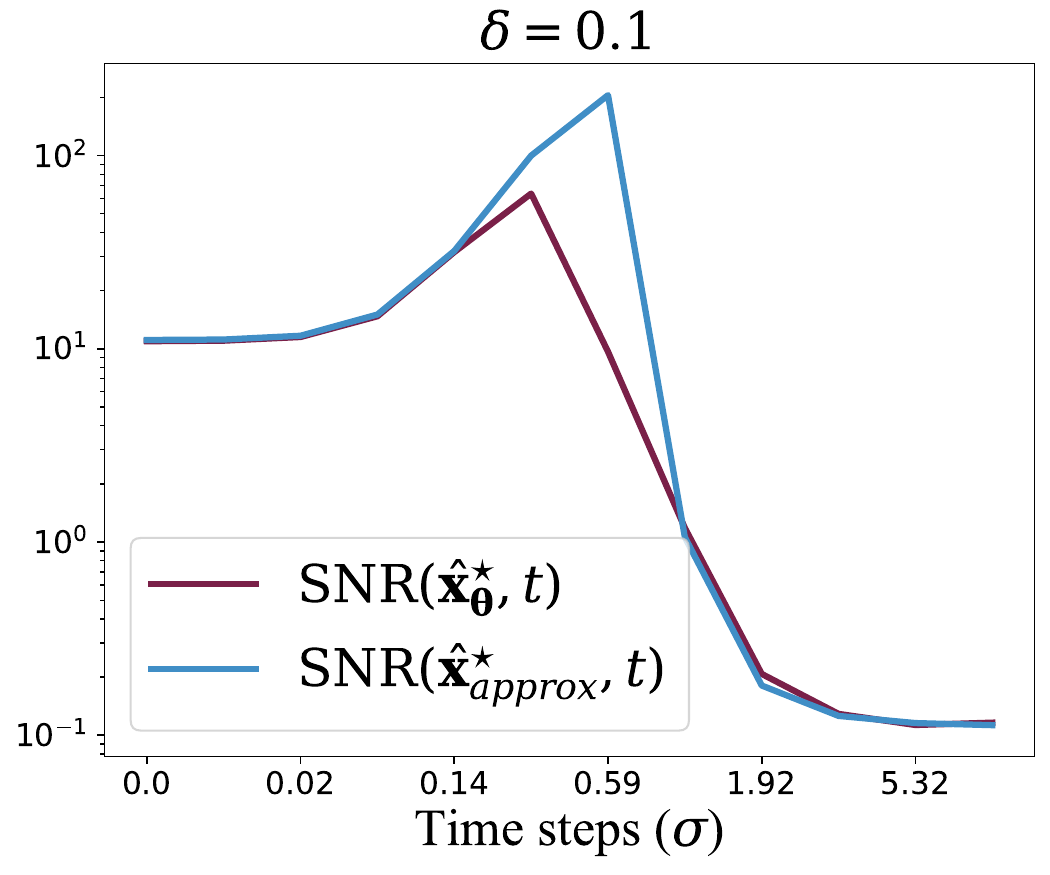}
    \includegraphics[width=0.23\linewidth]{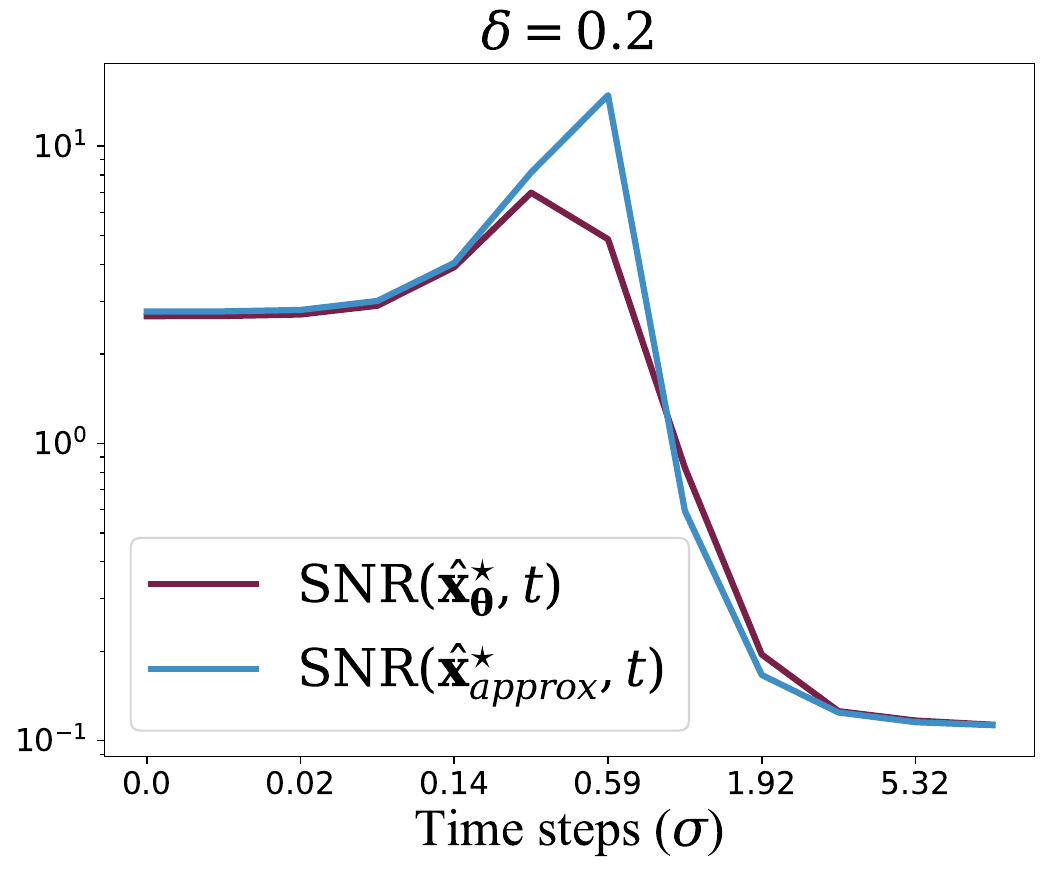}
    \includegraphics[width=0.23\linewidth]{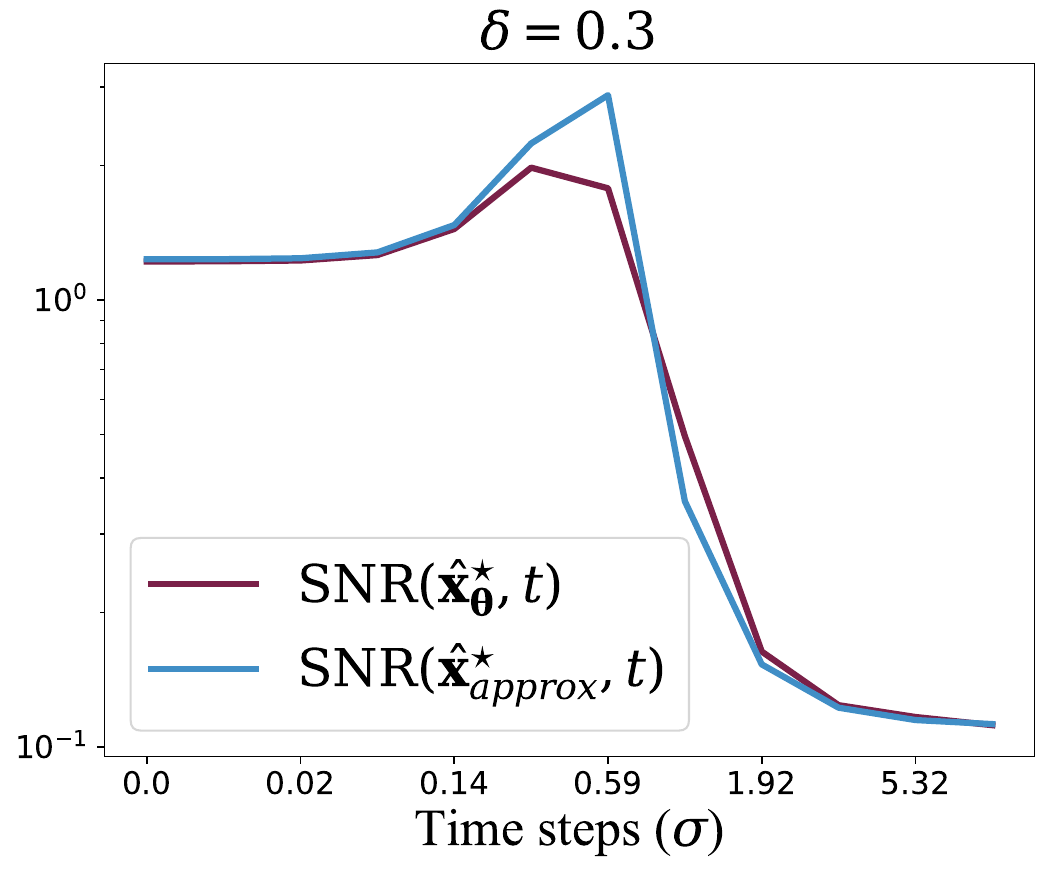}
    \includegraphics[width=0.23\linewidth]{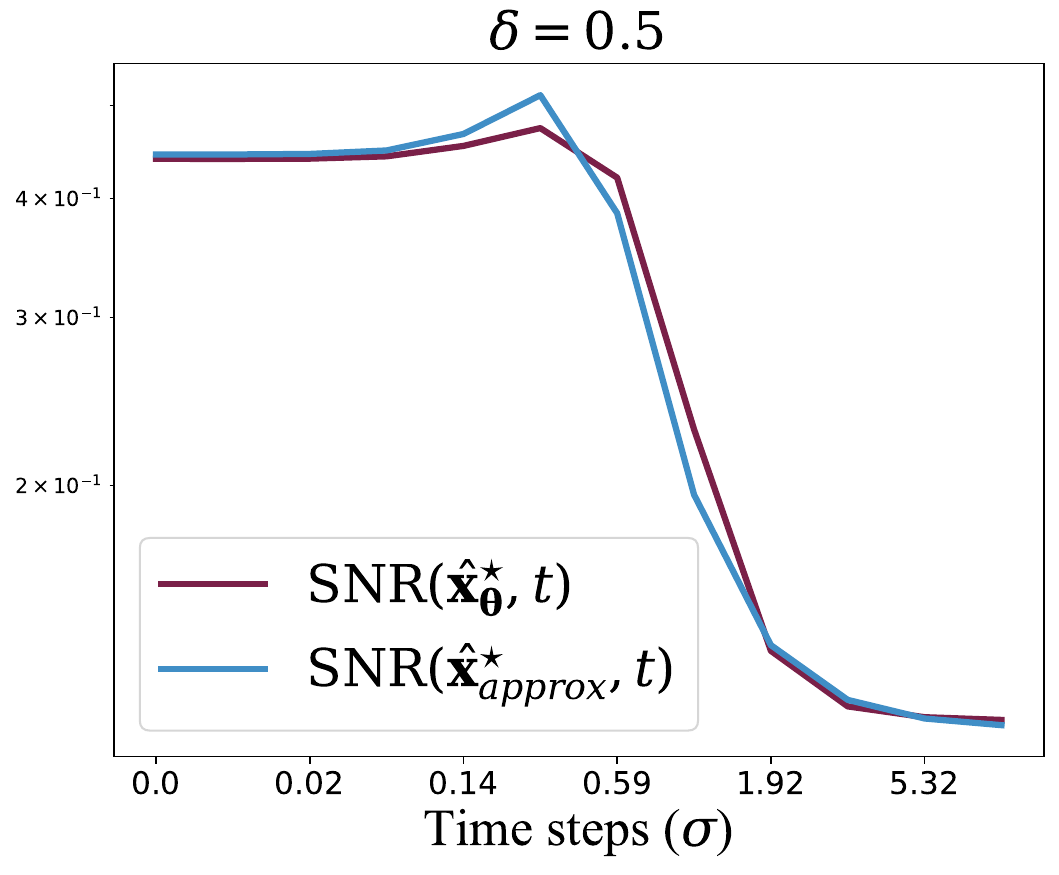}
    \caption{\textbf{Comparison between \SNR~calculated using the optimal model $\hat{\bm x}_{\bm \theta}^{\star}$ and the \SNR~calculated with our approximation in \Cref{lem:main}.} We generate \MoLRG~data and calculate \SNR~using both the corresponding optimal posterior function $\hat{\bm x}_{\bm \theta}^{\star}$ and our approximation $\hat{\bm x}_{\textit{approx}}^{\star}$ from \Cref{lem:main}. Default parameters are set as $n=100$, $d=5$, $K=10$, and $\delta=0.3$. In each row, we vary one parameter while keeping the others fixed, comparing the actual and approximated \SNR.}
    \label{fig:assump_validate}
\end{figure}

In \Cref{lem:main_formal}, we approximate the optimal posterior estimation function $\hat{\bm x}_{\bm \theta}^{\star}$ using $\hat{\bm x}_{\textit{approx}}^{\star}$ by taking the expectation inside the softmax with respect to $\bm x_t$. To validate this approximation, we compare the \SNR~calculated from $\hat{\bm x}_{\bm \theta}^{\star}$ and from $\hat{\bm x}_{\textit{approx}}^{\star}$ using the definition in \Cref{lem:E[x_0]} and (\ref{eq:approx_score}) in \Cref{app:thm1_proof}, respectively. We use a fixed dataset size of $2400$ and set the default parameters to $n=100$, $d=5$, $K=10$, and $\delta=0.3$ to generate \MoLRG~data. We then vary one parameter at a time while keeping the others constant, and present the computed \SNR~in \Cref{fig:assump_validate}. As shown, the approximated \SNR~score consistently aligns with the actual score.

\paragraph{Visualization of the \MoLRG~posterior estimation and \SNR~across noise scales.} In \Cref{fig:csnr_molrg_match}, we show that both the classification accuracy and \SNR~exhibit a unimodal trend for the \MoLRG~data. To further illustrate this behavior, we provide a visualization of the posterior estimation and \SNR~at different noise scales in \Cref{fig:vis_molrg_3class}. In the plot, each class is represented by a colored straight line, while deviations from these lines correspond to the $\delta$-related noise term. Initially, increasing the noise scale effectively cancels out the $\delta$-related data noise, resulting in a cleaner posterior estimation and improved probing accuracy. However, as the noise continues to increase, the class confidence rate drops, leading to an overlap between classes, which ultimately degrades the feature quality and probing performance.


\begin{figure}[t]
    \centering
    \includegraphics[width=0.95\linewidth]{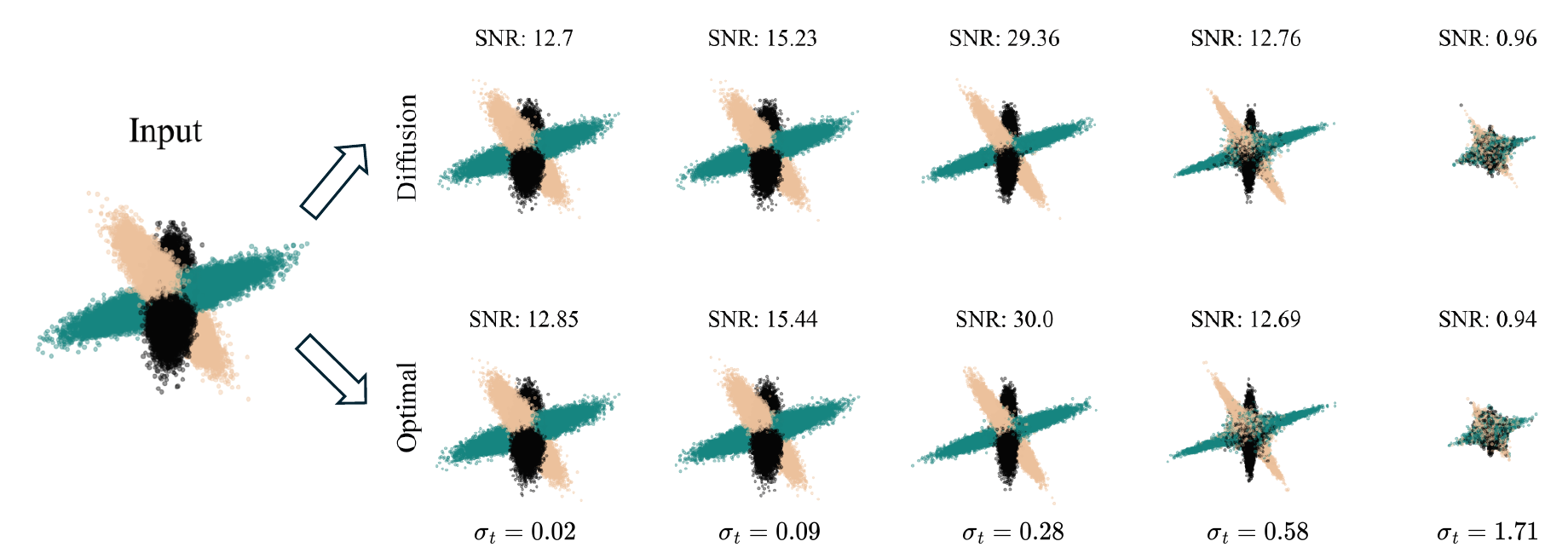}
    \caption{\textbf{Visualization of posterior estimation, higher \SNR~correspondings to higher classification accuracy.} The \textbf{same} \MoLRG~data is fed into the models; each row represents a different denoising model, and each column corresponds to a different time step with noise scale ($\sigma_t$).}
    \label{fig:vis_molrg_3class}
\end{figure}

\paragraph{Mitigating the performance gap between DAE and diffusion models.}
Throughout the empirical results presented in this paper, we consistently observe a performance gap between individual DAEs and diffusion models, especially in low-noise regions. Here, we use a DAE trained on the CIFAR10 dataset with a single noise level $\sigma = 0.002$, using the NCSN++ architecture \citep{karras2022elucidating}. In the default setting, the DAE achieves a test accuracy of $32.3$. We then explore three methods to improve the test performance: (a) adding dropout, as noise regularization and dropout have been effective in preventing autoencoders from learning identity functions \citep{steck2020autoencoders}; (b) adopting EDM-based preconditioning during training, including input/output scaling, loss weighting, etc.; and (c) multi-level noise training, in which the DAE is trained simultaneously on three noise levels $[0.002, 0.012, 0.102]$. Each modification is applied independently, and the results are reported in \Cref{tab:dae_trial}. As shown, dropout helps improve performance, but even with a dropout rate of $0.95$, the improvement is minor. EDM-based preconditioning achieves moderate improvement, while multi-level noise training yields the most promising results, demonstrating the benefit of incorporating the diffusion process in DAE training.

\begin{table}[t]
\caption{\textbf{Improve DAE representation performance at low noise region.} A vanilla DAE trained on the CIFAR10 dataset with a single noise level of $\sigma = 0.002$ serves as the baseline. We evaluate the performance improvement of dropout regularization, EDM-based preconditioning, and multi-level noise training ($\sigma=\{0.002, 0.012, 0.102\}$). Each technique is applied independently to assess its contribution to performance enhancement.}
\centering
\resizebox{0.42\linewidth}{!}{
        \vspace{-0.1in}
	\begin{tabular}	{c | c } 
            Modifications & Test acc.  \\
            \toprule
            Vanilla DAE & $32.3$ \\
            +Dropout (0.5) & $35.3$ \\
            +Dropout (0.9) & $36.4$ \\
            +Dropout (0.95) & $38.1$ \\
            +EDM preconditioning & $49.2$ \\
            +Multi-level noise training & \textbf{58.6} \\
		  \bottomrule
	\end{tabular}}
    \label{tab:dae_trial}
\end{table}

\paragraph{Empirical robustness to assumption relaxation}\label{appsubsec:relax_assump}
The data assumptions in \Cref{assum:subspace} were made to simplify the analysis and derive closed-form results. Empirically, we find that the unimodal SNR trend remains robust when these assumptions are relaxed. To demonstrate this, we train our parameterized DAE under binary \MoLRG~data while systematically violating each assumption. Unless otherwise stated, each experiment uses the DAE network as introduced in \Cref{subsec:network} and a dataset contains $30{,}000$ samples generated with $n=50$, $d=5$, and $\delta=0.2$.

\begin{itemize}[leftmargin=*]
    \item \textit{Overlapping class subspaces.} As shown in \Cref{tab:assump_v1}, we control the principal angle ($\theta$) between class subspaces and observe that overlap tends to reduce SNR across timesteps while the peak remains stable. We conjecture that overlap can be viewed as introducing additional intrinsic noise beyond the $\delta$-related term, thereby affecting the SNR value.
    \item \textit{Varying subspace ranks.} As shown in \Cref{tab:assump_v2}, we set the class subspace dimensions to $d_0=10$ and $d_1=2$. Intuitively, the higher-rank class retains more signal and is less sensitive to noise, yielding higher and later-peaking SNR, while the low-rank class decays earlier.

    \item \textit{Non-uniform mixing weights.} As shown in \Cref{tab:assump_v3}, we set $\pi_0=0.8$ and $\pi_1=0.2$ and observe consistently higher SNR for the majority class. We conjecture that this may stem from both the score function of the distribution and DAE training being biased toward denoising more frequent samples.
\end{itemize}

\begin{table}[h]
\centering
\vspace{0.5em}
\begin{tabular}{c|cccccccc}
\toprule
Training Setting & \multicolumn{8}{c}{SNR @ $\sigma_t$} \\
  & 0.008 & 0.023 & 0.060 & 0.140 & 0.296 & 0.585 & 1.088 & 1.923 \\
\midrule
$\theta = 90^{\circ}$ (Non-overlapping)  & 24.88 & 25.15 & 26.95 & 35.84 & 58.27 & 20.84 &  4.05 &  1.57 \\
$\theta = 30^{\circ}$ (Overlapping)  & 16.70 & 16.92 & 18.39 & 25.97 & 38.54 & 22.88 & 14.56 & 12.55 \\
\bottomrule
\end{tabular}
\vspace{0.08in}
\caption{\textbf{Effect of overlapping class subspaces on SNR across noise levels.}}
\label{tab:assump_v1}
\end{table}

\begin{table}[h]
\centering
\vspace{0.5em}
\begin{tabular}{c|ccccccccc}
\toprule
Training Setting & \multicolumn{9}{c}{SNR @ $\sigma_t$} \\
 & 0.002 & 0.008 & 0.023 & 0.060 & 0.140 & 0.296 & 0.585 & 1.088 & 1.923 \\
\midrule
Class 0 ($d_0 = 10$)  & 68.94 & 69.00 & 69.83 & 74.87 & 102.24 & 195.44 & 215.31 & 34.17 & 8.44  \\
Class 1 ($d_1 = 2$)   & 7.75 & 7.75 & 7.80 & 8.08 & 8.79 & 6.50 & 1.80 & 0.46 & 0.27  \\
Overall SNR            & 24.12 & 24.13 & 24.28 & 25.21 & 28.19 & 23.34 & 8.18 & 2.84 & 1.45  \\
\bottomrule
\end{tabular}
\vspace{0.08in}
\caption{\textbf{Effect of subspace rank variation on SNR across noise levels.}}
\label{tab:assump_v2}
\end{table}

\begin{table}[h]
\centering
\vspace{0.5em}
\begin{tabular}{c|ccccccccc}
\toprule
Training Setting & \multicolumn{9}{c}{SNR @ $\sigma_t$} \\
 & 0.002 & 0.008 & 0.023 & 0.060 & 0.140 & 0.296 & 0.585 & 1.088 & 1.923 \\
\midrule
Class 0 ($\pi_0 = 0.8$) & 52.35 & 52.43 & 52.98 & 56.74 & 75.61 & 134.76 & 72.38 & 13.56 & 4.50 \\
Class 1 ($\pi_1 = 0.2$) & 12.15 & 12.16 & 12.31 & 13.10 & 17.49 & 23.91 & 6.05 & 1.17 & 0.52  \\
Overall SNR              & 38.69 & 38.74 & 39.17 & 41.85 & 55.77 & 90.54 & 35.28 & 7.40 & 2.81 \\
\bottomrule
\end{tabular}
\vspace{0.08in}
\caption{\textbf{Effect of non-uniform class mixing on SNR across noise levels.}}
\label{tab:assump_v3}
\end{table}

\paragraph{Well-separated clusters.}
In our main analysis, we considered zero-mean Gaussian clusters. In this section, we show that when the clusters are well separated, the unimodal trend disappears: both classification accuracy and SNR decrease monotonically, peaking at $t = 0$. In this regime, the dataset remains linearly separable throughout the denoising process, and both metrics are primarily determined by the class means. Adding noise (increasing $t$) simply blurs the class boundaries, thereby reducing separability. To verify this, we conduct an experiment on well-separated data and report the results in \Cref{tab:well_sep}, which clearly exhibit this monotonic behavior. These observations highlight the importance of overlapping clusters in giving rise to the unimodal representation dynamics.

\begin{table}[h]
\centering
\vspace{0.5em}
\resizebox{\linewidth}{!}{
\begin{tabular}{c|cccccccccccc}
\toprule
$\sigma_t$ & 0.030 & 0.053 & 0.098 & 0.189 & 0.282 & 0.379 & 0.480 & 0.588 & 0.704 & 0.830 & 0.989 & 1.492 \\
\midrule
\textbf{Acc} & 100.0 & 100.0 & 100.0 & 99.8 & 97.7 & 92.4 & 85.7 & 78.5 & 71.8 & 66.0 & 61.6 & 52.6  \\
\textbf{SNR} & 0.540 & 0.539 & 0.535 & 0.524 & 0.513 & 0.502 & 0.484 & 0.465 & 0.436 & 0.406 & 0.370 & 0.297 \\
\bottomrule
\end{tabular}}
\vspace{0.08in}
\caption{\textbf{Classification accuracy and SNR on well-separated clusters across diffusion noise levels.}}
\label{tab:well_sep}
\end{table}

\paragraph{Unimodal dynamics beyond pixel space.}
The unimodal dynamics are not restricted to pixel-space denoisers. 
In the table below, we analyze feature accuracy and SNR dynamics using the DiT-XL/2 \citep{peebles2023scalable} model on \texttt{miniImageNet}, and observe in \Cref{tab:dit_mini} that the unimodal trend persists in this latent diffusion model setting as well.

\begin{table}[h]
\centering
\begin{tabular}{c|ccccccc}
\toprule
$\sigma_t$ & 0.010 & 0.026 & 0.044 & 0.076 & 0.108 & 0.140 & 0.205 \\
\midrule
\textbf{Acc} & 73.53 & 74.64 & 75.17 & 75.54 & 75.83 & 75.93 & 76.10 \\
\textbf{SNR} & 0.0369 & 0.0372 & 0.0375 & 0.0380 & 0.0382 & 0.0384 & 0.0387 \\
\bottomrule
\end{tabular}

\vspace{0.5em}

\begin{tabular}{c|ccccccc}
\toprule
$\sigma_t$ & 0.271 & 0.339 & 0.719 & 1.234 & 2.031 & 3.424 & 6.135 \\
\midrule
\textbf{Acc} & 75.92 & 74.85 & 66.47 & 49.31 & 29.28 & 16.41 & 7.81 \\
\textbf{SNR} & 0.0384 & 0.0387 & 0.0350 & 0.0321 & 0.0282 & 0.0255 & 0.0211 \\
\bottomrule
\end{tabular}
\vspace{0.08in}
\caption{\textbf{Feature accuracy and SNR dynamics for DiT-XL/2 on \texttt{miniImageNet}.}}
\label{tab:dit_mini}
\end{table}

\paragraph{Projection-based classification analysis.}
In the main paper, our linear probing experiments always use logistic regression classifiers with inner-product-based logits. Here, we re-ran the experiments in \Cref{fig:cifar} using a projection-based classifier: for each class, we compute its principal subspace $[\bm V_1, \ldots, \bm V_K]$ from the training features, and classify each test sample by identifying the class whose subspace captures the most projection energy, computed as $\arg\max_k \| \bm h(\bm x_i) \bm V_k \|^2$. 

The results are shown in \Cref{tab:log_vs_proj}. While the projection-based classifier yields lower overall accuracy, it reveals a more pronounced unimodal trend that aligns closely with the SNR dynamics. 

\begin{table}[h]
\centering
\vspace{0.5em}
\resizebox{\linewidth}{!}{
\begin{tabular}{c|c|cccccccc}
\toprule
Dataset & Classifier Type & $\sigma_t = 0.008$ & $0.023$ & $0.060$ & $0.140$ & $0.296$ & $0.585$ & $1.088$ & $1.923$  \\
\midrule
CIFAR10 & Logistic Regression & 93.72 & 94.94 & 95.20 & 94.11 & 91.36 & 85.25 & 72.95 & 56.21  \\
CIFAR10 & Projection & 86.95 & 90.36 & 90.93 & 90.78 & 88.48 & 82.82 & 70.53 & 53.84 \\
\bottomrule
\end{tabular}}
\vspace{1.0em}

\resizebox{\linewidth}{!}{
\begin{tabular}{c|c|cccccccc}
\toprule
Dataset & Classifier Type & $\sigma_t = 0.008$ & $0.023$ & $0.060$ & $0.140$ & $0.296$ & $0.585$ & $1.088$ & $1.923$ \\
\midrule
TinyImageNet & Logistic Regression  & 32.17 & 34.70 & 43.09 & 53.58 & 50.78 & 42.13 & 27.56 & 15.10  \\
TinyImageNet & Projection & 12.89 & 14.38 & 24.10 & 34.78 & 34.88 & 30.02 & 20.99 & 12.64  \\
\bottomrule
\end{tabular}}
\vspace{0.08in}
\caption{\textbf{Logistic linear probing vs. projection-based classification accuracy on CIFAR10 (Top) and TinyImageNet (Bottom)  across diffusion noise levels.}}\label{tab:log_vs_proj}
\end{table}

\section{Experimental Details}\label{app:exp_detail}
In this section, we provide technical details for all the experiments in the main body of the paper.

\paragraph{Assets license information.}
We primarily utilize the codebase of EDM \citep{karras2022elucidating}, which is released under the Creative Commons Attribution-NonCommercial-ShareAlike 4.0 (CC BY-NC-SA 4.0) license. We also use code from the GitHub repository accompanying \citep{baranchuk2021label}, which is licensed under the MIT License.

For the datasets, CIFAR datasets \citep{krizhevsky2009learning}, ImageNet \citep{deng2009imagenet}, Oxford 102 Flowers \citep{nilsback2008automated}, and DTD \citep{cimpoi2014describing} are publicly available for academic use. The CelebA dataset \citep{liu2015faceattributes} is released for non-commercial research purposes only under a custom license. Oxford-IIIT Pet \citep{parkhi2012cats} is available under the CC BY-SA 4.0 license. The FFHQ dataset \citep{karras2019style} is distributed by NVIDIA under the CC BY-NC-SA 4.0 license.

\paragraph{Computational resources.} Most experiments are conducted on a single NVIDIA A40 GPU, except for training on subsets of images (e.g., \Cref{fig:cifar_train_iter}), which is performed using two A40 GPUs.

\paragraph{Experimental details for \Cref{fig:clean_feature}.}
\begin{itemize}[leftmargin=*]
    \item \textit{Experimental details for \Cref{fig:clean_feature}(a).} We train diffusion models based on the unified framework proposed by \citep{karras2022elucidating}. Specifically, we use the DDPM+ network, and use VP configuration for \Cref{fig:clean_feature}(a). \citep{karras2022elucidating} has shown equivalence between VP configuration and the traditional DDPM setting, thus we call the models as DDPM* models. We train two models on CIFAR10 and CIFAR100, respectively. After training, we evaluate the learned representations via linear probing. For each noise level $\sigma(t)$, we corrupt the clean input $\bm{x}_0$ with Gaussian noise to obtain $\bm x_t = \sqrt{\bar{\alpha}_t} (\bm x_0 + \bm n)$, with $\boldsymbol{n} \sim \mathcal{N}\left(\mathbf{0}, \sigma_t^2 \mathbf{I}\right)$ and $\sqrt{\bar{\alpha}_t} = 1/\sqrt{\sigma_t^2+1}$. We then extract features from the decoder's `16x16 block1' layer, which consistently yields the highest classification accuracy. A logistic regression classifier is trained on these features using the training split and evaluated on the test split. We perform the linear probe for each of the following noise levels: [0.002, 0.008, 0.023, 0.060, 0.140, 0.296, 0.585, 1.088, 1.923].

    \item \textit{Experimental details for \Cref{fig:clean_feature}(b).}  We exactly follow the protocol in \citep{baranchuk2021label}, using the same datasets which are subsets of CelebA \citep{karras2018progressive,liu2015faceattributes} and FFHQ \citep{karras2019style}, the same training procedure, and the same segmentation networks (MLPs). The only difference is that we use a newer latent diffusion model \citep{rombach2022high} pretrained on CelebAHQ from Hugging Face and the noise are added to the latent space. For feature extraction we concatenate the feature from the first layer of each resolution in the UNet's decoder (after upsampling them to the same resolution as the input). We perform segmentation for each of the following noise levels:[0.010, 0.015, 0.030, 0.053, 0.098, 0.189, 0.282, 0.379, 0.480, 0.766, 1.123].
\end{itemize}

\paragraph{Experimental details for \Cref{fig:csnr_molrg_match} and \Cref{fig:vis_molrg_3class}.}
For the \MoLRG~experiments, we train the our parameterized model \eqref{eq:net_param} following the setup provided in an open-source repository \citep{tiny_diffusion_repo}. The model is trained on a $d=5, n=50, K=3$ and $\delta=0.2$ \MoLRG~dataset containing 12000 samples. Training is conducted for 200 epochs using DDPM scheduling with $T=500$, employing the Adam optimizer with a learning rate of $5 \times 10^{-4}$. For \SNR~computation, we follow the definition in \Cref{subsec:rep-quality} since we have access to the ground-truth basis for the \MoLRG~data, i.e., $\bm U_1^\star, \bm U_2^\star$, and $\bm U_3^\star\in \mathbb{R}^{50\times 5}$. For probing we simply train a linear probe on the feature.

For both panels in \Cref{fig:csnr_molrg_match}, we train our probe the same training set used for diffusion and test on five different \MoLRG~datasets with 9000 samples generated with five different random seeds, reporting the average accuracy and \SNR~at time steps [5, 10, 20, 40, 60, 80, 100, 120, 140, 160, 180, 240, 260, 280]. In \Cref{fig:vis_molrg_3class}, we visualize the posterior estimations at time steps [5, 20, 60, 120, 260] by projecting them onto the union of the first columns of $\bm U_1^\star, \bm U_2^\star$, and $\bm U_3^\star$ (a 3D space), then further projecting onto the 2D plane by a random $3 \times 2$ matrix with orthonormal columns. The subtitles of each visualization show the corresponding \SNR~calculated as explained above.

\paragraph{Experimental details for \Cref{fig:cifar}.}
We use pre-trained EDM models \citep{karras2022elucidating} for CIFAR10 and ImageNet, extracting feature representations from the best-performing layer at each timestep. For the ImageNet model, features are extracted using images from classes in TinyImageNet \citep{le2015tiny}. Feature accuracy is evaluated via linear probing. To compute the \SNR~metric, we first normalize the extracted features by dividing each by its norm and subtracting the global mean. At each timestep, we perform class-wise SVD on the normalized features and compute \SNR~as defined in \Cref{subsec:rep-quality}. Specifically, we use the top $5$ right singular vectors of each class to form $\bm{U}_k^\star$.

\paragraph{Experimental details for \Cref{fig:memgen_between}.}
We use the DDPM++ network and VP configuration to train diffusion models\citep{karras2022elucidating} on the CIFAR10 dataset, using two network configurations: UNet-$32$ and UNet-$64$, by varying the embedding dimension of the UNet. Training dataset sizes range exponentially from $2^8$ to $2^{15}$. For each dataset size, both UNet-$32$ and UNet-$64$ are trained on the same subset of the training data. All models are trained with a duration of $50$M images following the EDM training setup. After training, we calculate the generalization score as described in \citep{zhang2024emergence}, using $10$K generated images and the full training subset to compute the score.

\paragraph{Experimental details for \Cref{fig:fid_acc} and \Cref{fig:cifar_train_iter}.}
We use the DDPM++ architecture with the EDM configuration to train a UNet-128 diffusion model \citep{karras2022elucidating} on CIFAR10 and CIFAR100, using 4096 image training subsets. We track the evolution of representation dynamics throughout training. For \Cref{fig:fid_acc}, FID \citep{heusel2017gans} is computed using $50$K generated samples compared against the full training dataset. Classification accuracy is obtained by extracting features from the training subset and evaluating a linear classifier on the full test set. Nearest neighbors are identified by computing the smallest $\ell_2$ distance between each generated image and the training subset.

\paragraph{Experimental details for \Cref{fig:dae_diffusion}.}
We train individual DAEs using the DDPM++ network and VP configuration outlined in \citep{karras2022elucidating} at the following noise scales: 
\begin{align*}
    [0.002, 0.008, 0.023, 0.06, 0.14, 0.296, 0.585, 1.088, 1.923, 3.257].
\end{align*} 
Each model is trained for 500 epochs using the Adam optimizer \citep{kingma2014adam} with a fixed learning rate of $1 \times 10^{-4}$. The sliced Wasserstein distance is computed according to the implementation described in \citep{doan2024assessing}.

\paragraph{Experimental details for \Cref{tab:ensemble_results} and \Cref{tab:ensemble_results_transfer}}

For EDM, we use the official pre-trained checkpoints on ImageNet $64\times64$ from \citep{karras2022elucidating}, and for DiT, we use the released DiT-XL/2 model pre-trained on ImageNet $256\times256$ from \citep{peebles2023scalable}. As a baseline, we include the Hugging Face pre-trained MAE encoder (ViT-B/16) \citep{he2022masked}.

For diffusion models, features are extracted from the layer and timestep that achieve the highest probing accuracy, following \citep{xiang2023denoising}. After feature extraction, we adopt the probing protocol from \citep{chen2024deconstructing}, passing the extracted features through a probe consisting of a BatchNorm1d layer followed by a linear layer. To ensure fair comparisons, all input images are cropped or resized to $224\times224$, matching the resolution used for MAE training.

For ensembling, we extract features from two additional timesteps on either side of the optimal timestep. Independent probes are trained on these timesteps, yielding five probes in total. At test time, we apply a soft-voting ensemble by averaging the output logits from all five probes for the final prediction. Specifically, let $\bm W_t \in \mathbb{R}^{K \times d}$ be the linear classifier trained on features from timestep $t$, and let $\bm h_t \in \mathbb{R}^{d}$ denote the feature representation of a sample at timestep $t$. Considering neighboring timesteps $t-2, t-1, t+1$, and $t+2$, our ensemble prediction is computed as:
$\bm \hat{y} = \argmax\left(\frac{1}{5}\sum_{t=t-2}^{t+2} \bm W_t \bm h_t\right). $

We evaluate each method under varying levels of label noise, ranging from $0\%$ to $80\%$, by randomly mislabeling the specified percentage of training labels before applying linear probing. Performance is assessed on both the pre-training dataset and downstream transfer learning tasks. For pre-training evaluation, we use the images and classes from MiniImageNet \citep{vinyals2016matching} to reduce computational cost. For transfer learning, we evaluate on CIFAR100 \citep{krizhevsky2009learning}, DTD \citep{cimpoi2014describing}, and Flowers102 \citep{nilsback2008automated}.

\section{Proofs}\label{app:proofs}
We first provide some auxiliary results for proving \Cref{lem:main}. 

\subsection{Groud truth posterior mean and optimal DAE}\label{app:proof_prop}
We begin by deriving the ground truth posterior mean $\E\left[ \bm x_0 | \bm x_t \right]$ under the \MoLRG~distribution. When $\bm x_0$ follows the noisy \MoLRG~assumption, the optimal solution $\hat{\bm x}_{\bm \theta}^{\star}(\bm x_t, t)$ to the training objective in Eq.~\eqref{eq:dae_loss} is exactly the posterior mean $\E\left[ \bm x_0 | \bm x_t \right]$.

\begin{prop}\label{lem:E[x_0]_full}
Suppose the data $\bm x_0$ is drawn from a noisy \MoLRG~data distribution with $K$-class and noise level $\delta$ introduced in \Cref{assum:subspace}. Then  the optimal $\{\bm U\}$
minimizing the loss \eqref{eq:ddpm_loss} is the ground truth basis defined in \eqref{eq:MoG noise}, and the optimal DAE $\hat{\bm x}_{\bm \theta}^{\star}(\bm x_t,t)$ admits the analytical form: 
\begin{align}
    \hat{\bm x}_{\bm \theta}^{\star}(\bm x_t,t) &= \sum_{l=1}^K w^{\star}_l(\bm x_t,t)\left( \zeta_t \bm{U}_l^\star \bm{U}_l^{\star \top} + \xi_t \widetilde{\bm{U}}_l^\star  \widetilde{\bm{U}}_l^{\star \top} \right)\bm x_t,
\end{align}

where  $\zeta_t = \frac{1}{1 + \sigma_t^2}$ and $\xi_t = \frac{\delta^2}{\delta^2 + \sigma_t^2}$, and 
\begin{align*}
w_l^{\star}(\bm{x}_t, t) = \frac{\exp\left( g_l^{\star}(\bm{x}_t, t) \right)}{\sum_{s=1}^K \exp\left( g_s^{\star}(\bm{x}_t, t) \right)}, \quad
g_l^{\star}(\bm{x}_t, t) = \frac{\zeta_t}{2\sigma_t^2}  \| \bm{U}_l^{\star \top} \bm{x}_t \|^2 + \frac{\xi_t}{2\sigma_t^2}  \| \widetilde{\bm{U}}_l^{\star \top} \bm{x}_t \|^2.
\end{align*}

\end{prop}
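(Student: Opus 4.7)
The training loss in \eqref{eq:dae_loss} is a pointwise MSE objective, so for each $t$ its minimizer over arbitrary measurable functions of $\bm x_t$ is the posterior mean $\E[\bm x_0 \mid \bm x_t]$. My plan is to (i) compute this posterior mean in closed form under the noisy \MoLRG~assumption, and (ii) show that the parametrization \eqref{eq:net_param}--\eqref{eq:net_attn} realizes it exactly when $\bm U = \bm U^\star$. Together these yield both the optimality of the ground-truth basis and the explicit formula for $\hat{\bm x}_{\bm\theta}^\star$ in \eqref{eq:optimal_DAE}.

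I will first set up the conditional Gaussian structure. Conditional on class $k$, the noisy sample $\bm x_t = \bm U_k^\star \bm a + \delta\,\widetilde{\bm U}_k^\star \bm e + \sigma_t\bm\epsilon$ is centered Gaussian with covariance
\[
\Sigma_k \;=\; (1+\sigma_t^2)\,\bm U_k^\star \bm U_k^{\star\top} \;+\; (\delta^2+\sigma_t^2)\,\widetilde{\bm U}_k^\star \widetilde{\bm U}_k^{\star\top} \;+\; \sigma_t^2\,\bm U_{k,\perp}\bm U_{k,\perp}^\top,
\]
where $\bm U_{k,\perp}$ spans the orthogonal complement of $[\bm U_k^\star,\widetilde{\bm U}_k^\star]$. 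Since the class subspaces are pairwise orthogonal under \Cref{assum:subspace}, these three projectors are mutually orthogonal and sum to $\bm I$; hence $\Sigma_k^{-1}$ inherits the same decomposition with reciprocated eigenvalues, and a direct calculation gives $\bm I - \sigma_t^2\Sigma_k^{-1} = \zeta_t\,\bm U_k^\star \bm U_k^{\star\top} + \xi_t\,\widetilde{\bm U}_k^\star \widetilde{\bm U}_k^{\star\top}$ (the ambient-complement term cancels exactly). Substituting this into Tweedie's formula \eqref{eq:Tweedie} applied to the Gaussian-mixture density $p_t(\bm x_t) = \tfrac{1}{K}\sum_k \mathcal{N}(\bm x_t;\bm 0,\Sigma_k)$ yields
\[
\E[\bm x_0 \mid \bm x_t] \;=\; \sum_{l=1}^K P(l \mid \bm x_t)\,\bigl(\zeta_t\,\bm U_l^\star \bm U_l^{\star\top} + \xi_t\,\widetilde{\bm U}_l^\star \widetilde{\bm U}_l^{\star\top}\bigr)\,\bm x_t.
\]

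Next I will recast $P(l \mid \bm x_t)$ in the softmax form of \eqref{eq:net_attn}. Bayes' rule gives $P(l \mid \bm x_t) \propto \pi_l\,|\Sigma_l|^{-1/2}\exp\bigl(-\tfrac{1}{2}\bm x_t^\top \Sigma_l^{-1}\bm x_t\bigr)$; the uniform mixing weights and the equal subspace ranks make both $\pi_l$ and $|\Sigma_l|$ independent of $l$, so they drop out of the softmax normalization. The remaining exponent equals $-\tfrac{\zeta_t}{2}\|\bm U_l^{\star\top}\bm x_t\|^2 - \tfrac{\xi_t}{2\delta^2}\|\widetilde{\bm U}_l^{\star\top}\bm x_t\|^2$ plus an $l$-independent term involving only $\|\bm x_t\|^2$ and $\|\bm U^\star \bm U^{\star\top}\bm x_t\|^2$, where $\bm U^\star := [\bm U_1^\star,\dots,\bm U_K^\star]$. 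The main obstacle --- really the only delicate step in the whole argument --- is reconciling this with the quoted $g_l^\star$: using the $l$-independent identity $\|\bm U_l^{\star\top}\bm x_t\|^2 + \|\widetilde{\bm U}_l^{\star\top}\bm x_t\|^2 = \|\bm U^\star \bm U^{\star\top}\bm x_t\|^2$, the two exponents differ by an $l$-independent constant iff the coefficients on $\|\bm U_l^{\star\top}\bm x_t\|^2$ agree, i.e.\ $\tfrac{\xi_t}{2\delta^2} - \tfrac{\zeta_t}{2} = \tfrac{\zeta_t - \xi_t}{2\sigma_t^2}$. This in turn reduces to the arithmetic identity $\tfrac{1}{\delta^2+\sigma_t^2} - \tfrac{1}{1+\sigma_t^2} = \tfrac{1-\delta^2}{(1+\sigma_t^2)(\delta^2+\sigma_t^2)}$, after which softmax invariance under additive constants gives $P(l \mid \bm x_t) = w_l^\star(\bm x_t,t)$.

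To close the argument I will match the parametric form. Using orthogonality of class subspaces, $\widetilde{\bm U}_l^\star \widetilde{\bm U}_l^{\star\top} = \sum_{j\neq l}\bm U_j^\star \bm U_j^{\star\top}$, and combined with $\sum_l w_l^\star(\bm x_t,t) = 1$, the posterior mean rearranges to $\sum_l \bigl[\xi_t + (\zeta_t - \xi_t)\,w_l^\star(\bm x_t,t)\bigr]\bm U_l^\star \bm U_l^{\star\top}\bm x_t$, which is exactly $\bm U\,\bm D(\bm x_t,t)\,\bm U^\top \bm x_t$ evaluated at $\bm U = \bm U^\star$. Hence the parametric class contains the pointwise global minimizer of the MSE loss, establishing both the optimality of the ground-truth basis and the stated closed form for $\hat{\bm x}_{\bm\theta}^\star$. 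Aside from the softmax reconciliation above, every remaining manipulation is elementary linear algebra that exploits the orthogonality of $\{\bm U_k^\star\}_{k=1}^K$.
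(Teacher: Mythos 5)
Your proof is correct, and it reaches the same three-part skeleton as the paper's proof (MSE minimizer equals the posterior mean; closed-form posterior mean for the Gaussian mixture via Tweedie; algebraic matching to the parametrization via $\widetilde{\bm U}_l^\star\widetilde{\bm U}_l^{\star\top}=\sum_{j\neq l}\bm U_j^\star\bm U_j^{\star\top}$ and $\sum_l w_l^\star=1$, which is verbatim the paper's final step), but the middle computations take a genuinely different route. The paper derives $p_t(\bm x\mid Y=k)$ by explicitly integrating out the latent vector $\bm c_k$ in a page-long Gaussian integral and then inverts $\Sigma_k = s_t^2\bm U_k^\star\bm U_k^{\star\top}+s_t^2\delta^2\widetilde{\bm U}_k^\star\widetilde{\bm U}_k^{\star\top}+\gamma_t^2\bm I_n$ with the Woodbury lemma; that form of $\Sigma_k^{-1}$, which pulls out $\gamma_t^{-2}\bm I$, makes the $k$-dependent part of the Gaussian exponent land directly on the quoted coefficients $\phi_t=\zeta_t/(2\sigma_t^2)$ and $\psi_t=\xi_t/(2\sigma_t^2)$, with only the $k$-independent $\|\bm x\|^2/(2\gamma_t^2)$ term to cancel. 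You instead bypass the integration entirely by observing that $\bm x_t\mid k$ is an affine-plus-independent-Gaussian construction, read off $\Sigma_k$ as a sum of three mutually orthogonal projectors, and invert spectrally — cleaner and more elementary — but this leaves the exponent with coefficients $-\zeta_t/2$ and $-\xi_t/(2\delta^2)$, so you must additionally prove softmax equivalence with $g_l^\star$ via the $l$-independent identity $\|\bm U_l^{\star\top}\bm x_t\|^2+\|\widetilde{\bm U}_l^{\star\top}\bm x_t\|^2=\|\bm U^{\star\top}\bm x_t\|^2$ and the arithmetic identity $\frac{1}{\delta^2+\sigma_t^2}-\frac{1}{1+\sigma_t^2}=\frac{1-\delta^2}{(1+\sigma_t^2)(\delta^2+\sigma_t^2)}$, a reconciliation step absent from (and unnecessary in) the paper's Woodbury route; I verified both identities, and your equal-rank/uniform-weight justification for dropping $\pi_l$ and $\det\Sigma_l$ matches the paper's use of the same assumptions. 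In short, your approach buys a much shorter derivation of the conditional law at the price of one extra shift-invariance argument inside the softmax, while the paper's buys an exponent already in the target form at the price of a lengthy explicit marginalization; both establish optimality of $\bm U^\star$ in the same slightly informal sense (the parametric family contains the unconstrained population minimizer), so no gap on that front either.
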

\begin{proof}
By \citep{vincent2011connection} we see that the ground-truth score/posterior estimator calculated from the pdf is the global minimizer of the denoising score matching loss in \eqref{eq:dae_loss}, so here we first calculate the ground-truth score of the noisy \MoLRG~distribution and the corresponding posterior. We follow the same proof steps as in \citep{wang2024diffusion} Lemma $1$ with a change of variable. Let $\bm c_k = \begin{bmatrix}
        \bm a_k \\
        \bm e_k
    \end{bmatrix}$ and $\widehat{\bm U}_k = \begin{bmatrix}
        \bm U_k^\star & \delta \widetilde{\bm U}_k^{\star}
    \end{bmatrix}$ where $d$ and $D$ denote the dimensions of signal space and noise space as in definition, we first compute the conditional pdf
    \begin{align*}
        \;\;\;& p_t(\bm x \vert Y = k) \\
        & = \int  p_t\left(\bm x \vert Y = k, \bm c_k) \mathcal{N}(\bm c_k; \bm 0, \bm I_{d+D}\right) {\rm d}\bm c_k \\
        &= \int p_t (\bm x \vert \bm x_0 = \widehat{\bm U}_k \bm c_k) \mathcal{N}\left(\bm c_k; \bm 0, \bm I_{d + D}\right) {\rm d}\bm c_k \\
        & = \int \mathcal{N}(\bm x; s_t\widehat{\bm U}_k \bm c_k, \gamma_t^2\bm I_n) \mathcal{N}\left(\bm c_k; \bm 0, \bm I_{d+D}\right) {\rm d}\bm c_k \\
        & = \frac{1}{(2\pi)^{n/2}(2\pi)^{(d+D)/2}\gamma_t^n} \int \exp\left(-\frac{1}{2\gamma_t^2}\|\bm x - s_t \widehat{\bm U}_k \bm c_k\|^2 \right)\exp\left( -\frac{1}{2}\|\bm c_k\|^2 \right) {\rm d}\bm c_k \\
        & = \frac{1}{(2\pi)^{n/2}(2\pi)^{(d+D)/2}\gamma_t^n} \int \exp\left(-\frac{1}{2\gamma_t^2}\left(\bm x^\top \bm x - 2s_t \bm x^\top \widehat{\bm U}_k \bm c_k + s_t^2 \bm c_k^\top \widehat{\bm U}_k^\top \widehat{\bm U}_k \bm c_k + \gamma_t^2 \bm c_k^\top \bm c_k \right) \right){\rm d}\bm c_k \\
        & =  \frac{1}{(2\pi)^{n/2}\gamma_t^n}\left( \frac{s_t^2 + \gamma_t^2}{\gamma_t^2} \right)^{-d/2} \left( \frac{s_t^2 \delta^2 + \gamma_t^2}{\gamma_t^2} \right)^{-D/2}  \exp\left( -\frac{1}{2\gamma_t^2} \bm x^\top\left(\bm I_n - \frac{s_t^2}{s_t^2 + \gamma_t^2} \bm U_k^\star \bm U_k^{\star \top} -\frac{s_t^2 \delta^2}{s_t^2\delta^2 + \gamma_t^2} \widetilde{\bm U}^\star_k\widetilde{\bm U}_k^{\star\top} \right)\bm x \right) \\
            &\ \int \frac{1}{(2\pi)^{d/2}} \left( \frac{\gamma_t^2}{s_t^2 + \gamma_t^2} \right)^{-d/2}\exp\left(-\frac{s_t^2 + \gamma_t^2}{2\gamma_t^2}\left\|\bm a_k - \frac{s_t}{s_t^2 + \gamma_t^2}\bm U_k^{\star \top}\bm x\right\|^2 \right) {\rm d}\bm a_k\\
            &\ \int \frac{1}{(2\pi)^{D/2}} \left( \frac{\gamma_t^2}{s_t^2\delta^2 + \gamma_t^2} \right)^{-D/2}\exp\left(-\frac{s_t^2\delta^2 + \gamma_t^2}{2\gamma_t^2}\left\|\bm e_k - \frac{s_t \delta}{s_t^2 \delta^2 + \gamma_t^2}\widetilde{\bm U}_k^{\star \top}\bm x\right\|^2 \right) {\rm d}\bm e_k\\
         & = \frac{1}{(2\pi)^{n/2}}\frac{1}{(s_t^2 + \gamma_t^2)^{d/2}(s_t^2\delta^2 + \gamma_t^2)^{D/2}}\exp\left(-\frac{1}{2\gamma_t^2}\bm x^\top\left(\bm I_n - \frac{s_t^2}{s_t^2+\gamma_t^2} \bm U_k^\star \bm U_k^{\star\top} - \frac{s_t^2 \delta^2}{s_t^2 \delta^2 +\gamma_t^2} \widetilde{\bm U}_k^{\star} \widetilde{\bm U}_k^{\star \top} \right)\bm x \right) \\
        & = \frac{1}{(2\pi)^{n/2}\det^{1/2}(s_t^2\bm U_k^\star\bm U_k^{\star \top} + s_t^2 \delta^2\widetilde{\bm U}_k^{\star}\widetilde{\bm U}_k^{\star \top} + \gamma_t^2\bm I_n)}\\
        &\;\;\;\;\;\exp\left( -\frac{1}{2}\bm x^T\left(s_t^2\bm U_k^\star\bm U_k^{\star\top} +s_t^2\delta^2\widetilde{\bm U}_k^{\star}\widetilde{\bm U}_k^{\star \top}  + \gamma_t^2\bm I_n \right)^{-1}\bm x \right) \\
        & = \mathcal{N}(\bm x; \bm 0, s_t^2\bm U_k^\star\bm U_k^{\star \top} + s_t^2 \delta^2\widetilde{\bm{U}}_k^{\star}\widetilde{\bm U}_k^{\star \top} + \gamma_t^2\bm I_n),
    \end{align*}

where we repeatedly apply the pdf of multi-variate Gaussian and the second last equality uses $\det(s_t^2\bm U_k^\star\bm U_k^{\star \top} + s_t^2 \delta^2\widetilde{\bm U}_k^{\star}\widetilde{\bm U}_k^{\star \top} + \gamma_t^2\bm I_n) = (s_t^2 + \gamma_t^2)^{d}(s_t^2\delta^2 + \gamma_t^2)^{D}$ and $(s_t^2\bm U_k^\star \bm U_k^{\star \top} + s_t^2 \delta^2\widetilde{\bm U}_k^{\star}\widetilde{\bm U}_k^{\star\top} + \gamma_t^2\bm I_n)^{-1} = \left(\bm I_n - s_t^2/(s_t^2+\gamma_t^2) \bm U_k^\star\bm U_k^{\star \top} - s_t^2\delta^2/(s_t^2\delta^2+\gamma_t^2) \widetilde{\bm U}_k^{\star}\widetilde{\bm U}_k^{\star \top} \right)/\gamma_t^2$ because of the Woodbury matrix inversion lemma. Hence, with $\P{Y = k} = \pi_k$ for each $k \in [K]$, we have
    \begin{align*}
        p_t(\bm x) & = \sum_{k=1}^K p_{t}(\bm x\vert Y = k) \mathbb{P}(Y = k) = \sum_{k=1}^K \pi_k \mathcal{N}(\bm x; \bm 0, s_t^2\bm U_k^\star\bm U_k^{\star \top} + s_t^2\delta^2\widetilde{\bm U}_k^{\star}\widetilde{\bm U}_k^{\star \top} + \gamma_t^2\bm I_n).
    \end{align*}

Now we can compute the score function
\begin{align*}
    \nabla \log p_t(\bm x) & = \frac{\nabla p_t(\bm x)}{p_t(\bm x)} =  \frac{\splitfrac{\sum_{k=1}^K \pi_k \mathcal{N}(\bm x; \bm 0, s_t^2\bm U_k^\star\bm U_k^{\star \top} + s_t^2\delta^2\widetilde{\bm U}_k^{\star}\widetilde{\bm U}_k^{\star \top} + \gamma_t^2\bm I_n)}{\left( -\frac{1}{\gamma_t^2}\bm x + \frac{s_t^2}{\gamma_t^2(s_t^2 + \gamma_t^2)} \bm U_k^\star\bm U_k^{\star \top}  \bm x + \frac{s_t^2 \delta^2}{\gamma_t^2(s_t^2\delta^2 + \gamma_t^2)} \widetilde{\bm U}_k^{\star}\widetilde{\bm U}_k^{\star T}  \bm x  \right)}}{\sum_{k=1}^K \pi_k \mathcal{N}(\bm x; \bm 0, s_t^2\bm U_k^\star\bm U_k^{\star \top} + s_t^2\delta^2\widetilde{\bm U}_k^{\star}\widetilde{\bm U}_k^{\star \top} + \gamma_t^2\bm I_n)} \\
    & = -\frac{1}{\gamma_t^2} \left( \bm x -\frac{\splitfrac{\sum_{k=1}^K\pi_k \mathcal{N}(\bm x; \bm 0, s_t^2\bm U_k\bm U_k^{\star \top} + s_t^2\delta^2\widetilde{\bm U}_k^{\star}\widetilde{\bm U}_k^{\star \top} + \gamma_t^2\bm I_n)}{\left( \frac{s_t^2}{s_t^2 + \gamma_t^2} \bm U_k^\star \bm U_k^{\star \top}  \bm x ) + \frac{s_t^2 \delta^2}{s_t^2 \delta^2 + \gamma_t^2} \widetilde{\bm U}_k^{\star}\widetilde{\bm U}_k^{\star \top}  \bm x ) \right)}}{\sum_{k=1}^K \pi_k \mathcal{N}(\bm x; \bm 0, s_t^2\bm U_k^\star\bm U_k^{\star\top} + s_t^2\delta^2\widetilde{\bm U}_k^{\star}\widetilde{\bm U}_k^{\star \top} + \gamma_t^2\bm I_n)}  \right). 
\end{align*}

    According to Tweedie's formula, we have
    \begin{align*}
        \E\left[\bm x_0 \vert \bm x_t \right] & = \frac{\bm x_t + \gamma_t^2 \nabla \log p_t(\bm x_t)}{s_t}\\
        &= \frac{s_t}{s_t^2 + \gamma_t^2} \frac{\sum_{k=1}^K\pi_k \mathcal{N}(\bm x; \bm 0, s_t^2\bm U_k^\star\bm U_k^{\star\top} + s_t^2 \delta^2\widetilde{\bm U}_k^{\star}\widetilde{\bm U}_k^{\star \top} + \gamma_t^2\bm I_n) \bm U_k^\star\bm U_k^{\star \top}  \bm x }{\mathcal{N}(\bm x; \bm 0, s_t^2\bm U_k^\star\bm U_k^{\star\top} + s_t^2 \delta^2\widetilde{\bm U}_k^{\star}\widetilde{\bm U}_k^{\star \top} + \gamma_t^2\bm I_n)} \\ 
        &\;\;+ \frac{s_t\delta^2}{s_t^2\delta^2 + \gamma_t^2} \frac{\sum_{k=1}^K\pi_k \mathcal{N}(\bm x; \bm 0, s_t^2\bm U_k^\star\bm U_k^{\star\top} + s_t^2 \delta^2\widetilde{\bm U}_k^{\star}\widetilde{\bm U}_k^{\star \top} + \gamma_t^2\bm I_n) \widetilde{\bm U}_k^{\star}\widetilde{\bm U}_k^{\star \top}  \bm x }{\mathcal{N}(\bm x; \bm 0, s_t^2\bm U_k^\star\bm U_k^{\star\top} + s_t^2 \delta^2\widetilde{\bm U}_k^{\star}\widetilde{\bm U}_k^{\star \top} + \gamma_t^2\bm I_n)} \\
        & = \dfrac{s_t}{s_t^2 + \gamma_t^2}  \frac{\sum_{k=1}^K \pi_k \exp\left(\phi_t \|\bm U_k^{\star\top}\bm x_t\|^2\right)\exp\left(\psi_t \|\widetilde{\bm U}_k^{\star \top}\bm x_t\|^2\right) \bm U_k^\star\bm U_k^{\star\top} \bm x_t}{\sum_{k=1}^K \pi_k \exp\left(\phi_t \|\bm U_k^{\star\top}\bm x_t\|^2\right)\exp\left(\psi_t \|\widetilde{\bm U}_k^{\star \top}\bm x_t\|^2\right)} \\
        &\;\; + \dfrac{s_t \delta^2}{s_t^2 \delta^2 + \gamma_t^2}  \frac{\sum_{k=1}^K \pi_k \exp\left(\phi_t \|\bm U_k^{\star\top}\bm x_t\|^2\right)\exp\left(\psi_t \|\widetilde{\bm U}_k^{\star \top}\bm x_t\|^2\right) \widetilde{\bm U}_k^{\top}\widetilde{\bm U}_k^{\star \top} \bm x_t}{\sum_{k=1}^K \pi_k \exp\left(\phi_t \|\bm U_k^{\star \top}\bm x_t\|^2\right)\exp\left(\psi_t \|\widetilde{\bm U}_k^{\star \top}\bm x_t\|^2\right)},
    \end{align*}
    with $\phi_t = s_t^2 / (2 \gamma_t^2 (s_t^2 + \gamma_t^2))$ and $\psi_t = s_t^2\delta^2 / (2 \gamma_t^2 (s_t^2 \delta^2 + \gamma_t^2))$. The final equality uses the pdf of multi-variant Gaussian and the matrix inversion lemma discussed earlier. Since $\pi_k$ is consistent for all $k$ and $s_t = 1$, we have
    \begin{align*}
        \E\left[ \bm x_0\vert \bm x_t\right] &=  \sum_{k=1}^K w^{\star}_k(\bm x_t) \left( \frac{1}{1 + \sigma_t^2} \bm U_k^\star\bm U_k^{\star\top} + \frac{\delta^2}{\delta^2 + \sigma_t^2} \widetilde{\bm U}_k^{\star}\widetilde{\bm U}_k^{\star \top} \right) \bm x_t \\
        &\text{where}\ w^{\star}_k(\bm x_t) := \frac{\exp\left( \frac{1}{2\sigma_t^2 (1 + \sigma_t^2)} \|\bm U_k^{\star\top} \bm x_t\|^2 + \frac{\delta^2}{2 \sigma_t^2 (\delta^2 + \sigma_t^2)} \| \widetilde{\bm U}_k^{\star \top} \bm x_t \|^2 \right)}{\sum_{k=1}^K \exp\left( \frac{1}{2\sigma_t^2 (1 + \sigma_t^2)} \|\bm U_k^{\star\top} \bm x_t\|^2 + \frac{ \delta^2}{2 \sigma_t^2 (\delta^2 + \sigma_t^2)} \| \widetilde{\bm U}_k^{\star \top} \bm x_t \|^2 \right)}.
    \end{align*}

    Finally we show the equivalence between ground-truth posterior and our parameterized DAE when its weights are just the ground truth basis. 
    We begin by rewriting the optimal posterior function by leveraging the fact that $\widetilde{\bm U}_l^\star:=\begin{bmatrix}
        \bm U_1^\star & \cdots & \bm U_{l-1}^\star & \bm U_{l+1}^\star & \cdots \bm U_K^\star
    \end{bmatrix}$
    \begin{align*}
        \E\left[ \bm x_0 \vert \bm x_t\right] &= \sum_{l=1}^K w^{\star}_l(\bm x_t,t) \left( \zeta_t \bm U_l^\star\bm U_l^{\star\top} + \xi_t \widetilde{\bm U}^\star_l \widetilde{\bm U}_l^{\star\top} \right) \bm x_t \\
        &= \sum_{l=1}^K w^{\star}_l(\bm x_t,t) \left( \zeta_t \bm U_l^\star\bm U_l^{\star\top} + \xi_t \sum_{j \neq l} \bm U_j^\star \bm U_j^{\star\top} \right) \bm x_t \\
        &= \sum_{l=1}^K w^{\star}_l(\bm x_t,t) \left( \zeta_t \bm U_l^\star\bm U_l^{\star\top} \right) + \sum_{l=1}^K ( w^{\star}_l(\bm x_t,t) \xi_t \sum_{j \neq l} \bm U_j^\star \bm U_j^{\star\top} ) \bm x_t \\
        &= \sum_{l=1}^K \left (w^{\star}_l(\bm x_t,t) \zeta_t \bm U_l^\star\bm U_l^{\star\top} \bm x_t + \xi_t\sum_{j \neq l} w^{\star}_j(\bm x_t,t) \bm U_l^\star\bm U_l^{\star\top} \bm x_t \right) \\
        &= \sum_{l=1}^K \left (w^{\star}_l(\bm x_t,t) \zeta_t \bm U_l^\star\bm U_l^{\star\top} \bm x_t + \xi_t\left(1- w^{\star}_l (\bm x_t,t)\right)  \bm U_l^\star\bm U_l^{\star\top} \bm x_t \right) \\
        &= \sum_{l=1}^K \left( \zeta_t w^{\star}_l(\bm x_t,t) + \xi_t (1-w^{\star}_l(\bm x_t,t)) \right) \bm U_l^\star\bm U_l^{\star\top} \bm x_t \\
        &= \sum_{l=1}^K \left[ \xi_t + (\zeta_t - \xi_t) w^{\star}_l(\bm x_t,t) \right]\bm U_l^\star\bm U_l^{\star\top} \bm x_t.
    \end{align*}

    Now if we let $\bm U^{\star} = \begin{bmatrix}
        \bm U_1 & \bm U_2 &...& \bm U_K
    \end{bmatrix} \in \mathcal{O}^{n \times Kd}$ and $\bm D^{\star}(\bm x_t, t) = \mathrm{diag}\left(\beta_1^{\star} \bm{I}_d, \dots, \beta_K^{\star} \bm{I}_d \right)$ to be a block-diagonal matrix. Each $\beta_l^{\star}$ is defined as $\beta_l^{\star} = \xi_t + (\zeta_t - \xi_t) w_l^{\star}(\bm{x}_t, t)$, we can then write:
    \begin{align*}
        \E\left[ \bm x_0 \vert \bm x_t\right] = \bm U^{\star} \bm D^{\star}(\bm x_t, t) \bm U^{\star T} \bm x_t.
    \end{align*}

    Thus, the optimal solution for our network parametrization as defined in \eqref{eq:net_param} is exactly $\E\left[ \bm x_0 \vert \bm x_t\right]$. And by such equivalence, the optimality of the DAE is induced.
\end{proof}

\subsection{Proof of Theorem~\ref{lem:main}}\label{app:thm1_proof}
We first state the formal version of \Cref{lem:main}.

To simplify the calculation of \SNR~as introduced in \Cref{subsec:rep-quality} on feature representations, which involves the expectation over the softmax term $w^{\star}_k$, we approximate $\hat{\bm x}_{\bm \theta}^{\star}$ as follows: 


\begin{align}\label{eq:approx_score}
    \begin{split}
    \hat{\bm x}_{\textit{approx}}^{\star}(\bm x, t) &= \sum_{k=1}^K \hat{w}_k^{\star} \left( \zeta_t \bm U_k^{\star}\bm U_k^{\star \top} + \xi_t 
    \widetilde{\bm U}_k^{\star} \widetilde{\bm U}_k^{\star \top} \right) \bm x , \\
    &\text{where}\ \hat{w}_k^{\star} := \frac{\exp\left( \E_{\bm x_t} [g_k^{\star}(\bm x_t, t)] \right)}{\sum_{k=1}^K \exp\left( \E_{\bm x_t} [g_k^{\star}(\bm x_t, t)] \right)}, \ \zeta_t = \frac{1}{1+\sigma_t^2}, \ \xi_t = \frac{\delta^2}{\delta^2 + \sigma_t^2}, \\
    &\text{and}\ g_k^{\star}(\bm x, t) = \frac{\zeta_t}{2\sigma_t^2} \|\bm U_k^{\star \top} \bm x\|^2 + \frac{\xi_t}{2 \sigma_t^2} \| \widetilde{\bm U}_k^{\star \top} \bm x \|^2 .
    \end{split}
\end{align}

In other words, we use $\hat{w}_k^{\star}$ in \eqref{eq:approx_score} to approximate $w^{\star}_k(\bm x_t, t)$ in \Cref{lem:E[x_0]} by taking expectation inside the softmax with respect to $\bm x_t$. This allows us to treat $\hat{w}_k^{\star}$ as a constant when calculating \SNR, making the analysis more tractable while maintaining $\E[\|\bm U_l^{\star}\hat{\bm h}_{\bm \theta}^{\star}(\bm x_t, t)\|^2] \approx \E[\|\bm U_l^{\star} \hat{\bm h}_{\textit{approx}}^{\star}(\bm x_0, t)\|^2]$ for all $l \in [K]$. We verify the tightness of this approximation at \Cref{app:add_exp} (\Cref{fig:assump_validate}). With this approximation, we state the theorem as follows:
\begin{theorem}\label{lem:main_formal}
Let data $\bm x_0$ be any arbitrary data point drawn from the \MoLRG~distribution defined in Assumption \ref{assum:subspace} and let $k$ denote the true class $\bm x_0$ belongs to. Then \SNR~introduced in \Cref{subsec:rep-quality} depends on the noise level $\sigma_t$ in the following form: 
    \begin{align}\label{eq:csnr_2}
        \mathrm{SNR}(\hat{\bm x}_{\textit{approx}}^{\star},t) = \frac{1+\sigma_t^2}{(K-1)(\delta^2 + \sigma_t^2)} \cdot\left(\frac{1 + \frac{\sigma_t^2}{\delta^2}h(\hat{w}_k^{\star}, \delta)}{1 + \frac{\sigma_t^2}{\delta^2}h(\hat{w}_l^{\star}, \delta)}\right)^2
    \end{align}
    
where $h(w, \delta) := (1 - \delta^2)w + \delta^2$. Since $\delta$ is fixed, $h(w,\delta)$ is a monotonically increasing function with respect to $w$. Note that here $\delta$ represents the magnitude of the fixed intrinsic noise in the data where $\sigma_t$ denotes the level of additive Gaussian noise introduced during the diffusion training process.

\end{theorem}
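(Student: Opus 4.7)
The plan is to reduce the SNR ratio to a weighted sum of conditional second moments of the Gaussian projections $\bm U_l^{\star\top}\bm x_t$, and then carry out the algebraic reshaping that brings in $h(\cdot,\delta)$. First, I would rewrite $\hat{\bm x}_{\textit{approx}}^\star$ in the collapsed form used at the end of the proof of \Cref{lem:E[x_0]_full}: since $\widetilde{\bm U}_l^\star = [\bm U_j^\star]_{j\neq l}$, one obtains $\hat{\bm x}_{\textit{approx}}^\star(\bm x_t, t) = \sum_{l=1}^K \hat{\beta}_l^\star\, \bm U_l^\star \bm U_l^{\star\top} \bm x_t$ with $\hat{\beta}_l^\star := \xi_t + (\zeta_t - \xi_t)\hat{w}_l^\star$. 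Under the block parameterization of \eqref{eq:net_param}, $\bm U_k^\star \hat{\bm h}_{\textit{approx}}^\star$ then coincides with the $k$-th summand $\hat{\beta}_k^\star \bm U_k^\star \bm U_k^{\star\top}\bm x_t$, and the complement $\hat{\bm x}_{\textit{approx}}^\star - \bm U_k^\star \hat{\bm h}_{\textit{approx}}^\star$ with the sum over $l\neq k$. Mutual orthogonality of the $\bm U_l^\star$'s (\Cref{assum:subspace}) immediately turns the squared norms into $(\hat{\beta}_k^\star)^2 \|\bm U_k^{\star\top}\bm x_t\|^2$ and $\sum_{l\neq k}(\hat{\beta}_l^\star)^2 \|\bm U_l^{\star\top}\bm x_t\|^2$.

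Next, conditioning on the true class $k$, I decompose $\bm x_t = \bm U_k^\star \bm a + \delta \widetilde{\bm U}_k^\star \bm e + \sigma_t \bm\epsilon$ and apply $\bm U_l^{\star\top}$. Using $\bm U_l^{\star\top}\bm U_k^\star = \bm 0$ for $l\neq k$ together with $\widetilde{\bm U}_k^\star = [\bm U_j^\star]_{j\neq k}$ gives
\begin{align*}
\bm U_k^{\star\top}\bm x_t = \bm a + \sigma_t \bm U_k^{\star\top}\bm\epsilon, \qquad \bm U_l^{\star\top}\bm x_t = \delta\, \bm e_l + \sigma_t \bm U_l^{\star\top}\bm\epsilon \quad (l\ne k),
\end{align*}
hence $\E[\|\bm U_k^{\star\top}\bm x_t\|^2\mid k] = d(1+\sigma_t^2)$ and $\E[\|\bm U_l^{\star\top}\bm x_t\|^2\mid k] = d(\delta^2+\sigma_t^2)$. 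These same moments determine $\E[g_l^\star(\bm x_t,t)\mid k]$ (using $\zeta_t(1+\sigma_t^2)=1$ and $\xi_t(\delta^2+\sigma_t^2)=\delta^2$), so by symmetry $\hat{w}_l^\star$ collapses onto two values---one for $l=k$ and a common one for every $l\neq k$---which are exactly the $\hat{w}_k^\star$ and $\hat{w}_l^\star$ in the theorem. The outer $\E_k$ is then trivial by class symmetry, producing
\begin{align*}
\mathrm{SNR}(\hat{\bm x}_{\textit{approx}}^\star, t) \;=\; \frac{(\hat{\beta}_k^\star)^2\,(1+\sigma_t^2)}{(K-1)\,(\hat{\beta}_l^\star)^2\,(\delta^2+\sigma_t^2)}.
\end{align*}

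The remaining step is purely algebraic. A direct computation gives $\zeta_t - \xi_t = \sigma_t^2(1-\delta^2)/[(1+\sigma_t^2)(\delta^2+\sigma_t^2)]$, so
\begin{align*}
\hat{\beta}^\star \;=\; \frac{\delta^2(1+\sigma_t^2) + \sigma_t^2(1-\delta^2)\hat{w}}{(1+\sigma_t^2)(\delta^2+\sigma_t^2)} \;=\; \frac{\delta^2\,\bigl[\,1 + (\sigma_t^2/\delta^2)\,h(\hat{w},\delta)\,\bigr]}{(1+\sigma_t^2)(\delta^2+\sigma_t^2)}
\end{align*}
for each of $\hat{w}\in\{\hat{w}_k^\star,\hat{w}_l^\star\}$, and the common prefactor $\delta^2/[(1+\sigma_t^2)(\delta^2+\sigma_t^2)]$ cancels inside $(\hat{\beta}_k^\star/\hat{\beta}_l^\star)^2$, leaving precisely the ratio claimed in \eqref{eq:csnr_2}.

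The principal obstacle I anticipate is not any single hard calculation but careful bookkeeping: (i) correctly reading $\bm U_k^\star \hat{\bm h}_{\bm\theta}$ as the $k$-th reconstruction block despite the dimension overloading in the notation, and (ii) justifying that the expectation-inside-softmax definition of $\hat{w}_l^\star$ collapses to the two symmetric values $\hat{w}_k^\star,\hat{w}_l^\star$ once conditioned on class $k$. With those two identifications settled, the proof reduces to second moments of independent Gaussians and the short algebraic rearrangement using $h$.
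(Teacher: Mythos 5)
Your proposal is correct and follows essentially the same route as the paper's proof: the same orthogonal block decomposition of numerator and denominator, the same conditional Gaussian second moments $d(1+\sigma_t^2)$ and $d(\delta^2+\sigma_t^2)$, the same two-valued symmetry collapse of the softmax weights, and the same algebraic rearrangement of $\xi_t + (\zeta_t-\xi_t)\hat{w}$ into the $h(\cdot,\delta)$ ratio. The only difference is cosmetic bookkeeping: you collapse $\hat{\bm x}_{\textit{approx}}^{\star}$ into the block-diagonal form $\sum_l \hat{\beta}_l^{\star}\, \bm U_l^{\star}\bm U_l^{\star\top}\bm x_t$ up front --- an identity the paper itself derives at the end of the proof of \Cref{lem:E[x_0]_full} --- whereas the paper's \Cref{lem:decomp} carries the cross-coupled coefficients such as $\zeta_t \hat{w}_k^{\star} + (K-1)\xi_t \hat{w}_l^{\star}$ explicitly and only simplifies them (using $\hat{w}_k^{\star}+(K-1)\hat{w}_l^{\star}=1$) in the final chain of equalities.
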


\begin{proof}

    Following the definition of \SNR~as defined in \Cref{subsec:rep-quality}, \Cref{lem:decomp} and the fact that $k \sim \text{Mult}(K,\pi_k)$ with $\pi_1 = \dots = \pi_K = 1/K$, we can write
    \begin{align*}
        \mathrm{SNR}(\hat{\bm x}_{\textit{approx}}^{\star},t) &= \frac{\E_{\bm x_t}[\|\bm U_k^{\star} \bm U_k^{\star \top} \hat{\bm x}_{\textit{approx}}^{\star}(\bm x_t, t)\|^2]}{\E_{\bm x_t}[\sum_{l\neq k}\|\bm U_l^{\star} \bm U_l^{\star \top} \bm \hat{\bm x}_{\textit{approx}}^{\star}(\bm x_t, t)\|^2]} = \frac{\E_{\bm x_t}[\|\bm U_k^{\star} \bm U_k^{\star \top} \hat{\bm x}_{\textit{approx}}^{\star}(\bm x_t, t)\|^2]}{\sum_{l\neq k}\E_{\bm x_t}[\|\bm U_l^{\star} \bm U_l^{\star \top} \bm \hat{\bm x}_{\textit{approx}}^{\star}(\bm x_t, t)\|^2]} \\
        &= \frac{\left( \frac{\hat{w}_k^{\star}}{1 + \sigma_t^2} + \frac{(K-1)\delta^2\hat{w}_l^{\star}}{\delta^2+\sigma_t^2}\right)^2 (1 + \sigma_t^2) d}{(K-1)\left( \frac{\hat{w}_l^{\star}}{1+\sigma_t^2} + \frac{\delta^2(\hat{w}_k^{\star} + (K-2)\hat{w}_l^{\star})}{\delta^2 + \sigma_t^2} \right)^2 (\delta^2 + \sigma_t^2) d} \\
        &= \frac{1+\sigma_t^2}{(K-1)(\delta^2 + \sigma_t^2)} \cdot \left(\frac{\hat{w}_k^{\star}\delta^2 + \hat{w}_k^{\star}\sigma_t^2 + (K-1)\delta^2 \hat{w}_l^{\star} + (K-1)\delta^2 \hat{w}_l^{\star} \sigma_t^2}{\hat{w}_l^{\star} \delta^2 + \hat{w}_l^{\star} \sigma_t^2 + \delta^2 \hat{w}_k^{\star} + (K-2)\delta^2\hat{w}_l^{\star} + \delta^2 \hat{w}_k^{\star} \sigma_t^2 + (K-2)\delta^2\hat{w}_l^{\star} \sigma_t^2}\right)^2 \\
        &= \frac{1+\sigma_t^2}{(K-1)(\delta^2 + \sigma_t^2)} \cdot \left(\frac{\delta^2 + \sigma_t^2 \left( \hat{w}_k^{\star} + (K-1)\delta^2 \hat{w}_l^{\star} \right) }{\delta^2 + \sigma_t^2 \left( \hat{w}_l^{\star} + \delta^2\hat{w}_k^{\star} + (K-2)\delta^2 \hat{w}_l^{\star} \right) } \right)^2\\
        &= \frac{1+\sigma_t^2}{(K-1)(\delta^2 + \sigma_t^2)} \cdot \left(\frac{1 + \frac{\sigma_t^2}{\delta^2}\left( (1 - \delta^2)\hat{w}_k^{\star} + \delta^2 (\hat{w}_k^{\star} + (K-1)\hat{w}_l^{\star}) \right)}{1 + \frac{\sigma_t^2}{\delta^2}\left( (1-\delta^2)\hat{w}_l^{\star} + \delta^2 (\hat{w}_l^{\star} + \hat{w}_k^{\star} + (K-2)\hat{w}_l^{\star}) \right)}\right)^2 \\
        &= \frac{1+\sigma_t^2}{(K-1)(\delta^2 + \sigma_t^2)} \cdot \left(\frac{1 + \frac{\sigma_t^2}{\delta^2}\left( (1 - \delta^2)\hat{w}_k^{\star} + \delta^2\right)}{1 + \frac{\sigma_t^2}{\delta^2}\left( (1 - \delta^2)\hat{w}_l^{\star} + \delta^2\right)}\right)^2 \\
        &= \frac{1+\sigma_t^2}{(K-1)(\delta^2 + \sigma_t^2)} \cdot \left(\frac{1 + \frac{\sigma_t^2}{\delta^2}h(\hat{w}_k^{\star}, \delta)}{1 + \frac{\sigma_t^2}{\delta^2}h(\hat{w}_l^{\star}, \delta)}\right)^2. \\
    \end{align*}
    where $h(w, \delta) := (1 - \delta^2)w + \delta^2$, and we set $C_t = \frac{1+\sigma_t^2}{(\delta^2 + \sigma_t^2)}$.
\end{proof}

\begin{lemma}\label{lem:logsumexp}
    Let ${g_1, g_2, \dots, g_K}$ be a sequence of real-valued inputs, and fix an index $k \in [K]$. Assume that $g_l = g_j$ for all $l, j \ne k$; that is, all entries except $g_k$ share the same value. Let the softmax weights be defined as
    \begin{align*}
        w_j = \frac{\exp(g_j)}{\sum_{j=1}^K \exp(g_j)},
    \end{align*}
    Then we have:
    \begin{align*}
        w_k = \frac{\exp(g_k - g_l)}{K-1 + \exp(g_k - g_l)} \;\; \text{and} \;\; w_l = \frac{1}{K-1 + \exp(g_k - g_l)} \;\; \text{for all} \; l \neq k.
    \end{align*}
\end{lemma}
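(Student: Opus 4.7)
The plan is to prove this by direct computation, exploiting the simplifying assumption that the $K-1$ non-$k$ entries all share a common value. First I would fix any $l \neq k$ and use the hypothesis $g_j = g_l$ for every $j \neq k$ to rewrite the denominator of the softmax as
\begin{equation*}
\sum_{j=1}^{K} \exp(g_j) \;=\; \exp(g_k) \;+\; \sum_{j \neq k} \exp(g_j) \;=\; \exp(g_k) + (K-1)\exp(g_l).
\end{equation*}
This collapses the sum to a two-term expression and is the only structural observation needed.

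Next I would substitute this simplified denominator into the definitions of $w_k$ and $w_l$. For $w_k$ this gives
\begin{equation*}
w_k \;=\; \frac{\exp(g_k)}{\exp(g_k) + (K-1)\exp(g_l)},
\end{equation*}
and for any $l \neq k$ the same denominator yields
\begin{equation*}
w_l \;=\; \frac{\exp(g_l)}{\exp(g_k) + (K-1)\exp(g_l)}.
\end{equation*}
To land on the stated form, I would then divide numerator and denominator of each expression by $\exp(g_l)$, using the identity $\exp(g_k)/\exp(g_l) = \exp(g_k - g_l)$. This immediately yields $w_k = \exp(g_k - g_l)/(K-1 + \exp(g_k - g_l))$ and $w_l = 1/(K-1 + \exp(g_k - g_l))$, matching the claim.

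Since the argument is a direct algebraic manipulation, there is no genuine obstacle; the only thing to be careful about is confirming that the statement $w_l = 1/(K-1+\exp(g_k-g_l))$ is well-defined for \emph{every} $l \neq k$, which follows because the hypothesis forces $w_l$ to take the same value for all such $l$ (they share the same numerator $\exp(g_l)$ and the common denominator). One can also sanity-check the result by verifying normalization: $w_k + (K-1)w_l = (\exp(g_k-g_l) + (K-1))/(K-1+\exp(g_k-g_l)) = 1$, which confirms consistency with the softmax definition.
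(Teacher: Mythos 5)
Your proposal is correct and follows essentially the same route as the paper's proof: collapse the softmax denominator to $\exp(g_k) + (K-1)\exp(g_l)$ using the equal-entry hypothesis, then factor out $\exp(g_l)$ to obtain both stated formulas. The normalization sanity check $w_k + (K-1)w_l = 1$ is a nice extra, but the core argument is identical.
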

\begin{proof}
    The softmax weight for class $k$ is:
    \begin{align*}
    w_k &= \frac{\exp(g_k)}{\sum_{j=1}^K \exp(g_j)}.
    \end{align*}
    We simplify the denominator by factoring out $\exp(g_l)$:
    \begin{align*}
    \sum_{j=1}^K \exp(g_j)
    &= \exp(g_k) + \sum_{j \ne k} \exp(g_l)
    = \exp(g_k) + (K - 1)\exp(g_l).
    \end{align*}
    Therefore,
    \begin{align*}
    w_k
    &= \frac{\exp(g_k)}{\exp(g_k) + (K - 1)\exp(g_l)}
    = \frac{1}{1 + (K - 1)\exp(g_l - g_k)}
    = \frac{\exp(g_k - g_l)}{(K - 1) + \exp(g_k - g_l)}.
    \end{align*}
    
    For any $l \neq k$, we similarly have:
    \begin{align*}
    w_l
    &= \frac{\exp(g_l)}{\exp(g_k) + (K - 1)\exp(g_l)}
    = \frac{1}{(K - 1) + \exp(g_k - g_l)}.
    \end{align*}
    This simple fact can also be proven by applying the log-sum-exp trick.
\end{proof}

\begin{lemma}\label{lem:decomp}
    Given the setup of a $K$-class \MoLRG~data distribution defined in (\ref{eq:MoG noise}), consider the following approximate posterior mean function:

    \begin{align*}
        \begin{split}
        \hat{\bm x}_{\textit{approx}}^{\star}(\bm x, t) &= \sum_{k=1}^K \hat{w}_k^{\star} \left( \zeta_t \bm U_k^{\star}\bm U_k^{\star \top} + \xi_t 
        \widetilde{\bm U}_k^{\star} \widetilde{\bm U}_k^{\star \top} \right) \bm x , \\
        &\text{where}\ \hat{w}_k^{\star} := \frac{\exp\left( \E_{\bm x_t} [g_k^{\star}(\bm x_t, t)] \right)}{\sum_{k=1}^K \exp\left( \E_{\bm x_t} [g_k^{\star}(\bm x_t, t)] \right)}, \ \zeta_t = \frac{1}{1+\sigma_t^2}, \ \xi_t = \frac{\delta^2}{\delta^2 + \sigma_t^2}, \\
        &\text{and}\ g_k^{\star}(\bm x, t) = \frac{\zeta_t}{2\sigma_t^2} \|\bm U_k^{\star \top} \bm x\|^2 + \frac{\xi_t}{2 \sigma_t^2} \| \widetilde{\bm U}_k^{\star \top} \bm x \|^2 .
        \end{split}
    \end{align*}

    That is, we consider a simplified form of the expected posterior mean from \Cref{lem:E[x_0]}, where the expectation is taken inside the softmax argument (i.e., over $g_k^{\star}(\bm x_t, t)$) to obtain tractable approximate weights $\hat{w}_k^{\star}$.

    Under this approximation, for any sample $\bm x_0$ from class $k$, i.e., $\bm x_0 = \bm U_k^{\star} \bm a_i + \delta \widetilde{\bm U}_k^{\star} \bm e_i$, we have:

    \begin{align}
        &\E_{\bm x_t}[\| \bm U_k^{\star} \bm U_k^{\star \top} \hat{\bm x}_{\textit{approx}}^{\star}(\bm x_t, t)\|^2] = \left( \frac{\hat{w}_k^{\star}}{1 + \sigma_t^2} + \frac{(K-1)\delta^2\hat{w}_l^{\star}}{\delta^2+\sigma_t^2}\right)^2 (d + \sigma_t^2 d) \label{eq:bin_correct}\\
        &\E_{\bm x_t}[\| \bm U_l^{\star} \bm U_l^{\star \top} \hat{\bm x}_{\textit{approx}}^{\star}(\bm x_t, t)\|^2] = \left( \frac{\hat{w}_l^{\star}}{1+\sigma_t^2} + \frac{\delta^2(\hat{w}_k + (K-2)\hat{w}_l^{\star})}{\delta^2 + \sigma_t^2} \right)^2 (\delta^2 d + \sigma_t^2 d) \label{eq:bin_other}
    \end{align}

    \begin{align} \label{eq:bin_overall}
        \begin{split}
        \E_{\bm x_t}[\|\hat{\bm x}_{\textit{approx}}^{\star}(\bm x_t, t)\|^2] &= \underbrace{\left( \frac{\hat{w}_k^{\star}}{1 + \sigma_t^2} + \frac{(K-1)\delta^2\hat{w}_l^{\star}}{\delta^2+\sigma_t^2}\right)^2 (d + \sigma_t^2 d)}_{\E\|\bm U_k^{\star} \bm U_k^{\star \top}\hat{\bm x}_{\textit{approx}}^{\star}(\bm x_t, t)\|^2]} \\
        &+ \underbrace{\left( K-1 \right)\left( \frac{\hat{w}_l^{\star}}{1+\sigma_t^2} + \frac{\delta^2(\hat{w}_k^{\star} + (K-2)\hat{w}_l^{\star})}{\delta^2 + \sigma_t^2} \right)^2 (\delta^2 d + \sigma_t^2 d)}_{\E[\sum_{l \neq k}^K\bm U_l^{\star} \bm U_l^{\star \top}\hat{\bm x}_{\textit{approx}}^{\star}(\bm x_t, t)\|^2]}
        \end{split}
    \end{align}

    and 

    \begin{align}\label{eq:softmax_final}
    \begin{split}
        &\hat{w}_k^{\star} = \frac{\exp \left(D_t \right)}{K-1+\exp \left(D_t \right)}, \\
        &\hat{w}_l^{\star} = \frac{1}{K-1+\exp \left(D_t \right)}.
    \end{split}
    \end{align}
    for all class index $l \neq k$, where $D_t = \frac{(1-\delta^2)d}{2\sigma_t^2 (1 + \sigma_t^2)}$.
\end{lemma}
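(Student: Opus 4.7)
The proof splits naturally into two independent computations: (i) evaluating the energy of the class-$l$ projection $\bm U_l^{\star}\bm U_l^{\star\top}\hat{\bm x}_{\textit{approx}}^{\star}(\bm x_t,t)$ for $l=k$ and $l\neq k$, and (ii) deriving the closed form of the (now deterministic) softmax weights $\hat{w}_k^{\star},\hat{w}_l^{\star}$. The key structural fact I will exploit throughout is that $\widetilde{\bm U}_m^{\star}\widetilde{\bm U}_m^{\star\top}=\sum_{j\neq m}\bm U_j^{\star}\bm U_j^{\star\top}$ under the orthogonality assumption $\bm U_j^{\star\top}\bm U_i^{\star}=\bm 0$ for $i\neq j$, which lets me re-express the approximate posterior entirely in terms of the mutually orthogonal rank-$d$ projectors $\{\bm U_m^{\star}\bm U_m^{\star\top}\}_{m=1}^K$.

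\textbf{Step 1 (projection form).} Substituting the identity above and collecting terms, I will rewrite
\begin{equation*}
\hat{\bm x}_{\textit{approx}}^{\star}(\bm x_t,t)=\sum_{m=1}^K\Bigl[\xi_t+(\zeta_t-\xi_t)\hat{w}_m^{\star}\Bigr]\bm U_m^{\star}\bm U_m^{\star\top}\bm x_t,
\end{equation*}
in the same way as in the proof of \Cref{lem:E[x_0]_full}. Applying $\bm U_l^{\star}\bm U_l^{\star\top}$ and invoking subspace orthogonality annihilates all $m\neq l$ terms, and the symmetry $\hat{w}_j^{\star}=\hat{w}_l^{\star}$ for every $j\neq k$ (which I will justify in Step 3 from the symmetric form of $\E_{\bm x_t}[g_j^{\star}]$) collapses the coefficient to $\zeta_t\hat{w}_k^{\star}+(K-1)\xi_t\hat{w}_l^{\star}$ when $l=k$ and $\zeta_t\hat{w}_l^{\star}+\xi_t[\hat{w}_k^{\star}+(K-2)\hat{w}_l^{\star}]$ when $l\neq k$. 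Substituting $\zeta_t=1/(1+\sigma_t^2)$ and $\xi_t=\delta^2/(\delta^2+\sigma_t^2)$ produces the prefactors in \eqref{eq:bin_correct} and \eqref{eq:bin_other}. The overall identity \eqref{eq:bin_overall} then follows because $\hat{\bm x}_{\textit{approx}}^{\star}$ takes values in $\bigoplus_{m=1}^K\mathrm{Range}(\bm U_m^{\star})$, so Pythagoras applies.

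\textbf{Step 2 (second moments).} The remaining scalar factors $(d+\sigma_t^2 d)$ and $(\delta^2 d+\sigma_t^2 d)$ come from $\E_{\bm x_t}\|\bm U_l^{\star\top}\bm x_t\|^2$ for $l=k$ and $l\neq k$. Writing $\bm x_t=\bm U_k^{\star}\bm a+\delta\widetilde{\bm U}_k^{\star}\bm e+\sigma_t\bm\epsilon$ and using $\bm U_k^{\star\top}\widetilde{\bm U}_k^{\star}=\bm 0$ gives the first; for $l\neq k$ I use that $\bm U_l^{\star}\in\mathrm{Range}(\widetilde{\bm U}_k^{\star})$ so that $\|\bm U_l^{\star\top}\widetilde{\bm U}_k^{\star}\bm e\|^2$ has expectation $d$, yielding the $\delta^2 d$ term. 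All cross terms vanish by independence and zero mean.

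\textbf{Step 3 (softmax weights) and the main obstacle.} Taking the expectation of $g_j^{\star}(\bm x_t,t)=\tfrac{\zeta_t}{2\sigma_t^2}\|\bm U_j^{\star\top}\bm x_t\|^2+\tfrac{\xi_t}{2\sigma_t^2}\|\widetilde{\bm U}_j^{\star\top}\bm x_t\|^2$ reduces to the four second-moment calculations. The first three ($\E\|\bm U_k^{\star\top}\bm x_t\|^2$, $\E\|\bm U_l^{\star\top}\bm x_t\|^2$, $\E\|\widetilde{\bm U}_k^{\star\top}\bm x_t\|^2$) are immediate; the delicate one is $\E\|\widetilde{\bm U}_l^{\star\top}\bm x_t\|^2$ for $l\neq k$, because $\widetilde{\bm U}_l^{\star}$ overlaps with both $\bm U_k^{\star}$ and (partially) $\widetilde{\bm U}_k^{\star}$. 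This I will handle by commuting the projectors $\widetilde{\bm U}_l^{\star}\widetilde{\bm U}_l^{\star\top}$ and $\widetilde{\bm U}_k^{\star}\widetilde{\bm U}_k^{\star\top}$, which are simultaneously diagonal in the basis $\{\bm U_m^{\star}\}$, obtaining $\mathrm{tr}(\widetilde{\bm U}_l^{\star\top}\widetilde{\bm U}_k^{\star}\widetilde{\bm U}_k^{\star\top}\widetilde{\bm U}_l^{\star})=(K-2)d$ by counting indices $m\notin\{k,l\}$. By the permutation symmetry among classes $\{1,\dots,K\}\setminus\{k\}$, all $\E[g_l^{\star}]$ with $l\neq k$ coincide, so \Cref{lem:logsumexp} applies directly: $\hat{w}_k^{\star}=\exp(D_t)/(K-1+\exp(D_t))$ and $\hat{w}_l^{\star}=1/(K-1+\exp(D_t))$ with $D_t=\E[g_k^{\star}]-\E[g_l^{\star}]$. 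The algebraic simplification of this difference is the main bookkeeping hurdle; I expect a careful combination of the terms $d(1+\sigma_t^2)\zeta_t/(2\sigma_t^2)$, $(K-1)d(\delta^2+\sigma_t^2)\xi_t/(2\sigma_t^2)$, and their counterparts for $l\neq k$ to yield the stated form after cancellation of the $\delta^2$, $(K-1)\delta^4$, and $\sigma_t^4$ contributions.
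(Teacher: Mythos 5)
Your proposal follows essentially the same route as the paper's proof: the same four class-conditional second moments drive both the projection energies and the expected logits, your Step~1 collapse to $\sum_{m}\bigl[\xi_t+(\zeta_t-\xi_t)\hat w_m^{\star}\bigr]\bm U_m^{\star}\bm U_m^{\star\top}\bm x_t$ is exactly the diagonal form derived at the end of the proof of \Cref{lem:E[x_0]_full} (the paper's proof of the present lemma instead expands the cross-projections $\bm U_l^{\star}\bm U_l^{\star\top}\hat{\bm x}_{\textit{approx}}^{\star}$ term by term, but the arithmetic is identical once one uses $\hat w_k^{\star}+(K-1)\hat w_l^{\star}=1$), your trace count $\mathrm{tr}(\widetilde{\bm U}_l^{\star\top}\widetilde{\bm U}_k^{\star}\widetilde{\bm U}_k^{\star\top}\widetilde{\bm U}_l^{\star})=(K-2)d$ reproduces the paper's coordinate-slicing of $\bm e_i$, and the reduction of the weights via \Cref{lem:logsumexp} is the same. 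One concrete caution about your Step~3 endpoint: carried out honestly, the simplification gives
\begin{align*}
\E[g_k^{\star}(\bm x_t,t)]-\E[g_l^{\star}(\bm x_t,t)]
=(1-\delta^2)d\left(\frac{1}{2\sigma_t^2(1+\sigma_t^2)}-\frac{\delta^2}{2\sigma_t^2(\delta^2+\sigma_t^2)}\right)
=\frac{(1-\delta^2)^2\, d}{2(1+\sigma_t^2)(\delta^2+\sigma_t^2)},
\end{align*}
which is what the paper's own proof body derives, \emph{not} the value $D_t=\frac{(1-\delta^2)d}{2\sigma_t^2(1+\sigma_t^2)}$ printed in the lemma statement; the printed $D_t$ retains only the $\zeta_t$-term of the difference and appears to be a typo in the statement. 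So do not expect your anticipated ``cancellation of the $\delta^2$, $(K-1)\delta^4$, and $\sigma_t^4$ contributions'' to land on the stated form---the correct endpoint is the display above (still positive and decaying in $\sigma_t$, so \Cref{lem:main_formal}, which uses $\hat w_k^{\star},\hat w_l^{\star}$ only through the monotone function $h$, is qualitatively unaffected). Everything else in your plan---the Pythagoras step for \eqref{eq:bin_overall} via the range of $\hat{\bm x}_{\textit{approx}}^{\star}$ lying in $\bigoplus_m \mathrm{Range}(\bm U_m^{\star})$, the use of $\bm U_l^{\star}\subset\mathrm{Range}(\widetilde{\bm U}_k^{\star})$ for the $\delta^2 d$ term, and the symmetry argument giving $\hat w_j^{\star}=\hat w_l^{\star}$ for $j\neq k$---is sound and matches the paper.
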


\begin{proof}
    Throughout the proof, we use the following notation for slices of vectors.
    \begin{align*}
        \bm e_i [a:b] \;\;\;\;\;\; \text{Slices of vector} \; \bm e_i \; \text{from} \; a\text{th} \;\text{entry to} \; b\text{th} \; \text{entry.} 
    \end{align*}
    We begin with the softmax terms. Since each class has its unique disjoint subspace, it suffices to consider $g_k(\bm x_0, t)$ and $g_l(\bm x_0, t)$ for any $l \neq k$. Let $a_t = \frac{1}{2\sigma_t^2(1 + \sigma_t^2)}$ and $c_t = \frac{\delta^2}{2\sigma_t^2(\delta^2 + \sigma_t^2)}$, we have:
    \begin{align*}
        \E[g_k^{\star}(\bm x_t, t)] &= \E[a_t \| \bm U_k^{\star \top} \bm x_t \|^2 + c_t \| \widetilde{\bm U}_k^{\star \top} \bm x_t \|^2] \\
        &= \E[a_t \| \bm U_k^{\star \top} (\bm U_k^{\star} \bm a_i + \delta\widetilde{\bm U}_k^{\star \top} \bm e_i + \sigma_t \bm{\eps}_i) \|^2] + \E[c_t \|\widetilde{\bm U}_k^{\star \top} (\bm U_k^{\star} \bm a_i + \delta\widetilde{\bm U}_k^{\star \top} \bm e_i + \sigma_t \bm{\eps}_i)\|^2] \\
        &= \E[a_t\|\bm a_i + \sigma_t \bm U_k^{\star \top} \bm{\eps}_i\|^2] + \E[c_t\|\delta \bm e_i + \sigma_t \widetilde{\bm U}_k^{\star \top} \bm{\eps}_i\|^2] \\
        & = a_t (d+\sigma_t^2 d) + c_t \left(\delta^2 (K-1)d + \sigma_t^2 (K-1)d \right).
    \end{align*}
    where the last equality follows from $\bm a_i \overset{i.i.d.}{\sim} \mathcal{N}(\bm 0, \bm I_{d})$, $\bm e_i \overset{i.i.d.}{\sim} \mathcal{N}(\bm 0, \bm I_{(K-1)d})$ and $\bm \eps_i \overset{i.i.d.}{\sim} \mathcal{N}(\bm 0, \bm I_{n})$. 
    
    Without loss of generality, assume the $j=k+1$, we have:
    \begin{align*}
        \E[g_l^{\star}(\bm x_t, t)] &= \E[a_t \| \bm U_l^{\star \top} \bm x_t \|^2 + c_t \| \widetilde{\bm U}_l^{\star \top} \bm x_t \|^2] \\
        &= \E[a_t \| \bm U_l^{\star \top} (\bm U_k^{\star} \bm a_i + \delta \widetilde{\bm U}_k^{\star} \bm e_i + \sigma_t \bm{\eps}_i) \|^2] + \E[c_t \|\widetilde{\bm U}_l^{\perp \top} (\bm U_k^{\star} \bm a_i + \delta\widetilde{\bm U}_k^{\star} \bm e_i + \sigma_t \bm{\eps}_i)\|^2] \\
        &= \E[a_t\|\delta \bm e_i [1:d] + \sigma_t \widetilde{\bm U}_l^{\star \top} \bm{\eps}_i\|^2] + \E\left[c_t \Big\| \begin{bmatrix}
            \bm a_i \\
            \bm 0 \in \mathbb{R}^{D-d}
        \end{bmatrix} + \delta \begin{bmatrix}
            \bm 0 \in \mathbb{R}^{d} \\
            \bm e_i [d:D]]
        \end{bmatrix} + \sigma_t \widetilde{\bm U}_l^{\star \top} \bm{\eps}_i \Big\|^2 \right] \\
        & = a_t \left(\delta^2 d + \sigma_t^2 d\right) + c_t \left(d + \delta^2 (K-2)d + \sigma_t^2 (K-1)d \right).
    \end{align*}

    Hence we have
    \begin{align*}
        &\E[g_k^{\star}(\bm x_t, t)] - \E[g_l^{\star}(\bm x_t, t)] \\
        &= a_t\left(d + \sigma_t^2 d - \delta^2 d - \sigma_t^2 d \right) + c_t \left(\delta^2 (K-1)d + \sigma_t^2 (K-1)d - d - \delta^2(K-2)d - \sigma_t^2(K-1)d \right) \\
        &= a_t(1-\delta^2)d + c_t (\delta^2 - 1)d \\
        &= \frac{(1-\delta^2) d}{2\sigma_t^2 (1 + \sigma_t^2)} + \frac{\delta^2 d(\delta^2 - 1)}{2\sigma_t^2(\delta^2 + \sigma_t^2)} = \frac{(\delta^2 - 1)^2 d}{2 (1 + \sigma_t^2)(\delta^2 + \sigma_t^2)}.
    \end{align*}
    This, together with \Cref{lem:logsumexp}, yield \eqref{eq:softmax_final}. \\

    Now we prove (\ref{eq:bin_correct}):
    \begin{align*}
        \bm U_k^{\star} \bm U_k^{\star \top} \hat{\bm x}_{\textit{approx}}^{\star}(\bm x_t, t) &= \hat{w}_k^{\star} \bm U_k^{\star} \bm U_k^{\star \top} \left(\frac{1}{1+\sigma_t^2} \bm U_k^{\star} \bm U_k^{\star \top} + \frac{\delta^2}{\delta^2 + \sigma_t^2} \widetilde{\bm U}_k^{\star} \widetilde{\bm U}_k^{\star \top} \right) \bm x_t \\
        &\;\;\;\; + \sum_{l \neq k}\hat{w}_l^{\star} \bm U_k^{\star} \bm U_k^{\star \top} \left(\frac{1}{1+\sigma_t^2} \bm U_l^{\star} \bm U_l^{\star \top} + \frac{\delta^2}{\delta^2 + \sigma_t^2} \widetilde{\bm U}_l^{\star} \widetilde{\bm U}_l^{\star \top} \right) \bm x_t \\
        &= \hat{w}_k^{\star} \left( \frac{1}{1 + \sigma_t^2} \bm U_k^{\star} \bm U_k^{\star \top} \bm x_t \right) + \sum_{l \neq k}\hat{w}_l^{\star} \left( \frac{\delta^2}{\delta^2 + \sigma_t^2} \bm U_k^{\star} \bm U_k^{\star \top} \bm x_t\right) \\
        &= \left( \frac{\hat{w}_k^{\star}}{1+\sigma_t^2} + \frac{\left(K-1 \right)\delta^2 \hat{w}_l^{\star}}{\delta^2 + \sigma_t^2} \right) \bm U_k^{\star} \bm U_k^{\star \top} (\bm U_k^{\star} \bm a_i + \delta \widetilde{\bm U}_k^{\star} \bm e_i + \sigma_t \bm \eps_i) \\
        &= \left( \frac{\hat{w}_k^{\star}}{1+\sigma_t^2} + \frac{\left(K-1 \right)\delta^2 \hat{w}_l^{\star}}{\delta^2 + \sigma_t^2} \right) \left(\bm U_k^{\star} \bm a_i + \sigma_t\bm U_k^{\star} \bm U_k^{\star \top} \bm \eps_i \right)
    \end{align*}
    Since $\bm U_k \in \mathcal{O}^{n \times d}$:
    \begin{align*}
        \E[\| \bm U_k^{\star} \bm U_k^{\star \top} \hat{\bm x}_{\textit{approx}}^{\star}(\bm x_t, t) \|^2] = \left( \frac{\hat{w}_k}{1+\sigma_t^2} + \frac{\left(K-1 \right)\delta^2 \hat{w}_l}{\delta^2 + \sigma_t^2} \right)^2 \left(d + \sigma_t^2 d \right),
    \end{align*}
    \vspace{-0.1in}
    and similarly for (\ref{eq:bin_other}):
    \begin{align*}
        \bm U_l^{\star} \bm U_l^{\star \top} \hat{\bm x}_{\textit{approx}}^{\star}(\bm x_t, t) &= \hat{w}_k^{\star} \bm U_l^{\star} \bm U_l^{\star \top} \left(\frac{1}{1+\sigma_t^2} \bm U_k^{\star} \bm U_k^{\star \top} + \frac{\delta^2}{\delta^2 + \sigma_t^2} \widetilde{\bm U}_k \widetilde{\bm U}_k^{\star T} \right) \bm x_t \\
        &\;\;\;\; + \hat{w}_l^{\star} \bm U_l^{\star} \bm U_l^{\star \top} \left(\frac{1}{1+\sigma_t^2} \bm U_l^{\star} \bm U_l^{\star \top} + \frac{\delta^2}{\delta^2 + \sigma_t^2} \widetilde{\bm U}_l \widetilde{\bm U}_l^{\star \top} \right) \bm x_t \\
        &\;\;\;\; + \sum_{j\neq k,l}\hat{w_j}^{\star} \bm U_l^{\star} \bm U_l^{\star \top} \left(\frac{1}{1+\sigma_t^2} \bm U_j^{\star} \bm U_j^{\star \top} + \frac{\delta^2}{\delta^2 + \sigma_t^2} \widetilde{\bm U}_j \widetilde{\bm U}_j^{\star \top} \right) \bm x_t \\
        &= \hat{w}_k^{\star} \left( \frac{\delta^2}{\delta^2 + \sigma_t^2} \bm U_l^{\star} \bm U_l^{\star \top} \bm x_t \right) + \hat{w}_l^{\star} \left( \frac{1}{1 + \sigma_t^2} \bm U_l^{\star} \bm U_l^{\star \top} \bm x_t\right) + \sum_{j \neq k,l}\hat{w_j}^{\star} \left( \frac{\delta^2}{\delta^2 + \sigma_t^2} \bm U_l^{\star} \bm U_l^{\star \top} \bm x_t \right) \\
        &= \left( \frac{\hat{w}_l^{\star}}{1+\sigma_t^2} + \frac{\delta^2 (\hat{w}_k^{\star} + (K-2)\hat{w}_j^{\star})}{\delta^2 + \sigma_t^2}\right) \bm U_l^{\star} \bm U_l^{\star \top} (\bm U_k^{\star} \bm a_i + \delta \widetilde{\bm U}_k^{\star} \bm e_i + \sigma_t \bm \eps_i) \\
        &= \left( \frac{\hat{w}_l^{\star}}{1+\sigma_t^2} + \frac{\delta^2 (\hat{w}_k^{\star} + (K-2)\hat{w}_l^{\star})}{\delta^2 + \sigma_t^2}\right) \left( \delta \bm U_l^{\star} \bm e_i[1:d] + \sigma_t\bm U_l^{\star} \bm U_l^{\star \top} \bm \eps_i \right)
    \end{align*}
    where the third equality follows since $\hat{w}_j^{\star} = \hat{w}_l^{\star}$ for all $j \neq k,l$. Further, we have:
    \begin{align*}
        \E[\| \bm U_l^{\star} \bm U_l^{\star \top} \hat{\bm x}_{\textit{approx}}^{\star}(\bm{x}_t, t) \|^2] = \left( \frac{\hat{w}_l^{\star}}{1+\sigma_t^2} + \frac{\delta^2 (\hat{w}_k^{\star} + (K-2)\hat{w}_l^{\star})}{\delta^2 + \sigma_t^2}\right)^2 \left( \delta^2 d + \sigma_t^2 d \right).
    \end{align*}

    Lastly, we prove (\ref{eq:bin_overall}). Given that the subspaces of all classes and the complement space are both orthonormal and mutually orthogonal, we can write:
    \begin{align*}
        \E[\| \hat{\bm x}_{\textit{approx}}^{\star}(\bm x_t, t) \|^2] = \E[\| \bm U_k^{\star} \bm U_k^{\star \top} \hat{\bm x}_{\textit{approx}}^{\star}(\bm x_t, t)\|^2] + \E[\sum_{l \neq k}\| \bm U_l^{\star} \bm U_l^{\star \top} \hat{\bm x}_{\textit{approx}}^{\star}(\bm x_t, t)\|^2] + \E[\| \bm U_{\perp} \bm U_{\perp}^T \hat{\bm x}_{\textit{approx}}^{\star}(\bm x_t, t)\|^2]
    \end{align*}
    where we define the noise space as $\mb U_{\perp} = \bigcap_{k=1}^K \bm U_k^{\star \perp} \in \mathcal{O}^{n \times (n-Kd)}$, representing the directions orthogonal to all class subspaces. Since $\mb U_{\perp}$ is orthogonal to each $\bm{U}_l^{\star}$, the third term vanishes. Combining the remaining terms, we obtain:
    \begin{align*}
        \E[\|\hat{\bm x}_{\textit{approx}}^{\star}(\bm x_t, t)\|^2] &= \left( \frac{\hat{w}_k^{\star}}{1 + \sigma_t^2} + \frac{(K-1)\delta^2\hat{w}_l^{\star}}{\delta^2+\sigma_t^2}\right)^2 \left(d + \sigma_t^2 d \right)\\
        &+ \left( K-1 \right)\left( \frac{\hat{w}_l^{\star}}{1+\sigma_t^2} + \frac{\delta^2(\hat{w}_k^{\star} + (K-2)\hat{w}_l^{\star})}{\delta^2 + \sigma_t^2} \right)^2 \left( \delta^2 d + \sigma_t^2 d \right).
    \end{align*}
\end{proof}

\end{document}